\newcommand{\norm}[1]{\left\lVert #1 \right\rVert}
\newcommand{\subalign}[1]{%
  \vcenter{%
    \Let@ \restore@math@cr \default@tag
    \baselineskip\fontdimen10 \scriptfont\tw@
    \advance\baselineskip\fontdimen12 \scriptfont\tw@
    \lineskip\thr@@\fontdimen8 \scriptfont\thr@@
    \lineskiplimit\lineskip
    \ialign{\hfil$\m@th\scriptstyle##$&$\m@th\scriptstyle{}##$\hfil\crcr
      #1\crcr
    }%
  }%
}
\theoremstyle{plain}
\newtheorem{theorem}{Theorem}[section]
\newtheorem{lemma}[theorem]{Lemma}
\newtheorem{corollary}[theorem]{Corollary}
\theoremstyle{definition}
\newtheorem{definition}[theorem]{Definition}
\theoremstyle{remark}
\DeclareMathOperator*{\argmax}{arg\,max}
\title{Your diffusion model secretly knows the dimension of the data manifold}
\author{
   Jan Stanczuk $ ^\ast$ \\
   University of Cambridge \\
   Cambridge CB3 0WA \\
   \texttt{js2164@cam.ac.uk}
   \And
   Georgios Batzolis \thanks{Equal contibution.} \\
   University of Cambridge\\
   Cambridge CB3 0WA \\
   \texttt{gb511@cam.ac.uk} 
   \AND
   Teo Deveney \\
   University of Bath\\
   Bath BA2 7AY \\
   \texttt{T.J.Deveney@bath.ac.uk}
   \And
   Carola-Bibiane Sch\"{o}nlieb \\
   University of Cambridge \\
   Cambridge CB3 0WA \\
   \texttt{cbs31@cam.ac.uk}
}
\newcommand{\algorithmfootnote}[2][\footnotesize]{%
  \let\old@algocf@finish\@algocf@finish
  \def\@algocf@finish{\old@algocf@finish
    \leavevmode\rlap{\begin{minipage}{\linewidth}
    #1#2
    \end{minipage}}%
  }%
}
\begin{document}

\maketitle
\begin{abstract}
In this work, we propose a novel framework for estimating the dimension of the data manifold using a trained diffusion model. A diffusion model approximates the score function i.e. the gradient of the log density of a noise-corrupted version of the target distribution for varying levels of corruption. We prove that, if the data concentrates around a manifold embedded in the high-dimensional ambient space, then as the level of corruption decreases, the score function points towards the manifold, as this direction becomes the direction of maximal likelihood increase. Therefore, for small levels of corruption, the diffusion model provides us with access to an approximation of the normal bundle of the data manifold. This allows us to estimate the dimension of the tangent space, thus, the intrinsic dimension of the data manifold. To the best of our knowledge, our method is the first estimator of the data manifold dimension based on diffusion models and it outperforms well established statistical estimators in controlled experiments on both Euclidean and image data.

\end{abstract}

\section{Introduction}
\label{sec:introduction}

Many modern real-world datasets contain a large number of variables, often exceeding the number of observations. This poses a major challenge in modelling them, due to the \textit{curse of dimensionality}. However, despite this complexity, such data often concentrates around a lower-dimensional manifold, a concept known as the \textit{manifold hypothesis} \cite{manifold_hypothesis}. This hypothesis has guided the development of modern high-dimensional data modeling techniques like GANs \cite{gan} and VAEs \cite{vae}, as well as dimensionality reduction methods such as PCA \cite{pca} and t-SNE \cite{tsne}. These approaches require knowledge of the data's intrinsic dimension, a critical hyperparameter. 


In this work, we introduce a novel method, which estimates the dimension of the data manifold by leveraging information contained in a trained diffusion model.


Diffusion models \cite{diffusion_models, ddpm}, are a new class of deep generative models capable of capturing complex high-dimensional distributions without requiring prior knowledge of the data's intrinsic dimension. We show that, despite their lack of \textit{explicit} dependence on it, these models \textit{implicitly} estimate the data's intrinsic dimension.


As shown in \cite{song2020score, ddpm}, diffusion models perform score matching \cite{score_matching} and, therefore, contain the information about the gradient of the log-density of the data distribution. Our approach capitalizes on the insight that near the data manifold, the gradient of the log-density is orthogonal to the manifold itself. This key observation serves as a tool for deducing the manifold's dimension.

We evaluate the performance of the method on synthetic Euclidean and image datasets, where the dimension of the data manifold is known \textit{a priori}. Moreover, we apply our method to the MNIST dataset \cite{mnist} (where the intrinsic dimension is unknown), and evaluate its performance using the reconstruction error of auto-encoders trained with different latent dimensions. 


\begin{figure}
\begin{minipage}{0.45\textwidth}
        \centering
    \includegraphics[width=0.99\textwidth]{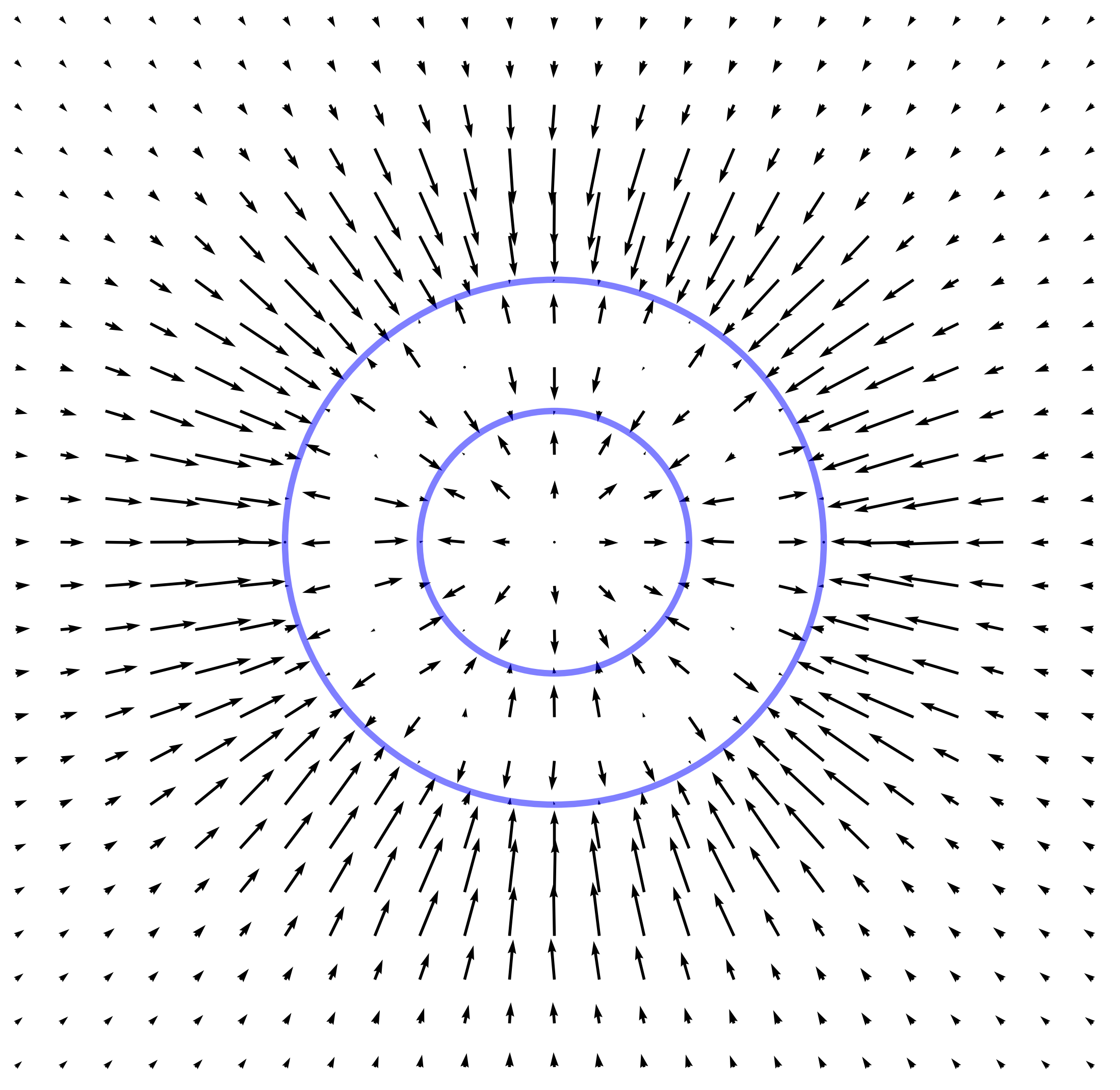}
    \caption{The data manifold (in blue) and the neural approximation of the score field $\nabla_\textbf{x} \ln p_{t_0}(\textbf{x})$ obtained from a diffusion model. Near the manifold the score field is perpendicular to the manifold surface.}
    \label{fig:score_field}
\end{minipage}
\hspace{5mm}
\begin{minipage}{0.45\textwidth}
       \centering
    \includegraphics[width=0.99\textwidth]{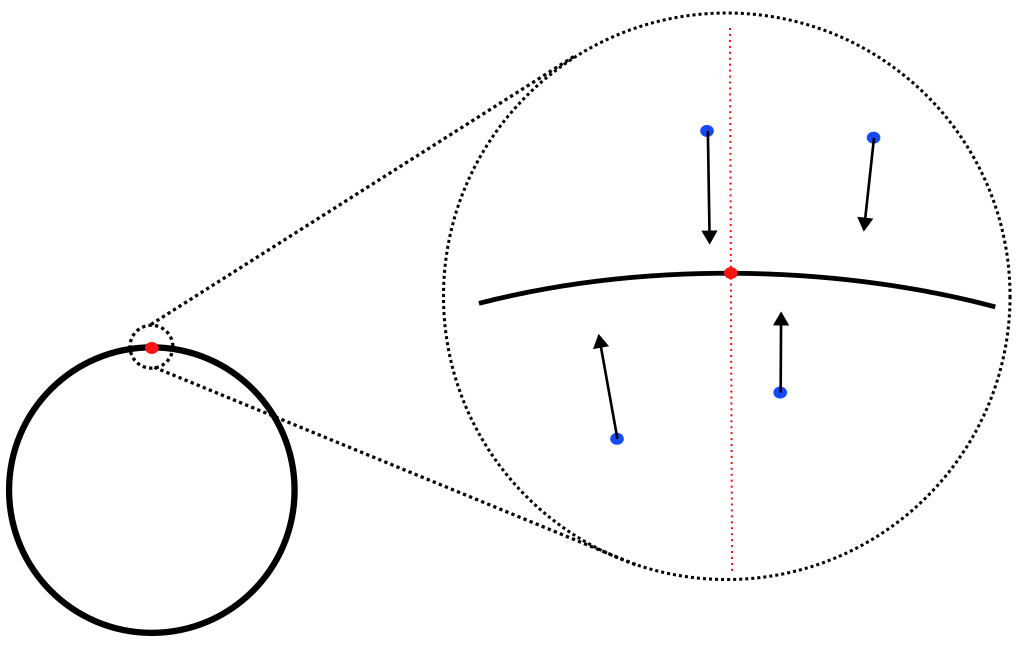}
    \caption{The red dot shows a point $\textbf{x}_0$ on the data manifold where we wish to estimate the dimension. We sample $K$ blue points $\textbf{x}_t^{(i)}$ in a close neighbourhood of the red point and evaluate the score field. The resulting vectors $s_\theta(\textbf{x}_\epsilon^{(i)}, \epsilon)$ will point in the normal direction. We put the vectors into a matrix and perform  SVD to detect the dimension of the normal space. The dimension of the manifold will be equal to the number of (almost) vanishing singular values.}
    \label{fig:zoom} 
\end{minipage}
\end{figure}
\section{Related Work}
\label{sec:related_work}
The problem of estimating the intrinsic dimensionality has been widely studied. The two main lines of research are PCA based and nearest neighbour based approaches. In an early work \cite{Karhunen-Loeve} the authors suggest an approach based on using local Karhunen–Loève expansion. In following years many PCA based approaches have been developed. Most notably, in \cite{auto_ppca} the author suggests an intrinsic dimensionality estimator based on the probabilistic PCA (PPCA) framework \cite{ppca}. In \cite{fan_local_pca} a local PCA method has been suggested. In \cite{pettis_nn_dim_estimator} authors suggested an estimator based on nearest neighbour information.  In \cite{dim_MLE} authors introduce a maximum likelihood (MLE)  procedure based on the distance to $m$ nearest neighbours. Their method has been further improved in the work of \cite{haro_mle}. The MLE method has been recently applied by \cite{pope2021intrinsic} in the estimation of the intrinsic dimensionality of modern image datasets such as MNIST \cite{mnist}, CIFAR \cite{cifar} and ImageNet \cite{imagenet}. Other works explored geometric approaches using fractal-based methods \cite{fractal_dim} or packing numbers \cite{packing_number}.

To the best of our knowledge, we are the first to present a method that approximates the intrinsic dimension of the data manifold using diffusion models.

\section{Background: Score-based diffusion models}
\label{sec:background}

In \cite{song2020score} score-based  \cite{score_matching} and diffusion-based \cite{diffusion_models, ddpm} generative models were unified into a single continuous-time score-based framework where the diffusion is represented by a stochastic differential equation, which can be reversed to generate samples. Diffusion models are trained by approximating the score function $\nabla_{\textbf{x}_t}{\ln{p_t(\textbf{x}_t)}}$ with a neural network $s_\theta(\textbf{x}_t,t)$. Additional details on training and sampling from diffusion models are described in Appendix \ref{appendix:background}.




\section{Proposed Method for Estimation of Intrinsic Dimension}
\label{sec:method}
\begin{wrapfigure}{R}{0.55\textwidth}
\begin{minipage}{0.55\textwidth}
    \begin{algorithm}[H]
    \caption{Estimate the intrinsic dimension at $\textbf{x}_0$}
    \label{alg:pseudo_code}
    \algorithmfootnote{where $(s_i)_{i=1}^d$, $(v_i)_{i=1}^d$, $(w_i)_{i=1}^d$ denote singular values, left and right singular vectors respectively.}
    \SetAlgoLined
    \DontPrintSemicolon
    
    \KwIn{$s_\theta$ - trained diffusion model (score), \\
        \hspace{10mm} $t_0$ - sampling time, \\
        \hspace{10mm} $K$ - number of score vectors.}
        \ \;
     Sample $\textbf{x}_0 \sim p_0(\textbf{x})$ from the data set \;
     $d \gets \text{dim}(\textbf{x}_0)$ \;
     $S \gets $ empty matrix \;
     \For{$i = 1, ..., K$}{
        Sample $\textbf{x}_{t_0}^{(i)} \sim \mathcal{N}(\textbf{x}_{t_0} | \textbf{x}_0, \sigma^2_{t_0}\textbf{I}) $ \;
        Append $s_{\theta}(\textbf{x}_{t_0}^{(i)}, t_0)$ as a new column to $S$ \;
    }
    $(s_i)_{i=1}^d$, $(\textbf{v}_i)_{i=1}^d$, $(\textbf{w}_i)_{i=1}^d$ $\gets \text{SVD}(S)$  \;
    $\hat{k}(\textbf{x}_0) \gets d - \argmax_{i=1,..,d-1} (s_i - s_{i+1}) $ \;
    \KwOut{ $\hat{k}(\textbf{x}_0)$ }
\end{algorithm}
\end{minipage}
\end{wrapfigure}

Consider a dataset $D=\{\textbf{x}^{(i)}\}_{i=0}^N\sim p_0(\textbf{x})$ which consists of $N$ independent $d$-dimensional vectors $\textbf{x}^{(i)} \in \mathbb{R}^d$ drawn from distribution $p_0(\textbf{x})$. The distribution $p_0(\textbf{x})$ is supported on a $k$-dimensional manifold $\mathcal{M}$, which is embedded in a space of ambient dimension $d$. Our goal is to infer the dimension $k$ of the manifold $\mathcal{M}$ from $D$.

We perturb the data according to the variance exploding SDE \(d\textbf{x}_t = g(t)d\textbf{w}_t\)  \cite{song2020score} and train a neural network $s_{\theta}(\textbf{x}_t,t)$ to approximate the score function of the noise perturbed target distribution, i.e. $\nabla_{\textbf{x}_t}{\ln{p_t(\textbf{x}_t)}}$ for a range of levels of perturbation indexed by diffusion time $t$. We train the model using the weighted denoising score matching objective with likelihood weighting, see \cite{song2021maximum}. More details about the training of the diffusion model can be found in Appendix \ref{sec:hparams}. 

Consider a datapoint $\textbf{x}_0$ on a manifold $\mathcal{M}$ and perturb it into the ambient space with the transition kernel $p_{t_0}(\textbf{x}_{t_0} | \textbf{x}_0) = \mathcal{N}(\textbf{x}_{t_0} | \textbf{x}_0, \sigma^2_{t_0}\textbf{I})$\footnote{The transition kernels of the variance exploding SDE have this structure, where $\sigma_t$ is an increasing function determined by $g(t)$ with $\sigma_t \xrightarrow[t \to 0]{} 0$.} of the forward process for a small time ${t_0}$. As shown in the next section, at $\textbf{x}_{t_0}$ the score vector $s_{\theta}(\textbf{x}_{t_0},{t_0})$ will point towards its orthogonal projection onto $\mathcal{M}$, making it almost orthogonal to $T_{\textbf{x}_0}\mathcal{M}$ (the tangent space at $\textbf{x}_0$). This score vector will be almost entirely contained in $\mathcal{N}_{\textbf{x}_0}\mathcal{M}$ (the normal space at $\textbf{x}_0$) as a consequence of this, and so the rank of a matrix containing score vectors evaluated at $K$ points diffused from $\textbf{x}_0$ should not exceed the dimension of $\mathcal{N}_{\textbf{x}_0}\mathcal{M}$. With enough samples, the rank of $S=[s_{\theta}(\textbf{x}_{t_0}^{(1)},t_0),...,s_{\theta}(\textbf{x}_{t_0}^{(K)},t_0)]$ estimates the normal space dimension, from which the manifold dimension can be estimated. In our method, we sample $K=4d$ diffused points at time $t_0=\epsilon$ and calculate the SVD of $S$, finally we estimate the intrinsic dimension $\hat{k}(\textbf{x}_0)$ as the number of vanishing singular values.

The resulting spectrum shows a significant drop exactly or very close to the dimension of the normal space. The remaining non-zero but much smaller singular values correspond to the tangential component of the score vector. This behaviour is expected as the score vector will unavoidably have a very small tangential component, for reasons explained in the following sections. The choice of the cut-off point $\hat{k}(\textbf{x}_0)$ is usually very clear visually, but can be automated by choosing the point of largest drop in the spectrum:
\begin{gather*}
  \hat{k}(\textbf{x}_0) = d - \argmax_{i=1,..,d-1} s_i - s_{i+1} 
\end{gather*}


When selecting $\textbf{x}_0$, we ideally want a point with high score approximation quality, minimal tangential component, and low manifold curvature. However, since these factors are uncontrollable, we randomly choose multiple $\textbf{x}_0^{(j)}$ values and plot a spectrum for each. For simple distributions, the score spectra look similar, with drops at accurate values. For more complex distributions, the drop location varies with $\textbf{x}_0^{(j)}$ choice. We find that the maximum estimated $\hat{k}$ gives the best estimate. Theoretical understanding of the method supports this, as discussed in later sections.


\section{Theoretical Analysis}
\label{sec:theory}

Here we provide a theoretical justification of our approach; showing that given a collection of points $\textbf{x}_i \in \mathbb{R}^d$ sufficiently close\footnote{For `sufficiently close' to be defined in Appendix \ref{appendix:proof}} to the manifold with orthogonal projection $\pi(\textbf{x}_i)\in \mathcal{M}$, in the small $t$ limit the space spanned by score vectors $\nabla_{\textbf{x}} \ln p_t(\textbf{x}_i)$ converges to the normal space at $\pi(\textbf{x}_i)$. To build intuition, consider a uniform data density on the manifold surface $\mathcal{M}$. The gradient along this density is zero, indicating that for $\textbf{x}$ close to  $\mathcal{M}$ tangential components of the score $\nabla_\textbf{x} \ln p_t(\textbf{x})$ will also be approximately zero and the score  will be mostly contained in the normal bundle $\mathcal{NM}$. If the density is non-uniform on the manifold surface however, the score will have a tangential component. Fortunately, for sufficiently small $t$ the change in log-density from moving orthogonally towards the manifold  dominates the change from moving tangentially alongside the manifold. This results in the tangential component becoming negligible, and the score still being approximately contained in $\mathcal{NM}$.

Specifically, we show in the following theorem that for any point $\textbf{x}$ sufficiently close to the data manifold and $t\to 0$, the score $\nabla_\textbf{x} \ln p_t(\textbf{x})$ is points directly at the orthogonal projection $\pi(\textbf{x})$.

\begin{theorem}
\label{thm:score_orthogonal}
Suppose that the the support of the data distribution $P_0$ is contained in a compact embedded sub-manifold $\mathcal{M} \subseteq \mathbb{R}^d$ and let $P_t$ be the distribution of samples from $P_0$ diffused for time $t$. Then, under mild assumptions, for any point $\textup{\textbf{x}} \in \mathbb{R}^d$ sufficiently close to $\mathcal{M}$ with orthogonal projection on $\mathcal{M}$, given by $\pi(\textup{\textbf{x}})$, and $\textup{\textbf{n}} = (\pi(\textup{\textbf{x}}) - \textup{\textbf{x}}) / \norm{\pi(\textup{\textbf{x}}) - \textup{\textbf{x}}}$, we have: 
\begin{gather*}
 \text{S}_{\cos} (\textup{\textbf{n}}, \nabla_\textbf{x} \ln p_t(\textup{\textbf{x}})) \xrightarrow[t \to 0]{} 1
\end{gather*}
where $\text{S}_{\cos}$ denotes the cosine similarity. In other words, for sufficiently small $t$ the score $\nabla_\textbf{x} \ln p_t(\textup{\textbf{x}})$ points directly at the projection of $\textup{\textbf{x}}$ on the manifold. 
\end{theorem}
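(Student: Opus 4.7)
The plan is to express the score as a posterior expectation over the manifold and then apply Laplace's method in the $t\to 0$ limit. First I would differentiate the convolutional form
\[
p_t(\textbf{x}) = \int_{\mathcal{M}} p_0(\textbf{y})\, (2\pi \sigma_t^2)^{-d/2}\exp\!\bigl(-\norm{\textbf{y}-\textbf{x}}^2/(2\sigma_t^2)\bigr)\, d\mathcal{M}(\textbf{y})
\]
under the integral sign (Tweedie's identity) to obtain
\[
\sigma_t^2\, \nabla_\textbf{x} \ln p_t(\textbf{x}) \;=\; \mathbb{E}_{p_t(\textbf{y}\mid\textbf{x})}[\,\textbf{y}-\textbf{x}\,],
\]
where the posterior $p_t(\textbf{y}\mid\textbf{x}) \propto p_0(\textbf{y})\exp(-\norm{\textbf{y}-\textbf{x}}^2/(2\sigma_t^2))$ is a probability measure supported on $\mathcal{M}$. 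Writing $\textbf{y}-\textbf{x} = (\pi(\textbf{x})-\textbf{x}) + (\textbf{y}-\pi(\textbf{x}))$ inside the expectation isolates a $t$-independent vector of length $\norm{\pi(\textbf{x})-\textbf{x}}$ in direction $\textbf{n}$, plus a residual $\mathbb{E}_{p_t(\textbf{y}\mid\textbf{x})}[\,\textbf{y}-\pi(\textbf{x})\,]$ that I must show is $o(1)$ compared to $\norm{\pi(\textbf{x})-\textbf{x}}$ as $t\to 0$.

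Next I would attack the residual via a Laplace/saddle-point argument. The ``sufficiently close'' hypothesis should be interpreted as $\textbf{x}$ lying within the reach of $\mathcal{M}$, which ensures that $\pi(\textbf{x})$ is the unique global minimizer on $\mathcal{M}$ of $\textbf{y}\mapsto\norm{\textbf{y}-\textbf{x}}^2$, and that this minimum is non-degenerate because $\textbf{x}-\pi(\textbf{x})$ is orthogonal to $T_{\pi(\textbf{x})}\mathcal{M}$ and the Hessian restricted to that tangent space is positive definite. Parametrising a neighbourhood of $\pi(\textbf{x})$ on $\mathcal{M}$ by tangential coordinates $\textbf{u}$ yields
\[
\norm{\textbf{y}(\textbf{u})-\textbf{x}}^2 \;=\; \norm{\pi(\textbf{x})-\textbf{x}}^2 + \textbf{u}^\top H\,\textbf{u} + O(\norm{\textbf{u}}^3),
\]
with $H \succ 0$, so under the rescaling $\textbf{u}=\sigma_t\textbf{v}$ the posterior converges to $\mathcal{N}(\textbf{0}, H^{-1})$ on $T_{\pi(\textbf{x})}\mathcal{M}$; contributions from outside a fixed neighbourhood of $\pi(\textbf{x})$ are exponentially suppressed by compactness of $\mathcal{M}$ and the strict positivity of the gap to the global minimum. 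A standard Laplace expansion, accounting for the manifold volume form, the variation of $p_0$, and the second fundamental form, then gives $\mathbb{E}_{p_t(\textbf{y}\mid\textbf{x})}[\,\textbf{y}-\pi(\textbf{x})\,] = O(\sigma_t^2)$ in both tangential and normal directions (the tangential part comes from the interaction of the Gaussian with $\nabla\ln p_0$ along $\mathcal{M}$, the normal part from the curvature of the embedding).

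Putting the two pieces together gives
\[
\sigma_t^2\, \nabla_\textbf{x} \ln p_t(\textbf{x}) \;=\; \norm{\pi(\textbf{x})-\textbf{x}}\,\textbf{n} + O(\sigma_t^2),
\]
and since $\norm{\pi(\textbf{x})-\textbf{x}}$ is fixed and positive, the component of the score orthogonal to $\textbf{n}$ is $O(\sigma_t^2)$ while the component along $\textbf{n}$ remains bounded below. This immediately yields $\text{S}_{\cos}(\textbf{n}, \nabla_\textbf{x} \ln p_t(\textbf{x})) = 1 - O(\sigma_t^4) \to 1$. The main obstacle I anticipate is not the Laplace computation itself but pinning down exactly which ``mild assumptions'' on $p_0$ and $\mathcal{M}$ make the argument rigorous: at minimum $\mathcal{M}$ should be $C^2$ with positive reach, $p_0$ continuous and bounded away from zero on $\mathcal{M}$ (otherwise a vanishing density could tilt the posterior in uncontrolled ways), and one needs uniform control of the Jacobian of the exponential map at $\pi(\textbf{x})$ in order to absorb the cubic remainder cleanly. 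Once these regularity conditions are fixed, the residual bound and hence the cosine-similarity limit follow.
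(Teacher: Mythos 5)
Your proposal takes the same fundamental route as the paper: write the score as an integral over $\mathcal{M}$ weighted by the Gaussian kernel, observe that as $t\to 0$ the weight concentrates at $\pi(\textbf{x})$, and deduce that the direction of the score aligns with $\textbf{n}$. The technical implementations diverge, though, in a way worth noting. You phrase the score as a posterior mean via Tweedie's identity, parametrise a neighbourhood of $\pi(\textbf{x})$ by tangential coordinates, rescale by $\sigma_t$, and run a full Laplace/saddle-point expansion to get quantitative $O(\sigma_t^2)$ bounds on the off-axis component (and hence $1 - \text{S}_{\cos} = O(\sigma_t^4)$). The paper instead proves the master Theorem \ref{thm:master_thm} that $\boldsymbol{\nu}^T\nabla\ln p_t / \textbf{n}^T\nabla\ln p_t \to 0$ for every $\boldsymbol{\nu}\perp\textbf{n}$ by a purely qualitative $\varepsilon$-argument: it builds a neighbourhood $E$ of $\pi(\textbf{x})$ on which $\textbf{n}^T(\textbf{y}-\textbf{x})$ is bounded below and $|\boldsymbol{\nu}^T(\textbf{y}-\textbf{x})| < \varepsilon$, proves Gaussian concentration on $E$ via the Morse lemma applied to the squared distance function $f_\textbf{x}$ (using the focal-point theorem to show $\pi(\textbf{x})$ is a non-degenerate index-zero critical point), and then bounds the split integrals with the mean value theorem. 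Your approach buys explicit convergence rates; the paper's approach buys a lighter regularity footprint, since it never needs the uniform Jacobian control of the exponential map or the cubic remainder absorption that you correctly flag as the delicate point. The Morse-theoretic framing also handles the ``strict gap outside a neighbourhood'' argument cleanly via compactness, which you invoke but do not spell out. Both routes identify essentially the same ``mild assumptions'': tubular-neighbourhood/positive-reach hypothesis for unique projection, and $p_0$ smooth and bounded away from zero on $\mathcal{M}$.
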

This theorem leads to the conclusion that this score is contained within the normal space of the manifold, as we show with the following corollary:
\begin{corollary}
\label{cor:score_ratio}
The ratio of the projection of the score $\nabla_\textbf{x} \ln p_t(\textup{\textbf{x}})$ on the tangent space of the data manifold $T_{\pi(\textup{\textbf{x}})}\mathcal{M}$ to the projection on the normal space $\mathcal{N}_{\pi(\textup{\textbf{x}})}\mathcal{M}$ approaches zero as $t$ approaches zero, i.e.
\begin{gather*}
\frac{\norm{\mathbf{T}\nabla_\textbf{x} \ln p_t(\textup{\textbf{x}})}}{\norm{\mathbf{N}\nabla_x \ln p_t(\textup{\textbf{x}})}} \to 0, \text{ as } t \to 0. 
\end{gather*}
where $\mathbf{N}$ and $\mathbf{T}$ are projection matrices on $\mathcal{N}_{\pi(\textup{\textbf{x}})}\mathcal{M}$  and $T_{\pi(\textup{\textbf{x}})}\mathcal{M}$ respectively. Therefore for sufficiently small $t$ the score $\nabla_\textup{\textbf{x}} \ln p_t(\textup{\textbf{x}})$ is (effectively) contained in the normal space  $\mathcal{N}_{\pi(\textup{\textbf{x}})}\mathcal{M}$.
\end{corollary}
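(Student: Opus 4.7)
The plan is to deduce the corollary directly from Theorem \ref{thm:score_orthogonal} by a short computation with orthogonal decompositions, treating the cosine similarity bound as a quantitative constraint on the ratio of the two projection norms.

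First, I would observe that the unit vector $\textbf{n} = (\pi(\textbf{x}) - \textbf{x})/\norm{\pi(\textbf{x}) - \textbf{x}}$ lies in the normal space $\mathcal{N}_{\pi(\textbf{x})}\mathcal{M}$. This is immediate from the defining property of the orthogonal projection onto a submanifold: the displacement $\pi(\textbf{x}) - \textbf{x}$ is perpendicular to $T_{\pi(\textbf{x})}\mathcal{M}$ (at least when $\textbf{x}$ is inside the tubular neighbourhood in which $\pi$ is well defined, which is the regime where Theorem \ref{thm:score_orthogonal} applies). Consequently $\mathbf{N}\textbf{n} = \textbf{n}$ and $\mathbf{T}\textbf{n} = \mathbf{0}$.

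Next, abbreviate $\textbf{s}_t := \nabla_\textbf{x} \ln p_t(\textbf{x})$ and decompose it orthogonally as $\textbf{s}_t = \mathbf{N}\textbf{s}_t + \mathbf{T}\textbf{s}_t$, so that $\norm{\textbf{s}_t}^2 = \norm{\mathbf{N}\textbf{s}_t}^2 + \norm{\mathbf{T}\textbf{s}_t}^2$. Because $\textbf{n}$ is orthogonal to every vector in $T_{\pi(\textbf{x})}\mathcal{M}$, the inner product collapses to $\langle \textbf{n}, \textbf{s}_t \rangle = \langle \textbf{n}, \mathbf{N}\textbf{s}_t \rangle$, and Cauchy--Schwarz yields $\langle \textbf{n}, \textbf{s}_t \rangle \le \norm{\mathbf{N}\textbf{s}_t}$. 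Combining these facts,
\begin{equation*}
\text{S}_{\cos}(\textbf{n}, \textbf{s}_t) \;=\; \frac{\langle \textbf{n}, \textbf{s}_t \rangle}{\norm{\textbf{s}_t}} \;\le\; \frac{\norm{\mathbf{N}\textbf{s}_t}}{\sqrt{\norm{\mathbf{N}\textbf{s}_t}^2 + \norm{\mathbf{T}\textbf{s}_t}^2}} \;=\; \frac{1}{\sqrt{1 + r_t^2}},
\end{equation*}
where $r_t := \norm{\mathbf{T}\textbf{s}_t}/\norm{\mathbf{N}\textbf{s}_t}$ is exactly the ratio we wish to control (assuming $\norm{\mathbf{N}\textbf{s}_t} > 0$, which is guaranteed for small $t$ since otherwise the cosine similarity could not approach $1$).

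Finally, invoking Theorem \ref{thm:score_orthogonal}, the left hand side tends to $1$ as $t \to 0$, which forces $1/\sqrt{1+r_t^2} \to 1$, and therefore $r_t \to 0$. This is exactly the claim. I do not anticipate any real obstacle here; the only subtlety is the implicit assumption $\norm{\mathbf{N}\textbf{s}_t} \neq 0$, but this follows a posteriori from Theorem \ref{thm:score_orthogonal} because a vanishing normal component would make the cosine similarity identically zero, contradicting the convergence to $1$. All the hard analytic work (the small-$t$ asymptotics of $p_t$ and the geometry of the tubular neighbourhood) has already been done in proving the theorem, and the corollary is essentially a one-line consequence of orthogonal decomposition and Cauchy--Schwarz.
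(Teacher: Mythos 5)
Your proof is correct, but it takes a genuinely different route from the paper's. The paper does not deduce Corollary~\ref{cor:score_ratio} from the cosine-similarity statement in Theorem~\ref{thm:score_orthogonal}; instead it derives both directly from the stronger, component-wise Theorem~\ref{thm:master_thm}, which gives $\boldsymbol{\nu}^T \nabla_\textbf{x} \ln p_t(\textbf{x}) / \textbf{n}^T \nabla_\textbf{x} \ln p_t(\textbf{x}) \to 0$ for every unit $\boldsymbol{\nu} \perp \textbf{n}$. The paper's proof fixes an orthonormal basis $(\boldsymbol{\tau}_1,\dots,\boldsymbol{\tau}_k)$ of $T_{\pi(\textbf{x})}\mathcal{M}$ and extends $\textbf{n}$ to a basis $(\textbf{n},\boldsymbol{\eta}_1,\dots,\boldsymbol{\eta}_{d-k-1})$ of $\mathcal{N}_{\pi(\textbf{x})}\mathcal{M}$, writes the ratio $\norm{\mathbf{T}\textbf{s}_t}/\norm{\mathbf{N}\textbf{s}_t}$ as a square root of a quotient of sums of squared component ratios, and lets each term vanish via Theorem~\ref{thm:master_thm}. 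You instead treat Theorem~\ref{thm:score_orthogonal} as a black box, observe that $\textbf{n}\in\mathcal{N}_{\pi(\textbf{x})}\mathcal{M}$, and use the orthogonal decomposition together with Cauchy--Schwarz to get the bound $\text{S}_{\cos}(\textbf{n},\textbf{s}_t) \le (1+r_t^2)^{-1/2} \le 1$, then squeeze $r_t\to 0$. Your version is attractive precisely because it makes the corollary a true corollary of the \emph{stated} theorem rather than of the internal master lemma, and it requires only elementary facts (orthogonality of $\textbf{n}$ to the tangent space, Pythagoras, Cauchy--Schwarz). The paper's version is slightly more informative since it exhibits the ratio explicitly as a function of the individual directional ratios, which makes it clear the decay is uniform over tangent directions. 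Both are sound; your handling of the degenerate case $\norm{\mathbf{N}\textbf{s}_t}=0$ (ruled out a posteriori for small $t$) is the right way to close the minor loophole.
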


\textit{Proof}. The full proof of the theorem and corollary can be found in the Appendix \ref{appendix:proof}.

In practice we choose small $t>0$, and for each chosen $\textbf{x}_0\in \mathcal{M}$ we sample points $\textbf{x}^{(i)}_t$ around it from $p_t(\textbf{x}_t | \textbf{x}_0) = \mathcal{N}(\textbf{x}_t | \textbf{x}_0, \sigma^2_t\textbf{I})$. With over $99\%$ probability $\textbf{x}^{(i)}_t \in B(\textbf{x}_0, 3\sigma_t)$, and so as we decrease $t$ most of our $\textbf{x}_t$ will become very close to $\textbf{x}_0$. For $B(\textbf{x}_0, 3\sigma_t)$ sufficiently small, the effect of curvature of $\mathcal{M}$ becomes negligible inside $B(\textbf{x}_0, 3\sigma_t)$, and so the normal spaces $\mathcal{N}_{\pi(\textbf{x}_t^{(i)})}\mathcal{M}$ will all be approximately equal to $\mathcal{N}_{\textbf{x}_0}\mathcal{M}$. Under this assumption, we can outline the practical implications of these theoretical results. Assuming $t>0$ small and a trained score approximation $s_\theta(\textbf{x}, t) \approx \nabla_\textbf{x} \ln p_t(\textbf{x})$, the score matrix $S=[s_{\theta}(\textbf{x}_t^{(1)},t),...,s_{\theta}(\textbf{x}_t^{(4d)},t)]$ has the following properties:
\begin{enumerate}
    \item The columns of $S$ are approximately contained in the normal space $\mathcal{N}_{\textbf{x}_0}\mathcal{M}$
    \item The columns of $S$ approximately span the normal space $\mathcal{N}_{\textbf{x}_0}\mathcal{M}$
    \item The singular values of $S$ corresponding to singular vectors in the normal space are large relative to those corresponding to tangent singular vectors 
\end{enumerate}
The first point is a direct consequence of Corollary \ref{cor:score_ratio}. For the second point, denote $n_\textbf{x}:=\frac{\pi(\textbf{x})-\textbf{x}}{\norm{\pi(\textbf{x})-\textbf{x}}}$, and assume that $t$ is sufficiently small such that $\mathcal{N}_{\pi(\textbf{x}_t^{(i)})}\mathcal{M} \approx \mathcal{N}_{\textbf{x}_0}\mathcal{M}$. Then locally, the vectors $\left\{n_{\textbf{x}_t^{(1)}},\dots,n_{\textbf{x}_t^{(K)}} \right\}$ are independent Gaussian perturbations from a linear subspace. With probability one this set contains $N_d = \text{dim}(\mathcal{N}_{\textbf{x}_0}\mathcal{M})$ linearly independent vectors spanning $\mathcal{N}_{\textbf{x}_0}\mathcal{M}$, so by Theorem \ref{thm:score_orthogonal} the score vectors  $\left\{ \nabla_{\textbf{x}} \ln p_t\big(\textbf{x}_t^{(1)}\big),\dots, \nabla_{\textbf{x}} \ln p_t\big(\textbf{x}_t^{(K)}\big) \right\}$ must therefore also span $\mathcal{N}_{\textbf{x}_0}\mathcal{M}$. For the third point note that if the columns of $S$ span $\mathcal{N}_{\textbf{x}_0}\mathcal{M}$, then $S$ has rank $N_d$, and its SVD yields singular values $s_i>0$ for $i\leq N_d$ corresponding to singular vectors in $\mathcal{N}_{\textbf{x}_0}\mathcal{M}$, and $s_j=0$ for $j>N_d$. In practice we fix some small $t>0$ and so small components of $T_{\textbf{x}_0}\mathcal{M}$ are introduced to the score by factors such as non-uniform distribution of data samples on $\mathcal{M}$, therefore in applications the SVD of $S$ yields small singular values $s_j>0$ for $j>N_d$, however we still observe that $s_i \gg s_j$ where $i \leq N_d$.

\section{Limitations}
\label{sec:limitations}

In section \ref{sec:theory}, we established that given a perfect score approximation for sufficiently low $t$ our method produces the correct estimation of the dimension. However, in practice, our method may encounter two types of errors: \textit{approximation error} and \textit{geometric error}. The approximation error arises as a result of having an imperfect score approximation $s_\theta(\textbf{x}, t) \approx \nabla_\textbf{x} \ln p_t(\textbf{x})$. 
Geometric error arises if the selected sampling time $t$ isn't sufficiently small, potentially impacting our method's accuracy for two reasons. Firstly, it may result in an increased tangential component of the score vector. Secondly, if $\textbf{x}^{(i)}_t$ lies too distant from $\mathcal{M}$, the manifold's curvature may create a difference between normal spaces $\mathcal{N}_{\pi(\textbf{x}_t^{(i)})}\mathcal{M}$ across varying $i$.


We empirically assess our method's robustness to approximation error and find it robust to minor inaccuracies in score approximation. Additionally, we analyze our method's sensitivity to $p_0$ non-uniformity, which could induce a minor tangential score component for $t>0$. We discover that using the maximum $\hat{k}(\textbf{x}^{(j)}_0)$ allows our method to accommodate varying levels of non-uniformity over the manifold surface, displaying superior robustness compared to other non-linear estimators (MLE, Local PCA) without the need for reducing $t$. Details are in Appendix \ref{appendix:robust}


We note that Theorem \ref{thm:score_orthogonal} assumes the data distribution's support is exclusively within a manifold. Therefore, we empirically investigate the method's applicability when data is concentrated around, but not entirely within, a manifold. We discover that for a $k$-sphere, our method remains reliable as long as the data is closely concentrated around the manifold. Details are available in Appendix \ref{appendix:robust}.


\section{Experiments}
\label{sec:experiments}

We examine the effectiveness of our method on a multitude of manifold datasets, each embedded in a high-dimensional ambient space. The datasets fall into two categories: \textit{Euclidean datasets} consisting of points from manifolds embedded  in a high-dimensional Euclidean space and \textit{image datasets} consisting of synthetic manifolds of images.
In each case we know the intrinsic dimension of the manifold \textit{a priori}. Additionally, we apply our method to the MNIST dataset, where the true intrinsic dimension is unknown. We assess its performance via comparison with the reconstruction error of auto-encoders with various latent dimensions.  For each dataset we train a diffusion model, and then apply our method to estimate the intrinsic dimension of the data manifold. Details on hyperparameters and architectures used in our experiments can be found in Appendix \ref{sec:hparams}. 

We compare our method against established approaches to intrinsic dimensionality estimation: the nearest neighbour based maximum likelihood estimator (MLE) \cite{dim_MLE}, \cite{haro_mle}, Local PCA \cite{fan_local_pca} and Probabilistic PCA (PPCA) \cite{auto_ppca} \cite{ppca}. The details about the implementation of the benchmarks are in the Appendix \ref{sec:benchmark}.

Our method consistently yields the best estimate or close to the best estimate among considered approaches. In the following subsections we present a detailed discussion of each experiment. The results are summarised in Table \ref{tbl:results}.

\subsection{Experiments on Euclidean datasets}
\textbf{Embedded $k$-spheres:} We examined our method on $k$ dimensional spheres embedded in a $d=100$ dimensional ambient space via a random isometric embedding\footnote{To obtain a random isometric embedding we first generate the $k$ sphere in a $k+1$ dimensional small ambient space. Then we sample a random $d\times (k+1)$ Gaussian matrix $A$. We perform a QR decomposition $A = QR$. Finally we use the $d\times (k+1)$ isometry matrix $Q$ to embed the small $k+1$ dimensional space containing our manifold in the large $d$ dimensional ambient space.}. We consider two cases $k_1=10$ and $k_2=50$. The spectra of resulting score matrices are presented in Figure \ref{fig:ksphere}. Our method gives estimates of $\hat{k}_1 = 11$ and $\hat{k}_2 = 51$, which are very close to the true intrinsic dimensionality of the manifolds.

\begin{figure}
    \centering
    \includegraphics[width=.45\textwidth]{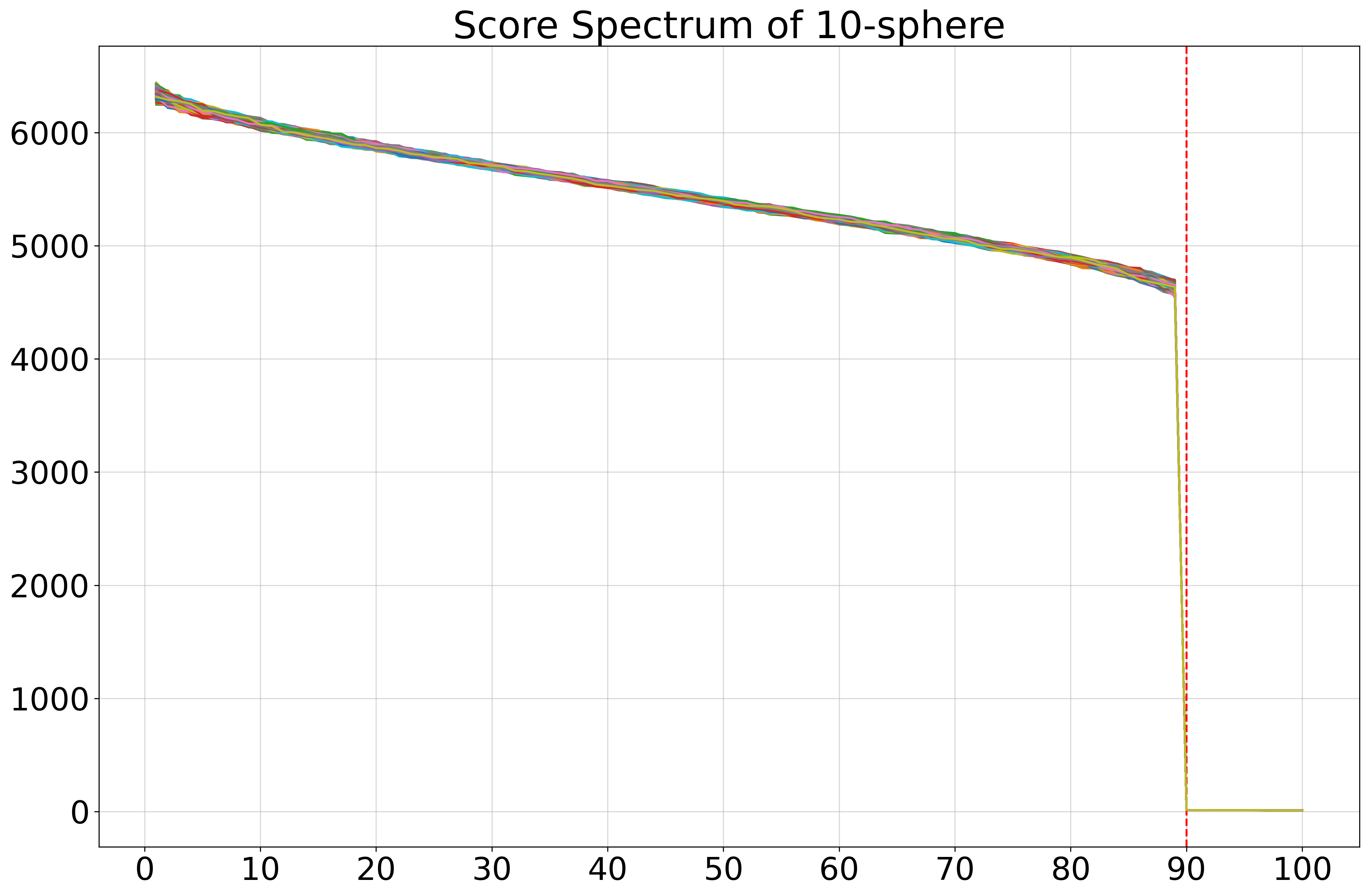}
    \hspace{5mm}
    \includegraphics[width=.45\textwidth]{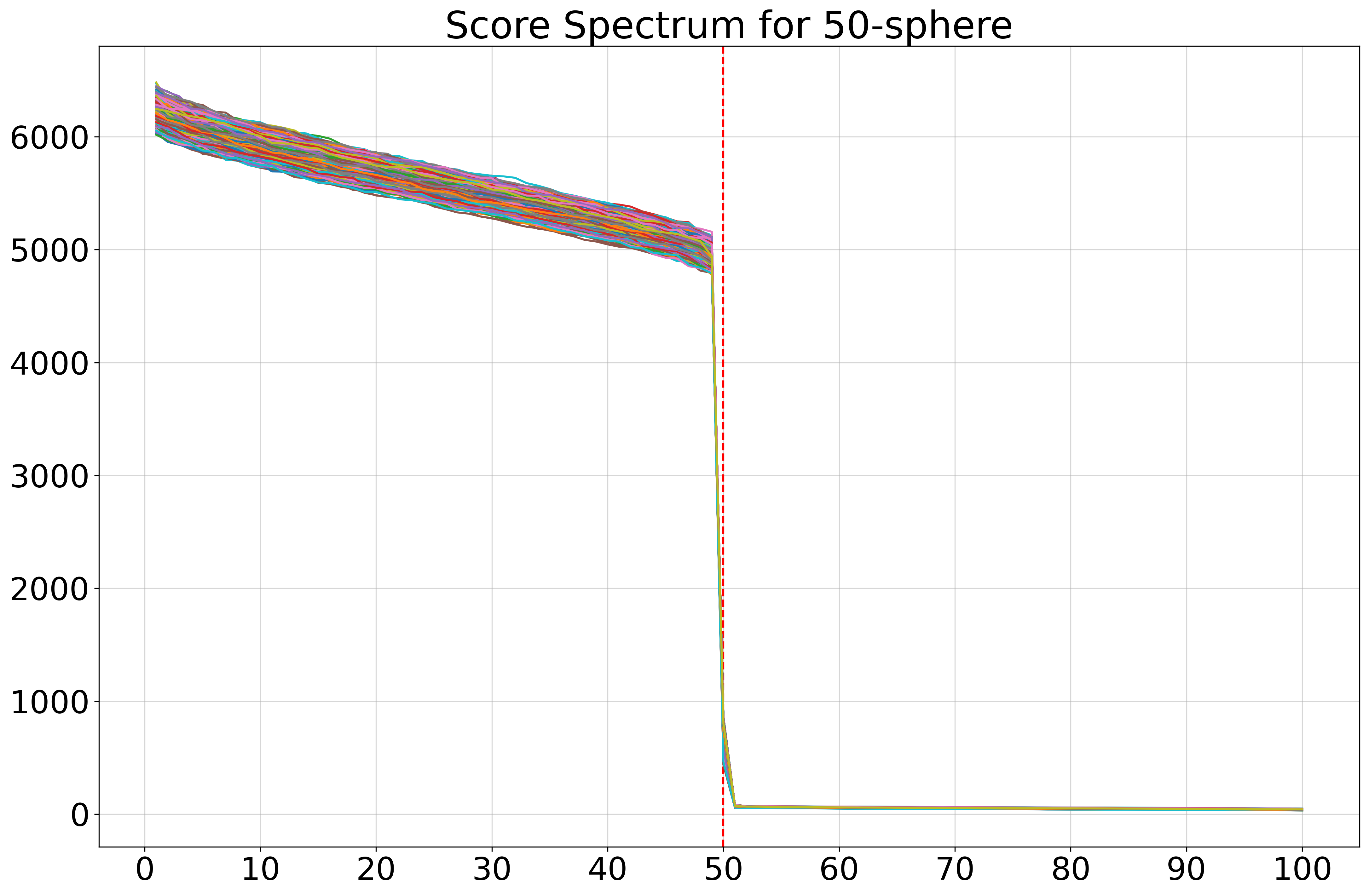}
    \caption{Singular values for the scores of $k$-sphere for $k=10, 50$. 
    In both cases around $k$ singular values almost vanish, clearly indicating the dimensionality of the manifold. Each line shows a score spectrum at different $\textbf{x}_0^{(j)}$.}
    \label{fig:ksphere}
\end{figure}




\textbf{Spaghetti line:} The intrinsic dimensionality of $k$-spheres could be well approximated by a linear dimensionality detection methods such as \cite{auto_ppca}. This is because these manifolds are contained in a low dimensional \textit{linear} subspace. In order to showcase the advantage of the \textit{non-linear} nature of our method we consider a \textit{spaghetti line} manifold. That is a curve $t \mapsto (\sin(t), \sin(2t), ..., \sin(100t))$ in a 100 dimensional ambient space, which is not contained in any low dimensional linear subspace (cf. Figure \ref{fig:line_samples} in Appendix \ref{appendix:additional_euclidean}). As expected, the linear method \cite{auto_ppca} greatly overstates the intrinsic dimension with a result of $\hat{k}_{\text{PPCA}}=98$. Yet, our approach utilizes the non-linear knowledge from the diffusion model to accurately predict an intrinsic dimensionality of one.  The score spectrum is presented in Figure \ref{fig:line} in Appendix \ref{appendix:additional_euclidean}.

\textbf{Union of $k$-spheres:} Due to the local nature of our method, we are able to generate an estimate $\hat{k}(\textbf{x}_0)$ of the intrinsic dimension around a given point $\textbf{x}_0$. This allows us to apply our approach to a union of manifolds and identify the dimension of each component. We illustrate this feature with the following experiment. We embed two spheres of different radii and dimensions in a 100 dimensional ambient space. First sphere has dimension $k_1 = 10$ and radius $r_1 = 1$ and the second sphere has $k_2=30$ and radius $r_2 = 0.25$\footnote{One can intuitively think of this manifold as a high-dimensional analog of a planet with a ring around it.}. We apply our method to this data using multiple $\textbf{x}_0^{(i)}$ randomly sampled from the dataset. We observe that our method produces a spectrum with two visible, separated drops. This indicates that the data comes from the union of manifolds of different dimensions. The resulting estimates are $\hat{k}_1 = 10$ and $\hat{k}_2=31$ depending on the chosen  $\textbf{x}_0^{(j)}$. The score spectra and the histogram of estimated dimensions are presented in Figures \ref{fig:unions_spectrum} and \ref{fig:union_dims} in Appendix \ref{appendix:additional_euclidean}.


\subsection{Experiments on image datasets}
\textbf{Synthetic image manifolds}: In this experiment, we investigated our method's ability to infer the dimension of synthetic image manifolds with known dimension. We crafted two synthetic image manifolds with controllable intrinsic dimension $k$: the "$k$ squares images manifold" and the "$k$ Gaussian blobs images manifold". The construction of these manifolds is detailed in Appendix \ref{Appendix: design of synthetic image manifolds}. We evaluated our method on $k=$10, 20, and 100 dimensional manifolds for both types, with the score spectra and histograms of estimated dimensions for numerous data points displayed in Figures \ref{fig:squares_spectrum} and \ref{fig:gaussians_spectrum} in Appendix \ref{appendix:additional_image}.

On the squares image manifold, only our method and PPCA consistently yielded accurate dimension estimates. PPCA's success on this dataset was anticipated since the manifold resides within a $k$-dimensional linear subspace.

On the Gaussian blobs image manifold, our method stood out as the sole technique to consistently deliver accurate dimension estimates.  The accuracy of our method was not compromised by the manifold's increased complexity, unlike other methods. However, the estimation for the 100-dimensional manifold introduced some uncertainty, as indicated by a more leveled histogram and a less abrupt spectrum collapse (c.f. Figure \ref{fig:gaussians_spectrum}). This is attributed to the manifold's increased complexity and the inherent challenges in optimization, resulting in greater geometrical and statistical errors.



\begin{wrapfigure}{l}{0.5\textwidth}
    \includegraphics[width=\linewidth]{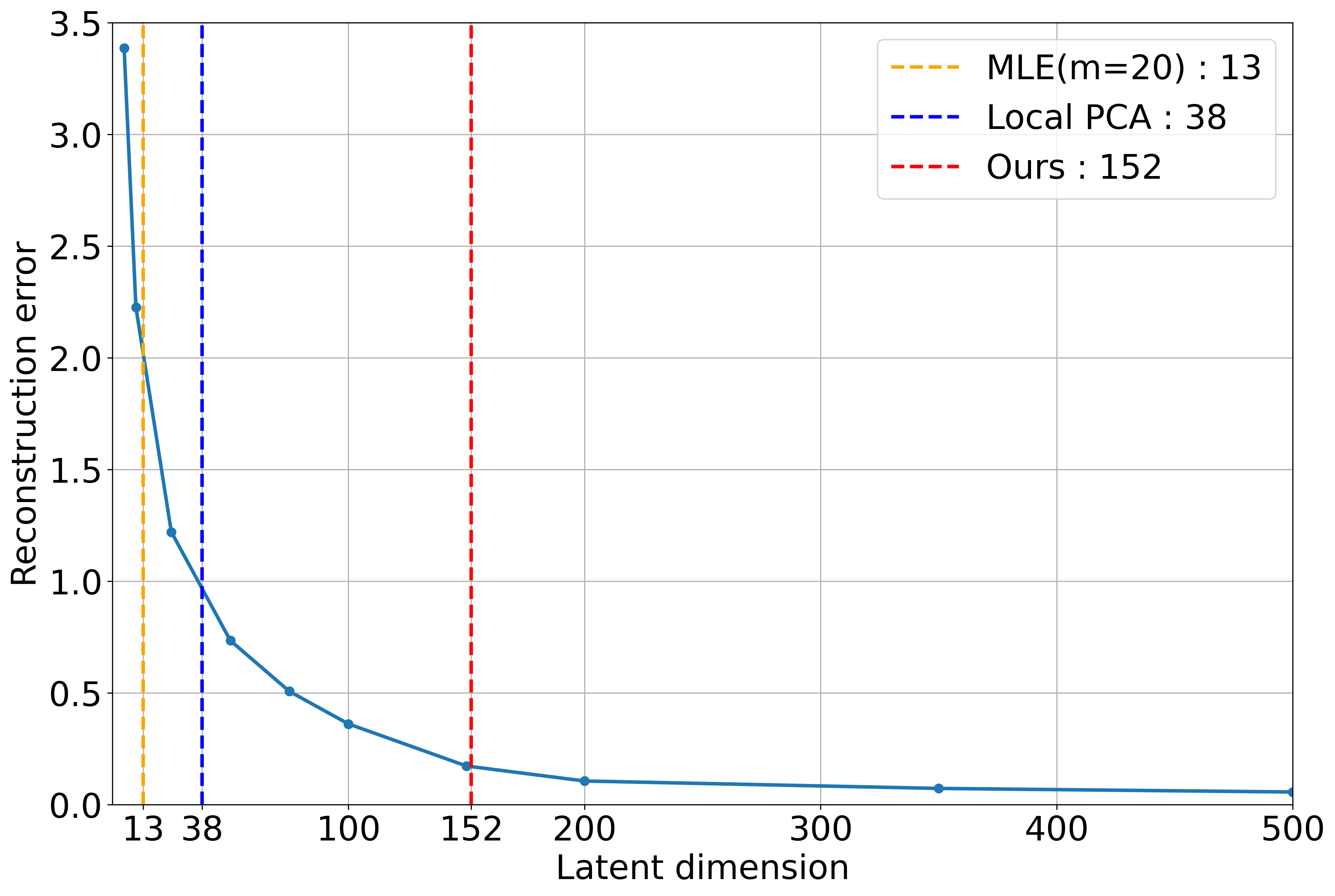}
    \caption{Auto-encoder reconstruction error on MNIST for different latent space dimensions. Vertical lines mark different estimations of intrinsic dimension.}
    \label{fig:mnist_autoencoder}
\end{wrapfigure}

\textbf{MNIST:} In our study, we additionally applied the proposed technique to estimate the intrinsic dimension of the well-known MNIST dataset - an image dataset with an as-of-yet-undetermined intrinsic dimension. Our findings suggest that there exists a variation in the intrinsic dimensions across different digits. For instance, the digit `1' yielded an estimated dimension of 66, whereas the digit `9' exhibited a significantly higher estimated dimension of 152. This discrepancy can be attributed to the increased geometric complexity inherent to the digit `9'. Figure \ref{fig:score_spectra_mnist} elucidates these observations by displaying the score spectra which yielded the maximum estimated dimensions for each digit. We present the estimated dimension for each digit in Table \ref{tab:estimated_mnist_dimensions} and the complete set of spectra for each digit in the Appendix \ref{sec:Additional Experimental Results for MNIST}. 

We validate our estimates by comparing them with the reconstruction error of auto-encoders trained with different latent dimensions. As shown in Figure \ref{fig:mnist_autoencoder} the intrinsic dimension obtained with our method coincides with the point of diminishing returns in terms of reduction of the reconstruction loss. Contrastingly, estimates produced by MLE and Local PCA were significantly lower, corresponding to regions of the curve where the reconstruction loss was still steeply decreasing. This suggests these methods underestimate the true manifold dimension. These findings calls for a careful interpretation of the intrinsic dimension estimates of popular machine learning datasets provided by \cite{pope2021intrinsic}, as they rely on the MLE method, which we have found to consistently underestimate manifold dimensions. On the other hand, PPCA notably overestimated the dimension, with $\hat{k}_{\text{PPCA}} = 706$.

\begin{figure}
    \centering
    \includegraphics[width=0.99\textwidth]{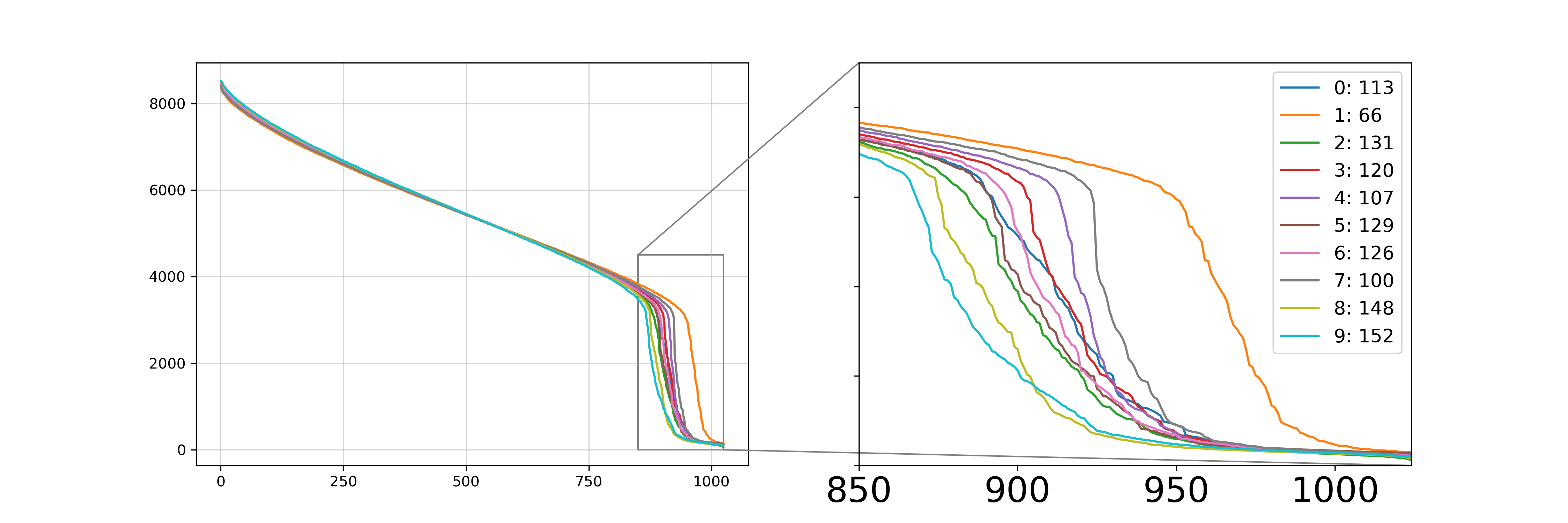}
    \caption{MNIST score spectra that yielded the highest estimated dimension for each digit}
    \label{fig:score_spectra_mnist}
\end{figure}

\begin{table*}[h]
\begin{center}
\small
\begin{tabular}{c|c|c|c|c|c|c|c|c|c}
\toprule
0 & 1 & 2 & 3 & 4 & 5 & 6 & 7 & 8 & 9 \\
\midrule
113 & 66 & 131 & 120 & 107 & 129 & 126 & 100 & 148 & 152 \\

\bottomrule
\end{tabular}
\end{center}
\caption{Estimated intrinsic dimension for each MNIST digit}
\label{tab:estimated_mnist_dimensions}
\end{table*}

%
%
%
%
%

\begin{table}[h]
    \begin{center}
    \resizebox{\columnwidth}{!}{%
    \begin{tabular}{lcccccc}
    \toprule
    & Ground Truth & Ours & MLE (m=5) & MLE (m=20) & Local PCA & PPCA \\
    \midrule
    Euclidean Data Manifolds & & & & & & \\
    \quad 10-sphere 
    &10 &11 &9.61 &9.46 &11 &11 \\
    \quad 50-sphere
    &50 &51 &35.52 &34.04 &51 &51 \\
    \quad Spaghetti line
    &1 &1 &1.01 &1.00 &32 &98 \\
    \midrule
    Image Manifolds & & & & & & \\
    \quad Squares
    & & & & & & \\
    \qquad $k=10$ 
    &10 &11 &8.48 &8.17 &10 &10 \\
    \qquad $k=20$
    &20 &22 &14.96 &14.36 &20 &20 \\
    \qquad $k=100$
    &100 &100 &37.69 &34.42 &78 &99 \\
    \quad Gaussian blobs
    & & & & & & \\
    \qquad $k=10$
    &10 &12 &8.88 &8.67 &10 &136 \\
    \qquad $k=20$
    &20 &21 &16.34 &15.75 &20 &264 \\
    \qquad $k=100$
    &100 &98 &39.66 &35.31 &18 &985 \\
    \midrule
    MNIST
    &N/A &152 &14.12 &13.27 &38 &706 \\
    \bottomrule
    \end{tabular}
    }
    \end{center}
    \caption{Comparison of dimensionality detection methods on various data manifolds.}
    \label{tbl:results}
\end{table}

\section{Conclusions and further directions}
\label{sec:conclusions}
In this work, we proved theoretically and confirmed experimentally that diffusion models can infer the intrinsic dimension from the data. We introduced an approach that estimates the intrinsic dimension of the data manifold from a pre-trained diffusion model. This approach capitalizes on the observation that, the diffusion model evaluated at sufficiently small diffusion time approximates the normal bundle of the data manifold.  Our work offers a twofold contribution: it highlights that diffusion model detects the lower dimensional structure of data and provides a rigorous method for intrinsic dimension estimation. 

Our method has been rigorously tested on Euclidean and image data and has consistently provided accurate estimates of the intrinsic dimension, outperforming established statistical estimators, especially in higher dimensional manifolds. Furthermore, our research introduces new estimates for the MNIST's dimensionality, demonstrating strong alignment with the predictions of an auto-encoder trained across a range of latent dimensions.

We are the first, to our knowledge, to present an intrinsic dimension estimation method based on diffusion models. The superior performance of our approach on higher dimensional manifolds compared to statistical estimators is attributed to the enhanced statistical efficiency from the inductive biases in the neural network architectures approximating the score function.  Our work opens new paths for understanding and estimating intrinsic data dimension, with potential implications across the field of machine learning. Future research should explore this method's applicability to other data types and its potential across various domains.

\bibliography{bibliography}
\bibliographystyle{plainnat}

\newpage
\appendix
\onecolumn
\section{Extended background on diffusion models}
\label{appendix:background}
\textbf{Setup:}  In \cite{song2020score} score-based  \cite{score_matching} and diffusion-based \cite{diffusion_models, ddpm} generative models have been unified into a single continuous-time score-based framework where the diffusion is driven by a stochastic differential equation.  This framework relies on Anderson's Theorem \cite{anderson1982reverse_time_sde}, which states that under certain Lipschitz conditions on the drift coefficient $f : \mathbb{R}^{n_x} \times \mathbb{R} \xrightarrow{} \mathbb{R}^{n_x}$ and on the diffusion coefficient $G : \mathbb{R}^{n_x} \times \mathbb{R}\xrightarrow{} \mathbb{R}^{n_x} \times \mathbb{R}^{n_x}$ and an integrability condition on the target distribution $p_0(\textbf{x}_0)$ a forward diffusion process governed by the following SDE:
\begin{gather}
\label{eq:forward_sde}
 d\textbf{x}_t = f(\textbf{x}_t,t)dt+G(\textbf{x}_t,t)d\textbf{w}_t  
\end{gather} 
has a reverse diffusion process governed by the following SDE:
\label{eq:reverse_sde}
\begin{gather}
    d\textbf{x}_t=[f(\textbf{x}_t,t)-G(\textbf{x}_t,t)G(\textbf{x}_t,t)^T\nabla_{\textbf{x}_t}{\ln{p_t(\textbf{x}_t)}}]dt + G(\textbf{x}_t,t)d\Bar{\textbf{w}_t},
\end{gather}

\noindent where $\Bar{\textbf{w}_t}$ is a standard Wiener process in reverse time. 

The forward diffusion process transforms the \textit{target distribution} $p_0(\textbf{x}_0)$ to a \textit{diffused distribution} $p_T(\textbf{x}_T)$ after diffusion time $T$. By appropriately selecting the drift and the diffusion coefficients of the forward SDE, we can make sure that after sufficiently long time $T$, the diffused distribution $p_T(\textbf{x}_T)$ approximates a simple distribution, such as $\mathcal{N}(\textbf{0},\textbf{I})$. We refer to this simple distribution as the \textit{prior distribution}, denoted by $\pi$. The reverse diffusion process transforms the diffused distribution $p_T(\textbf{x}_T)$ to the data distribution $p_0(\textbf{x}_0)$ and the prior distribution $\pi$ to a distribution $p^{SDE}$. The distribution $p^{SDE}$ is close to $p_0(\textbf{x}_0)$ if the diffused distribution $p_T(\textbf{x}_T)$ is close to the prior distribution $\pi$. We get samples from $p^{SDE}$ by sampling from $\pi$ and simulating the reverse SDE from time $T$ to time $0$.

\textbf{Sampling:} To get samples by simulating the reverse SDE, we need access to the time-dependent \textit{score function} $\nabla_{\textbf{x}_t}{\ln{p_t(\textbf{x}_t)}}$. In practice, we approximate the time-dependent score function with a neural network $s_{\theta}(\textbf{x}_t,t) \approx \nabla_{\textbf{x}_t}{\ln{p_t(\textbf{x}_t)}}$ and simulate the reverse SDE presented in equation \ref{eq:approximated_reverse_sde} to map the prior distribution $\pi$ to $p^{SDE}_{\theta}$.

\begin{gather}\label{eq:approximated_reverse_sde}
d\textbf{x}_t=[f(\textbf{x}_t,t)-G(\textbf{x}_t,t)G(\textbf{x}_t,t)^Ts_{\theta}(\textbf{x}_t,t)]dt + G(\textbf{x}_t,t)d\Bar{\textbf{w}_t},
\end{gather}If the prior distribution is close to the diffused distribution and the approximated score function is close to the ground truth score function, the modeled distribution  $p^{SDE}_{\theta}$ is provably close to the target distribution $p_0(\textbf{x}_0)$. This statement is formalised in the language of distributional distances in the work of \cite{song2021maximum}. 




\textbf{Training:} A neural network $s_\theta(\textbf{x}_t,t)$ can be trained to approximate the score function $\nabla_{\textbf{x}_t}{\ln{p_t(\textbf{x}_t)}}$ by minimizing the weighted score matching objective

\begin{gather}
\begin{aligned}
    \mathcal{L}_{SM}(\theta, \lambda(\cdot)) := 
    \frac{1}{2} \mathbb{E}_{\subalign{&t \sim U(0,T)\\ &\textbf{x}_t \sim p_t(\textbf{x}_t)}} [\lambda(t) \norm{\nabla_{\textbf{x}_t}{\ln{p_t(\textbf{x}_t)}} - s_\theta(\textbf{x}_t,t)}_2^2]
\end{aligned}
\end{gather}
where $\lambda: [0,T] \xrightarrow{} \mathbb{R}_+$ is a positive weighting function.

However, the above quantity cannot be optimized directly since we don't have access to the ground truth score $\nabla_{\textbf{x}_t}{\ln{p_t(\textbf{x}_t)}}$. Therefore in practice, a different objective has to be used \cite{score_matching, vincent2011connection, song2020score}. In \cite{song2020score}, the weighted denoising score-matching objective is used, which is defined as 

\begin{gather}\label{DSM for uniform diffusion models}
\begin{aligned}
    \mathcal{L}_{DSM}(\theta, \lambda(\cdot)) := 
    \frac{1}{2} \mathbb{E}_{\subalign{&t \sim U(0,T)\\ &\textbf{x}_0 \sim p_0(\textbf{x}_0) \\ &\textbf{x}_t \sim p_t(\textbf{x}_t | \textbf{x}_0)}} [\lambda(t) \norm{\nabla_{\textbf{x}_t}{\ln{p_t(\textbf{x}_t | \textbf{x}_0)}} - s_\theta(\textbf{x}_t,t)}_2^2]
\end{aligned}
\end{gather}

The difference between DSM and SM is the replacement of the ground truth score which we do not know by the score of the perturbation kernel which we know analytically for many choices of forward SDEs. The choice of the weighted DSM objective is justified because the weighted DSM objective is equal to the SM objective up to a constant that does not depend on the parameters of the model $\theta$. The reader can refer to \cite{vincent2011connection} for the proof. 

\section{Training details}
\label{sec:hparams}
We trained the score model using the weighted denoising score matching objective \cite{song2020score}, presented in eq. \ref{DSM for uniform diffusion models}. We used the likelihood weighting function, i.e. $\lambda(t)=g(t)^2$, where $g(t)$ is the diffusion coefficient of the forward SDE.

\begin{equation}\label{DSM for uniform diffusion models}
    \mathcal{L}_{DSM}(\theta, \lambda(\cdot)) := 
    \frac{1}{2} \mathbb{E}_{t \sim U(0,T)} \mathbb{E}_{\textbf{x}_0 \sim p_0(\textbf{x}_0)} \mathbb{E}_{\textbf{x}_t \sim p_t(\textbf{x}_t | \textbf{x}_0)} [\lambda(t) \norm{\nabla_{\textbf{x}_t}{\ln{p_t(\textbf{x}_t | \textbf{x}_0)}} - s_\theta(\textbf{x}_t,t)}_2^2]
\end{equation}

\subsection{Euclidean data}
For all of our experiments on Euclidean data, we used a fully connected network with 5 hidden layers and 2048 nodes in each hidden layer to approximate the score function. The input and output dimension is the same as the ambient dimension. For the optimisation of the model, we used the Adam algorithm with a learning rate of $2\mathrm{e}{-5}$ and exponential moving average (EMA) on the weights of the model with a decay rate of $0.9999$. Moreover, we chose the variance exploding SDE \cite{song2020score} as the forward process with $\sigma_{min}=0.01$ and $\sigma_{max}=4$.

\subsection{ Image data}    
For all our of our experiments on image data, we used the DDPM architecture \cite{ho2020denoising} with variance exploding SDE \cite{song2020score} and hyperparameters indicated in Table \ref{tab:model_params}.

\subsection{Auto-encoder}
The encoder and decoder encoder architectures are based on the DDPM U-Net \cite{ddpm}, which we call half-U nets.

For the encoder we used the downsampling part of the U-Net and removed the upsampling part and the skip connections. The downscaled tensor is flattened and mapped to the latent dimension with an additional linear layer.

For the decoder we start by linearly transforming the latent vector and reshaping it into a tensor of appropriate dimension. Then we used the upsampling part of the DDPM U-Net.

We used the Adam optimizer and EMA rate $0.999$. We used learning rate scheduler reducing the loss on plateau starting form $10^{-4}$ and stopping at $10^{-5}$. We trained the auto-encoder for each latent dimension for 36h on NVIDIA A-100 GPU. At the end we used checkpoints which minimized the validation loss to evaluate the reconstruction error.

All other hyperparameters are included in Table \ref{tab:model_params}.

\begin{table}[h]
\centering
\begin{tabular}{|l|l|l|}
\hline
\textbf{Hyper-parameter} & \textbf{MNIST} & \textbf{Synthetic Image data} \\
\hline
Number of filters & 128 & 128 \\
\hline
Channel multipliers & (1, 2, 2, 4) & (1, 2, 2, 2) \\
\hline
Dropout & 0.1 & 0.1 \\
\hline
EMA rate & 0.999 & 0.999 \\
\hline
Normalization & GroupNorm & GroupNorm \\
\hline
Nonlinearity & Swish & Swish \\
\hline
Number of residual blocks & 4 & 4 \\
\hline
Attention resolution & 16 & 16 \\
\hline
Convolution size & 3 & 3 \\
\hline
$\sigma_\text{min}$ & $0.009$ & $0.01$ \\
\hline
$\sigma_\text{max}$ & $50$ & $50$  \\
\hline
Learning rate & Scheduler($10^{-4}$, $10^{-5}$) & $2 \cdot 10^{-4}$ \\
\hline
\end{tabular}

\caption{DDPM Model Parameters}
\label{tab:model_params}
\end{table}

\section{Benchmarking}
\label{sec:benchmark}
We compared our method against well established approaches to intrinsic dimensionality estimation: the MLE estimator \cite{dim_MLE}, \cite{haro_mle}, Local PCA \cite{fan_local_pca} and Probabilistic PCA \cite{auto_ppca} \cite{ppca}. For MLE estimator and local PCA we used the implementation provided in the R package  \textsc{intrinsicDimension} \cite{R_intrinsic_dim}. The MLE estimator has an important hyperparameter $m$ - the number of nearest neighbour distances that should be used for the dimension estimation. We used values $m=5$ and $20$ since these are extremal values considered in \cite{pope2021intrinsic}. For PPCA we used the \textsc{scikit-learn} implementation \cite{sklearn}. The code for reproducing the benchmarking experiments is included in our codebase.

\section{Proofs}
\label{appendix:proof}
Here we provide full proofs for the statements in Section \ref{sec:theory}. First, we show that for any point $\textbf{x}$ sufficiently close to the data manifold and sufficiently small $t$ the score $\nabla_\textbf{x} \ln p_t(\textbf{x})$ points directly at the manifold. We demonstrate this by showing that projection of the score in any direction $\boldsymbol{\nu} \perp \textbf{n}$ vanishes in proportion to the projection on $\textbf{n} = \frac{(\pi(\textbf{x}) - \textbf{x})}{\norm{\pi(\textbf{x}) -\textbf{x}}}$ as $t \to 0$. Then Theorem \ref{thm:score_orthogonal} and Corollary \ref{cor:score_ratio} will follow easily from this result.

\begin{theorem}
\label{thm:master_thm}
Suppose that the the support of the data distribution $P_0$ is contained in a compact embedded sub-manifold $\mathcal{M} \subseteq \mathbb{R}^d$ and let $P_t$ be the distribution of samples from $P_0$ diffused for time $t$. Then, under mild assumptions, for any point $\textup{\textbf{x}} \in \mathbb{R}^d$ sufficiently close to $\mathcal{M}$, with orthogonal projection on $\mathcal{M}$, given by $\pi(\textup{\textbf{x}})$. Let $\textup{\textbf{n}}$ be a unit vector pointing from $\textup{\textbf{x}}$ to $\pi(\textup{\textbf{x}})$, then we have that for any unit vector $\boldsymbol{\nu}$ orthogonal to $\textup{\textbf{n}}$: 
\begin{gather*}
    \frac{\boldsymbol{\nu}^T \nabla_\textup{\textbf{x}} \ln p_t(\textup{\textbf{x}})}{\textup{\textbf{n}}^T \nabla_\textup{\textbf{x}} \ln p_t(\textup{\textbf{x}})} \to 0, \text{as } t \to 0.
\end{gather*}
\end{theorem}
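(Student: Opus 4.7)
The plan is to rewrite the score so that its geometric content is explicit. Since $p_t$ is the convolution of $P_0$ with $\mathcal{N}(\mathbf{0},\sigma_t^2\mathbf{I})$, differentiating under the integral gives the standard identity
\begin{equation*}
    \nabla_{\mathbf{x}}\ln p_t(\mathbf{x}) \;=\; \frac{1}{\sigma_t^2}\bigl(\mathbb{E}[Y\mid \mathbf{x}_t = \mathbf{x}] - \mathbf{x}\bigr),
\end{equation*}
where the posterior of $Y$ satisfies $p(\mathbf{y}\mid\mathbf{x}) \propto \exp(-\|\mathbf{x}-\mathbf{y}\|^2/(2\sigma_t^2))\,dP_0(\mathbf{y})$ and is supported on $\mathcal{M}$. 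The factor $\sigma_t^{-2}$ is common to numerator and denominator of the target ratio, so the problem reduces to understanding the direction of $\mathbb{E}[Y\mid\mathbf{x}]-\mathbf{x}$ as $t\to 0$.

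\textbf{Reduction to concentration of the posterior mean.} Write $\pi(\mathbf{x})-\mathbf{x}=r\mathbf{n}$ with $r=\|\pi(\mathbf{x})-\mathbf{x}\|>0$ and set $\epsilon_t(\mathbf{x}):=\mathbb{E}[Y\mid\mathbf{x}]-\pi(\mathbf{x})$, so that $\mathbb{E}[Y\mid\mathbf{x}]-\mathbf{x} = r\mathbf{n}+\epsilon_t(\mathbf{x})$. Using $\boldsymbol{\nu}\perp\mathbf{n}$, the ratio collapses to
\begin{equation*}
    \frac{\boldsymbol{\nu}^T\nabla_{\mathbf{x}}\ln p_t(\mathbf{x})}{\mathbf{n}^T\nabla_{\mathbf{x}}\ln p_t(\mathbf{x})} \;=\; \frac{\boldsymbol{\nu}^T\epsilon_t(\mathbf{x})}{r+\mathbf{n}^T\epsilon_t(\mathbf{x})},
\end{equation*}
so everything boils down to showing $\epsilon_t(\mathbf{x})\to \mathbf{0}$ as $t\to 0$: the numerator is then bounded by $\|\epsilon_t(\mathbf{x})\|\to 0$ and the denominator converges to the fixed positive number $r$.

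\textbf{Laplace-type estimate on $\mathcal{M}$ (the main step).} The substantive content is the proof that $\epsilon_t(\mathbf{x}) \to \mathbf{0}$. I would impose the ``mild assumptions'' that $\mathbf{x}$ lies strictly inside the reach of $\mathcal{M}$, so that $\pi(\mathbf{x})$ is the unique nearest point, and that the data density with respect to the Riemannian volume on $\mathcal{M}$ is continuous and strictly positive at $\pi(\mathbf{x})$. I would then split the defining integral of the posterior into (i) a small geodesic ball $B_\delta(\pi(\mathbf{x}))\subset\mathcal{M}$ and (ii) its complement. On (ii), compactness and uniqueness of $\pi(\mathbf{x})$ yield a positive margin $\|\mathbf{y}-\mathbf{x}\|^2\ge r^2 + c(\delta)$, so this piece is suppressed by $\exp(-c(\delta)/(2\sigma_t^2))$ and is negligible. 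On (i), parametrise $\mathcal{M}$ in tangent coordinates $\mathbf{u}\in T_{\pi(\mathbf{x})}\mathcal{M}$ and use the embedding expansion $\mathbf{y}-\pi(\mathbf{x}) = \mathbf{u} + \tfrac{1}{2}\mathrm{II}(\mathbf{u},\mathbf{u}) + O(\|\mathbf{u}\|^3)$. Because $\mathbf{n}\perp T_{\pi(\mathbf{x})}\mathcal{M}$, this gives $\|\mathbf{x}-\mathbf{y}\|^2 = r^2 + \|\mathbf{u}\|^2 + r\,\mathbf{n}^T\mathrm{II}(\mathbf{u},\mathbf{u}) + O(\|\mathbf{u}\|^3)$, and the rescaling $\mathbf{u}=\sigma_t\mathbf{v}$ turns the local posterior into a Gaussian in $\mathbf{v}$ whose mean is $\mathbf{0}$ up to $O(\sigma_t)$ corrections coming from the volume form and from $p_0$. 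Hence $\mathbb{E}[\mathbf{u}]=O(\sigma_t^2)$ and, including the quadratic normal correction above, $\|\epsilon_t(\mathbf{x})\| = O(\sigma_t^2)\to 0$.

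\textbf{Wrap-up and anticipated obstacle.} Substituting $\|\epsilon_t(\mathbf{x})\|\to 0$ into the ratio closes the proof, and both Theorem \ref{thm:score_orthogonal} and Corollary \ref{cor:score_ratio} follow by applying the result with $\boldsymbol{\nu}$ ranging over an orthonormal basis of $\mathbf{n}^\perp$. The algebraic reduction and the final limit are routine; the real obstacle is making the Laplace estimate on $\mathcal{M}$ fully rigorous---in particular, controlling the Riemannian volume-form expansion, the boundary contribution of $B_\delta$, and the higher-order curvature terms uniformly enough that the $O(\sigma_t^2)$ bound on $\|\epsilon_t(\mathbf{x})\|$ genuinely holds.
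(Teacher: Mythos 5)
Your proof is correct in outline and takes a genuinely different route from the paper's. You reduce via Tweedie's formula to showing that the posterior mean $\mathbb{E}[Y\mid\mathbf{x}]$ converges to $\pi(\mathbf{x})$, and then prove this by a Laplace-type expansion in tangent coordinates: separate a small geodesic cap from its complement, exponentially suppress the complement, and rescale $\mathbf{u}=\sigma_t\mathbf{v}$ on the cap to obtain $\|\epsilon_t(\mathbf{x})\|=O(\sigma_t^2)$. The paper never performs a Laplace expansion. It works directly with the ratio of integrals $\int_\mathcal{M}\boldsymbol{\nu}^T(\mathbf{y}-\mathbf{x})\,\mathcal{N}(\mathbf{y}\mid\mathbf{x},\sigma_t^2\mathbf{I})\,p_0\,d\mathbf{y}\,/\,\int_\mathcal{M}\mathbf{n}^T(\mathbf{y}-\mathbf{x})\,\mathcal{N}(\mathbf{y}\mid\mathbf{x},\sigma_t^2\mathbf{I})\,p_0\,d\mathbf{y}$, splits $\mathcal{M}$ into a neighbourhood $E$ of $\pi(\mathbf{x})$ and its complement, and bounds the resulting pieces ($A_t$, $B_t$, $C_t$) using only three facts: $|\boldsymbol{\nu}^T(\mathbf{y}-\mathbf{x})|<\varepsilon$ on $E$, $\mathbf{n}^T(\mathbf{y}-\mathbf{x})>\|\pi(\mathbf{x})-\mathbf{x}\|-\varepsilon>0$ on $E$, and Gaussian mass concentrating in $E$ (established via the Morse lemma applied to the squared-distance function, which certifies that $E$ can be chosen as a sublevel set so that every point of $E$ is strictly closer to $\mathbf{x}$ than every point of $\mathcal{M}\setminus E$). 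This buys the paper freedom from ever expanding the Riemannian volume form or the second fundamental form, at the cost of using Morse theory and yielding no rate. Your route buys a convergence rate $O(\sigma_t^2)$ and a cleaner geometric picture (the score is $\sigma_t^{-2}$ times a vector pointing essentially at $\pi(\mathbf{x})$), but—as you correctly flag—requires uniform control of the volume-form Jacobian, the positive-definiteness of the Hessian $\mathbf{I}+r\,\mathrm{II}_\mathbf{n}$ (which is exactly where the reach condition enters, mirroring the paper's no-focal-points argument), and the boundary and higher-order curvature terms; these details are the ones the paper's mean-value-theorem bound is designed to sidestep.

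One small remark: your reduction needs $r>0$, i.e.\ $\mathbf{x}\notin\mathcal{M}$; for $\mathbf{x}\in\mathcal{M}$ the direction $\mathbf{n}$ is undefined and the statement is vacuous, which matches the paper's implicit assumption that $\mathbf{x}$ lies strictly off the manifold in its tubular neighbourhood. Also, in your expansion $\|\mathbf{x}-\mathbf{y}\|^2 = r^2 + \mathbf{u}^T(\mathbf{I}+r\,\mathrm{II}_\mathbf{n})\mathbf{u}+O(\|\mathbf{u}\|^3)$, the curvature term enters at the same quadratic order as $\|\mathbf{u}\|^2$ rather than as a lower-order correction; you should make explicit that positive-definiteness of $\mathbf{I}+r\,\mathrm{II}_\mathbf{n}$ is what the tubular-neighbourhood/reach hypothesis guarantees, since without it the Laplace integral would not localise.
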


\textbf{Assumptions}
\begin{enumerate}
    \item The distribution $P_0$ has a smooth density $p_0$ wrt the volume measure on the manifold.
    \item The density $p_0$ is bounded away from zero on the manifold.
\end{enumerate}

\subsection*{Illustrative simple case}

\begin{figure}
\centering
  \includegraphics[width=0.7\linewidth]{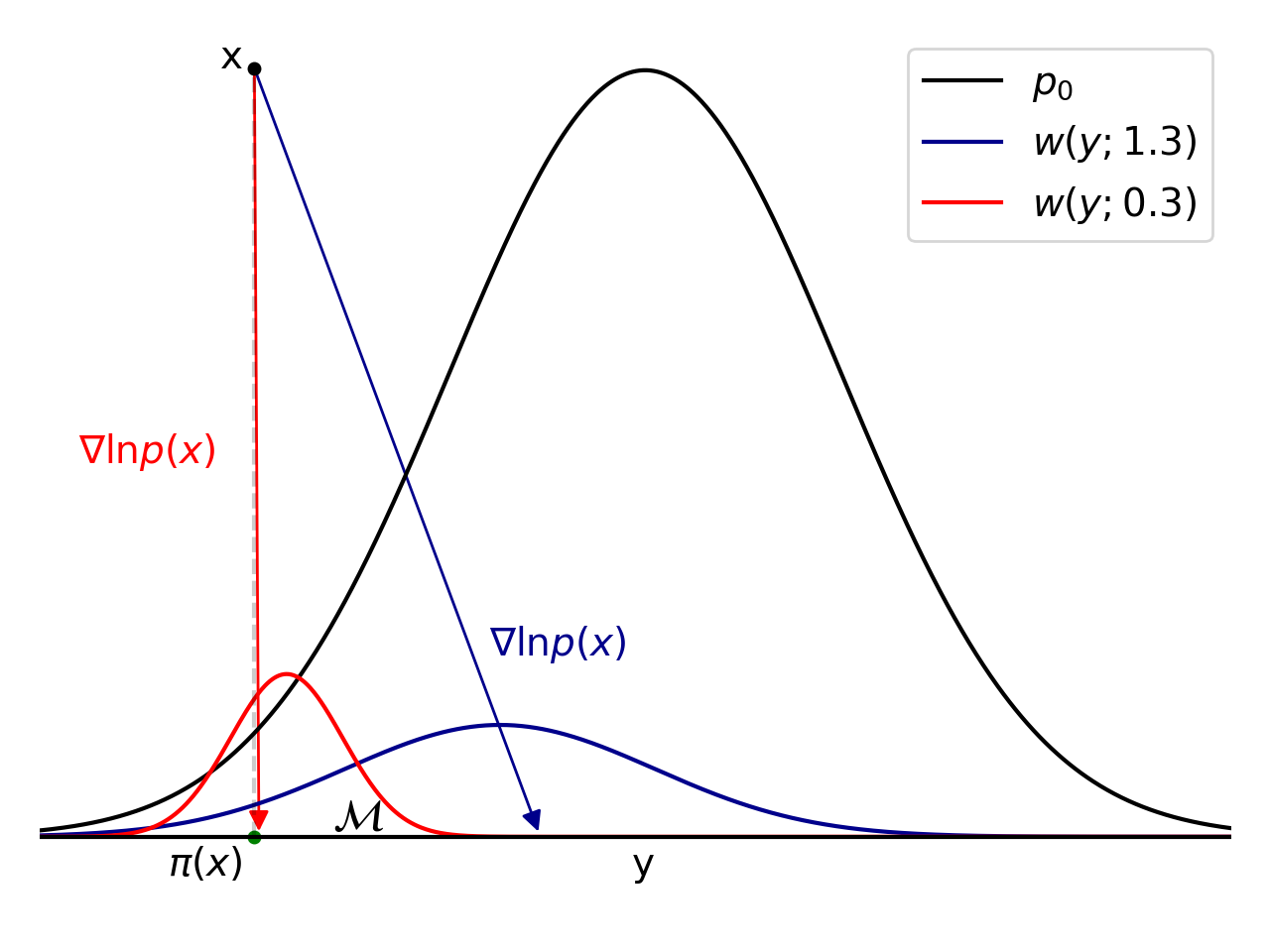}
  \caption[Caption without FN]{The score is the weighted average of vectors pointing from $\textbf{x}$ to $\textbf{y}$ with weights given by $w(\textbf{y};\sigma_t)$. As $\sigma_t$ decreases weights $w(\textbf{y}; \sigma_t)$ concentrate around $\pi(\textbf{x})$ and the influence of points $\textbf{y}$ away from projection $\pi(\textbf{x})$ becomes insignificant. Therefore, the direction of the score tends to a align with $\pi(\textbf{x})-\textbf{x}$. (Norm of the score vectors on the figure was scaled for better visibility. The direction is preserved.)}
  \label{fig:example}
\end{figure}

We first present an illustrative proof of a simple case of $\mathcal{M}$ being a linear subspace with $k=1$ and $d=2$. This case gives all of the essential ideas behind the general proof without much of the technicality. For the more interested reader, we then provide a proof of the result for a general manifold, using tools such as the notion of tubular neighbourhoods and some results from Morse theory. 
    
    Without the loss of generality assume that $\mathcal{M} = \{ (x_1,x_2) \in \mathbb{R}^2 : x_2 = 0 \}$ the line given by the $x_1$-axis. Pick a point $\textbf{x} \in \mathbb{R}^2$. The score at point $\textbf{x}$ is given by
\begin{gather*}
    \nabla_\textbf{x} \ln p_t(\textbf{x}) = \frac{1}{\sigma_t^2 p_t(\textbf{x})} \int_\mathcal{M} (\textbf{y}-\textbf{x}) \mathcal{N}(\textbf{y} | \textbf{x}, \sigma^2_t\textbf{I}) p_0(\textbf{y}) d\textbf{y}.
\end{gather*}
Notice that $\mathcal{N}((y_1, y_1) | (x_1, x_2), \sigma^2_t\textbf{I})$\footnote{In component-wise notation $\textbf{y} = (y_1, y_2)$ and $\textbf{x} = (x_1, x_2)$.} is a bivariate normal distribution and its restriction to $\mathcal{M}$ is equal to $\mathcal{N}((y_1, 0) | (x_1, x_2), \sigma^2_t\textbf{I}) = \mathcal{N}(0 | x_2, \sigma^2_t) \mathcal{N}(y_1 | x_1, \sigma^2_t)$. \vspace{0.2cm} Therefore
\begin{gather*}
    \nabla_\textbf{x} \ln p_t(\textbf{x}) = \frac{\mathcal{N}(0 | x_2, \sigma^2_t)}{\sigma_t^2p_t(\textbf{x})} \int_\mathcal{M} (\textbf{y}-\textbf{x})  \mathcal{N}(y_1 | x_1, \sigma^2_t) p_0(\textbf{y}) d\textbf{y}.
\end{gather*}
This means that the score is the weighted average of vectors pointing from $\textbf{x}$ to $\textbf{y}$ over all choices of points $\textbf{y}$ on the manifold, with weights given by $w(\textbf{y};\sigma_t) :=\mathcal{N}(y_1 | x_1, \sigma^2_t) p_0(\textbf{y})$ (see Figure \ref{fig:example} for visual explanation). For small $\sigma_t$ these weights concentrate around $\pi(\textbf{x})=(x_1,0)$ the projection of $\textbf{x}$ on $\mathcal{M}$, and vanishing far away from it. Consider a ratio of the tangential part to the normal part of the score:
\begin{gather*}
    \frac{\boldsymbol{\nu}^T \nabla_\textbf{x} \ln p_t(\textbf{x})}{\textbf{n}^T \nabla_\textbf{x} \ln p_t(\textbf{x})} = \frac{\int_\mathcal{M} \boldsymbol{\nu}^T(\textbf{y}-\textbf{x})  \mathcal{N}(y_1 | x_1, \sigma^2_t) p_0(\textbf{y}) d\textbf{y}}{\int_\mathcal{M} \textbf{n}^T(\textbf{y}-\textbf{x})  \mathcal{N}(y_1 | x_1, \sigma^2_t) p_0(\textbf{y}) d\textbf{y}} 
    \xrightarrow[\sigma_t \to 0]{}
    \frac{\int_\mathcal{M} \boldsymbol{\nu}^T(\textbf{y}-\textbf{x})  \delta_{x_1}(y_1) p_0(\textbf{y}) d\textbf{y}}{\int_\mathcal{M} \textbf{n}^T(\textbf{y}-\textbf{x})  \delta_{x_1}(y_1) p_0(\textbf{y}) d\textbf{y}} \\
    = \frac{\boldsymbol{\nu}^T ((x_1,0)-\textbf{x})}{\textbf{n}^T ((x_1,0)-\textbf{x})} = \frac{(1,0)^T (0,-x_2)}{(0,1)^T (0,-x_2)} = 0
\end{gather*}
where $\boldsymbol{\nu}$ and $\textbf{n}$ are unit vectors in tangential and normal directions respectively. This implies,
\begin{gather*}
    \text{S}_{\cos} (\textbf{n}, \nabla_\textbf{x} \ln p_t(\textbf{x})) 
    = \frac{\textbf{n}^T  \nabla_\textbf{x} \ln p_t(\textbf{x})}{\norm{ \nabla_\textbf{x} \ln p_t(\textbf{x})}} 
    = \frac{\textbf{n}^T  \nabla_\textbf{x} \ln p_t(\textbf{x})}{\sqrt{(\textbf{n}^T  \nabla_\textbf{x} \ln p_t(\textbf{x}))^2 + (\boldsymbol{\nu}^T  \nabla_\textbf{x} \ln p_t(\textbf{x}))^2}} 
    \\ = \frac{1}{\sqrt{1 + \big( \frac{\boldsymbol{\nu}^T\nabla_\textbf{x} \ln p_t(\textbf{x})}{\textbf{n}^T  \nabla_\textbf{x} \ln p_t(\textbf{x})} \big)^2}} 
    \xrightarrow[t \to 0]{} 1.
\end{gather*}
This establishes the theorem for the simple case. The corollary follows immediately since in the simple case we have $\mathbf{T} = \boldsymbol{\nu}^T$ and $\mathbf{N}=\textbf{n}^T$. \qed

\subsection*{Deriving the formula for the density of $P_t$}
Let $\mathcal{M}$ be a compact $k$-dimensional manifold embedded in $\mathbb{R}^d$.
Let $A \subseteq \mathbb{R}^d$. We define the measure $P_0$ on $\mathbb{R}^d$ as
\begin{gather}
    P_0(A) := \int_{A \cap M} p_0(\textbf{y}) d\textbf{y}
\end{gather}
where $d\textbf{y}$ is the volume form\footnote{Integrating over $A$ using the volume form of $\mathcal{M}$ can be thought of as taking an appropriately re-scaled Lebesgue integral over $A$. That is $P_0(A) = \int_{A \cap \mathcal{M}} p_0(\textbf{y})d\textbf{y} = \int_A \int_{\mathbb{R}^d} \delta (\textbf{s}-\textbf{y})\hat{p}_0(\textbf{s})d\textbf{s} d\textbf{y}$. Where the latter are Lebesgue integrals and $\hat{p}_0(\textbf{s}) = p_0(\textbf{s}) \text{ for } \textbf{s}\in\mathcal{M}$ and zero otherwise.} on $M$ and $p_0$ is a smooth function on $\mathcal{M}$ \footnote{i.e. for any chart $\phi: \mathbb{R}^k \supseteq U \xrightarrow[]{} M$ the composition $p_0 \circ \phi: \mathbb{R}^k \supseteq U \xrightarrow[]{} \mathbb{R}$ is smooth.} such that $\int_\mathcal{M} p_0(\textbf{y}) d\textbf{y} = 1$. Let $f : \mathbb{R}^d \rightarrow \mathbb{R}$ be a $P_0$-measurable function. By approximating $f$ with simple functions (linear combination of indicator functions) we conclude that:
\begin{gather}
\label{eq:itegrals_on_M}
    \int_A f dP_0 = \int_{A \cap M} f(\textbf{y}) p_0(\textbf{y}) d\textbf{y}
\end{gather}
Consider a measure $P_t$ as a convolution of $P_0$ with a normal distribution on $\mathbb{R}^d$. For any measurable  $A \subseteq \mathbb{R}^d$ we have
\begin{gather*}
    (P_0 \ast  \mathcal{N}_{0, t})(A) := \int_{\mathbb{R}^d}\int_{A-\textbf{y}}d\mathcal{N}_{0, t}(\textbf{x})dP_0(\textbf{y}) = \int_{\mathbb{R}^d}\int_{A-\textbf{y}}\mathcal{N}(\textbf{x} | 0, \sigma^2_t\textbf{I})d\textbf{x} dP_0(\textbf{y}) \\
    =  \int_{\mathbb{R}^d}\int_{A}\mathcal{N}(\textbf{x}-\textbf{y} | 0, \sigma^2_t\textbf{I})d\textbf{x} dP_0(\textbf{y}) =  \int_{\mathbb{R}^d}\int_{A}\mathcal{N}(\textbf{y} | \textbf{x}, \sigma^2_t\textbf{I})d\textbf{x} dP_0(\textbf{y}) \\
    = \int_{A} \int_{\mathbb{R}^d} \mathcal{N}(\textbf{y} | \textbf{x}, \sigma^2_t\textbf{I})dP_0(\textbf{y}) d\textbf{x} \overset{\eqref{eq:itegrals_on_M}}{=} \int_{A} \int_\mathcal{M} \mathcal{N}(\textbf{y} | \textbf{x}, \sigma^2_t\textbf{I}) p_0(\textbf{y}) d\textbf{y} d\textbf{x}
\end{gather*}
where $d\textbf{y}$ is a volume form on $M$ and $d\textbf{x}$ is a volume form on $\mathbb{R}^d$.
Therefore the measure $P_t$ has a density on $\mathbb{R}^d$ given by:
\begin{gather}
    \label{convolution}
    p_t(\textbf{x}) = \int_\mathcal{M} \mathcal{N}(\textbf{y} | \textbf{x}, \sigma^2_t\textbf{I}) p_0(\textbf{y}) d\textbf{y}.
\end{gather}
Note the $\mathcal{N}(\textbf{y}|\textbf{x},\sigma^2_t\textbf{I})$ here. Typically one would write this as $\mathcal{N}(\textbf{x}|\textbf{y},\sigma^2_t\textbf{I})$ and think of \eqref{convolution} as the quantity of probability mass at point $\textbf{x}$ after diffusing for time $t$ with initial distribution $p_0(\textbf{y})$. We instead write it this way as it will be more intuitive to think of \eqref{convolution} as the average probability mass that intersects the manifold after diffusing from a delta distribution at $\textbf{x}$ (where the average is taken over $p_0(\textbf{y})$). These are of course equivalent as $\mathcal{N}(\textbf{x}|\textbf{y},\sigma^2_t\textbf{I})$ is symmetric is $\textbf{x}$ and $\textbf{y}$.

\subsection*{Tubular Neighbourhoods}
First we need to ensure that the point $\textbf{x}$ has a unique projection on $\mathcal{M}$. This is always true for an $\textbf{x}$ sufficiently close to $\mathcal{M}$. We can formalize this with the notion of \textit{tubular neighbourhood} - a tube around $\mathcal{M}$ such that every point $\textbf{x}$ inside can be uniquely represented as a sum of the point on the manifold and a vector from the normal bundle i.e. $\textbf{x} = \textbf{y} + \textbf{v}$ where $\textbf{y} \in \mathcal{M}$ and $\textbf{v} \in \mathcal{N}_\textbf{y} \mathcal{M}$. Formally:

\begin{definition}
Endpoint Map\\
The endpoint map $Y: \mathcal{NM} \xrightarrow[]{} \mathbb{R}^d$ is defined by $Y(\textbf{y}, \textbf{v}) = \textbf{y} + \textbf{v}$ for $\textbf{y} \in \mathcal{M}$ and $\textbf{v} \in \mathcal{N}_\textbf{y} \mathcal{M}$. 
\end{definition}

\begin{definition}
Tubular Neighbourhood\\
A (uniform) tubular neighbourhood of $\mathcal{M}$ is a neighbourhood $U_R$ of $\mathcal{M}$ in $\mathbb{R}^d$ that is a diffeomorphic image under the endpoint map of an open subset $V_R \subseteq \mathcal{NM}$ of the form:
\begin{gather*}
    V_R = \{ (\textbf{y},\textbf{v}) \in \mathcal{NM}: \norm{\textbf{v}}_2 < R \} 
\end{gather*}
\end{definition}
Since $Y$ restricted to $V_R$ is a diffeomorphism\footnote{so in particular it is a bijection}, it follows  that every point $\textbf{x} = (\textbf{y},\textbf{v})$ in the tubular neighbourhood has a unique orthogonal projection on $\mathcal{M}$ given by $\textbf{y}$. We will denote this projection as $\pi(\textbf{x})$.

Conveniently, it turns out that every  compact embedded submanifold of $\mathbb{R}^d$ has a tubular neighborhood.

\begin{theorem}
Tubular Neighborhood Theorem \cite[Theorem 5.25]{lee2019_riemman} \\  
Every  compact  embedded submanifold of $\mathbb{R}^d$ has a uniform tubular neighborhood.
\end{theorem}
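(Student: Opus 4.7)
The strategy is to realise the desired tubular neighbourhood as the diffeomorphic image, under the endpoint map $Y: \mathcal{NM} \to \mathbb{R}^d$ with $Y(\textbf{y}, \textbf{v}) = \textbf{y} + \textbf{v}$, of a set of the form $V_R = \{(\textbf{y}, \textbf{v}) \in \mathcal{NM} : \norm{\textbf{v}} < R\}$ for some sufficiently small $R > 0$. The argument splits naturally into three steps: (i) pointwise local invertibility of $Y$ on the zero section, (ii) extracting a uniform local-diffeomorphism radius from compactness, and (iii) promoting local injectivity to global injectivity.

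For step (i), I would decompose the tangent space $T_{(\textbf{y},0)}\mathcal{NM}$ into its horizontal copy of $T_\textbf{y}\mathcal{M}$ and its vertical copy of $N_\textbf{y}\mathcal{M}$, and observe that $dY_{(\textbf{y},0)}$ acts as the identity on the first summand (moving $\textbf{y}$ along $\mathcal{M}$ moves its image by the same vector) and as the inclusion $N_\textbf{y}\mathcal{M} \hookrightarrow \mathbb{R}^d$ on the second. Since $T_\textbf{y}\mathcal{M} \oplus N_\textbf{y}\mathcal{M} = \mathbb{R}^d$, this derivative is a linear isomorphism, and the inverse function theorem supplies an open $W_\textbf{y} \ni (\textbf{y},0)$ on which $Y$ restricts to a diffeomorphism onto an open subset of $\mathbb{R}^d$. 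For step (ii), I would shrink each $W_\textbf{y}$ so that it contains a product neighbourhood $U_\textbf{y} \times B_{\epsilon_\textbf{y}}$ (with $U_\textbf{y}$ a chart neighbourhood of $\textbf{y}$ in $\mathcal{M}$ and $B_{\epsilon_\textbf{y}}$ a normal ball of radius $\epsilon_\textbf{y}$), use compactness of $\mathcal{M}$ to extract a finite subcover $\{U_{\textbf{y}_i}\}$, and take $\epsilon = \min_i \epsilon_{\textbf{y}_i} > 0$ to obtain a uniform radius such that $Y$ is a \emph{local} diffeomorphism on all of $V_\epsilon$.

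Step (iii) is the main obstacle, because local injectivity does not preclude normal fibres based at \emph{different} points of $\mathcal{M}$ from meeting in $\mathbb{R}^d$. I would argue by contradiction: supposing no $R \leq \epsilon$ works, I select for each $n$ distinct pairs $(\textbf{y}_n, \textbf{v}_n), (\textbf{y}'_n, \textbf{v}'_n) \in V_{1/n}$ with $\textbf{y}_n + \textbf{v}_n = \textbf{y}'_n + \textbf{v}'_n$. Compactness of $\mathcal{M}$ lets me pass to subsequences with $\textbf{y}_n \to \textbf{y}_\infty$ and $\textbf{y}'_n \to \textbf{y}'_\infty$; combining the identity $\textbf{y}_n + \textbf{v}_n = \textbf{y}'_n + \textbf{v}'_n$ with $\textbf{v}_n, \textbf{v}'_n \to 0$ forces $\textbf{y}_\infty = \textbf{y}'_\infty$. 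But then for all sufficiently large $n$ both pairs lie inside the single neighbourhood $W_{\textbf{y}_\infty}$ from step (i), contradicting injectivity there. Hence a uniform $R$ exists; since $Y|_{V_R}$ is then an injective local diffeomorphism, it is a global diffeomorphism onto its open image $U_R := Y(V_R)$, which is the required uniform tubular neighbourhood.
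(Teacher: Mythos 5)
The paper offers no proof of this statement --- it is quoted directly from Lee's \emph{Introduction to Riemannian Manifolds} (Theorem 5.25) --- so there is nothing internal to compare against. Your three-step argument (showing $dY$ is an isomorphism at each point of the zero section via the tangent/normal splitting and the inverse function theorem, extracting a uniform radius $\epsilon$ from compactness, and the sequential-compactness contradiction that upgrades local injectivity to global injectivity on $V_R$) is precisely the standard proof found in that reference, and it is correct and complete.
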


\subsection*{Preliminary lemmas and  Morse theory}
In this section we will establish that for every $\textbf{x}$ in the tubular neighbourhood of $\mathcal{M}$ there exists an open neighbourhood $E$ of $\pi(\textbf{x})$ such that:
\begin{enumerate}
    \item $\textbf{n}^T(\textbf{y}-\textbf{x}) > \norm{\pi(\textbf{x}) - \textbf{x}}_2 - \varepsilon $ on $E$ .
    \item $|\boldsymbol{\nu}^T(\textbf{y}-\textbf{x})| < \varepsilon$ on $E$.
    \item 
    The mass of a Gaussian centred at $\textbf{x}$ is concentrated in $E$,
    $$ \frac{\int_{\mathcal{M} \setminus E}\mathcal{N}(\textbf{y} | \textbf{x}, \sigma^2_t\textbf{I}) d\textbf{y}}{ \int_E \mathcal{N}(\textbf{y} | \textbf{x}, \sigma^2_t\textbf{I}) d\textbf{y}} \to 0 \text{ as } t \to 0. $$
\end{enumerate}

We begin by defining an $E$ which satisfies the first two conditions.
\begin{lemma}
\label{lemma:small_projection}
Choose $E$ contained in a ball of radius $0 < \varepsilon < \norm{\textup{\textbf{x}} - \pi(\textup{\textbf{x}})}$ around $\pi(\textup{\textbf{x}})$. Let $\textup{\textbf{y}} \in E$, and let $ \textup{\textbf{v}}_\varepsilon := \textup{\textbf{y}} - \pi(\textup{\textbf{x}}) $. Then 
    \begin{enumerate}
        \item $\textup{\textbf{n}}^T(\textup{\textbf{y}}-\textup{\textbf{x}}) > \norm{\pi(\textup{\textbf{x}}) - \textup{\textbf{x}}}_2 - \varepsilon $ on $E$ .
        \item $|\boldsymbol{\nu}^T(\textup{\textbf{y}}-\textup{\textbf{x}})| < \varepsilon$ on $E$.
    \end{enumerate}
\end{lemma}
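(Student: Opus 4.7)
The plan is to prove both bounds by the elementary decomposition
\[
\textbf{y} - \textbf{x} \;=\; (\textbf{y} - \pi(\textbf{x})) + (\pi(\textbf{x}) - \textbf{x}) \;=\; \textbf{v}_\varepsilon + \norm{\pi(\textbf{x}) - \textbf{x}}\,\textbf{n},
\]
combined with the defining geometric facts that $\textbf{n}$ is a unit vector pointing along $\pi(\textbf{x})-\textbf{x}$, that $\boldsymbol{\nu}\perp\textbf{n}$, and that $\norm{\textbf{v}_\varepsilon}<\varepsilon$ since $\textbf{y}\in E$. Both claims then fall out of the Cauchy--Schwarz inequality applied to $\textbf{v}_\varepsilon$.

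Concretely, for the first claim I would take the inner product of the decomposition with $\textbf{n}$, yielding $\textbf{n}^T(\textbf{y}-\textbf{x}) = \textbf{n}^T\textbf{v}_\varepsilon + \norm{\pi(\textbf{x})-\textbf{x}}$. Cauchy--Schwarz gives $|\textbf{n}^T\textbf{v}_\varepsilon|\le\norm{\textbf{v}_\varepsilon}<\varepsilon$, hence $\textbf{n}^T(\textbf{y}-\textbf{x}) > \norm{\pi(\textbf{x})-\textbf{x}} - \varepsilon$, which is exactly the first inequality (and is meaningful because we assumed $\varepsilon < \norm{\textbf{x}-\pi(\textbf{x})}$, so the right-hand side is positive).

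For the second claim I would take the inner product with $\boldsymbol{\nu}$. Since $\boldsymbol{\nu}^T\textbf{n}=0$ by hypothesis, the second term vanishes and one is left with $\boldsymbol{\nu}^T(\textbf{y}-\textbf{x}) = \boldsymbol{\nu}^T\textbf{v}_\varepsilon$. Another application of Cauchy--Schwarz (with $\norm{\boldsymbol{\nu}}=1$) yields $|\boldsymbol{\nu}^T\textbf{v}_\varepsilon|\le\norm{\textbf{v}_\varepsilon}<\varepsilon$, which is the second inequality.

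There is no real obstacle here; this lemma is purely a linear-algebraic observation that packages the picture ``near $\pi(\textbf{x})$, the displacement $\textbf{y}-\textbf{x}$ is dominated by the normal component $\norm{\pi(\textbf{x})-\textbf{x}}\,\textbf{n}$ up to an $\varepsilon$ tangential perturbation''. The substantive content of the overall argument will live in the next step, where one must show that the Gaussian mass $\int_{\mathcal{M}\setminus E}\mathcal{N}(\textbf{y}\mid\textbf{x},\sigma_t^2\textbf{I})\,d\textbf{y}$ is negligible compared to $\int_E\mathcal{N}(\textbf{y}\mid\textbf{x},\sigma_t^2\textbf{I})\,d\textbf{y}$ as $t\to 0$; that is where Morse-theoretic control over the squared-distance function $\textbf{y}\mapsto\norm{\textbf{y}-\textbf{x}}^2$ on $\mathcal{M}$ and the uniqueness of the projection inside the tubular neighbourhood will be essential.
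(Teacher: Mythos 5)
Your proposal is correct and follows essentially the same argument as the paper: decompose $\textbf{y}-\textbf{x}=\textbf{v}_\varepsilon+(\pi(\textbf{x})-\textbf{x})$, note that $\pi(\textbf{x})-\textbf{x}=\norm{\pi(\textbf{x})-\textbf{x}}\,\textbf{n}$ so its $\boldsymbol{\nu}$-component vanishes and its $\textbf{n}$-component equals $\norm{\pi(\textbf{x})-\textbf{x}}$, and then bound the $\textbf{v}_\varepsilon$ contributions by Cauchy--Schwarz using $\norm{\textbf{v}_\varepsilon}<\varepsilon$. The only difference is cosmetic: the paper's displayed computation has a sign typo writing $\textbf{x}-\pi(\textbf{x})$ where $\pi(\textbf{x})-\textbf{x}$ is meant, while you write the decomposition with the correct sign; both reach the same conclusion.
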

\begin{proof}
By direct computation
\begin{align*}
   \textbf{n}^T(\textbf{y}-\textbf{x}) &= \textbf{n}^T( (\textbf{x} - \pi(\textbf{x})) + \textbf{v}_\varepsilon)\\
   &= \textbf{n}^T(\textbf{x} - \pi(\textbf{x})) + \textbf{n}^T\textbf{v}_\varepsilon \\
   &= \norm{\textbf{x} - \pi(\textbf{x})}_2 + \textbf{n}^T\textbf{v}_\varepsilon\\
   &\geq  \norm{\textbf{x} - \pi(\textbf{x})} - \norm{\textbf{v}_\varepsilon}_2.
\end{align*}   
We have that $\norm{\textbf{v}_\varepsilon} < \varepsilon$, hence for all $\textbf{y}$ in $E$ we have that $ \textbf{n}^T(\textbf{y}-\textbf{x}) \geq \norm{\textbf{x} - \pi(\textbf{x})} - \varepsilon > 0 $. For the second inequality \begin{align*}
    | \boldsymbol{\nu}^T(\textbf{y}-\textbf{x}) | &\leq | \boldsymbol{\nu}^T( (\textbf{x} - \pi(\textbf{x}))| + |\boldsymbol{\nu}^T\textbf{v}_\varepsilon|  \\
    &\leq  \norm{\textbf{v}_\varepsilon}_2 \\ 
    &\leq \varepsilon.
\end{align*}
\end{proof}

Now to find $E$ which also satisfies the last condition we proceed by recalling some elementary definitions and results of Morse theory.

\begin{theorem}
Morse lemma \cite[Corollary 1.17]{nicolaescu2011morse_theory} \\
If $\textup{\textbf{y}}_0$ is a non-degenerate critical point of index $\gamma$ of a smooth function $f : \mathcal{M} \xrightarrow[]{} \mathbb{R}$, then there exist a chart $\phi = (\phi_i)_{i=1}^k$  in a neighbourhood $U$ of $\textup{\textbf{y}}_0$ such that $\phi(\textbf{y}_0) = 0$, and in this chart we have the equality:
$$ f(\textup{\textbf{y}}) = f(\textup{\textbf{y}}_0) - \sum_{i=1}^\gamma \phi_i(\textup{\textbf{y}})^2 + \sum_{i=\gamma + 1}^k \phi_i(\textup{\textbf{y}})^2 $$
\end{theorem}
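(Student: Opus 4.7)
The plan is to prove this classical result of Morse theory by a two-stage construction: first obtain a Hadamard-type integral representation of $f$ as a smoothly varying quadratic form, then inductively diagonalize that form via a sequence of local coordinate changes, completing the square one variable at a time.

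First I would reduce to working in $\mathbb{R}^k$ with $\textbf{y}_0 = 0$ by choosing an arbitrary smooth chart centered at $\textbf{y}_0$. Applying the fundamental theorem of calculus twice (Hadamard's lemma), using that $f(0) = f(\textbf{y}_0)$ and $\nabla f(0) = 0$, yields smooth functions $h_{ij}$ such that
$$f(\textbf{y}) = f(\textbf{y}_0) + \sum_{i,j=1}^k y_i y_j h_{ij}(\textbf{y}), \qquad h_{ij}(0) = \tfrac{1}{2}\partial_i\partial_j f(0).$$
After symmetrizing by replacing each $h_{ij}$ with $\tfrac{1}{2}(h_{ij} + h_{ji})$, the matrix-valued function $H(\textbf{y}) := (h_{ij}(\textbf{y}))$ is symmetric, smooth, and equals half the Hessian at the origin, hence is non-degenerate there by the hypothesis that $\textbf{y}_0$ is a non-degenerate critical point.

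Next I would inductively diagonalize $H$. Suppose by induction that after $r-1$ steps we have smooth coordinates $(\psi_1,\dots,\psi_{r-1}, y_r,\dots,y_k)$ in which
$$f(\textbf{y}) = f(\textbf{y}_0) + \sum_{i=1}^{r-1} \pm\psi_i(\textbf{y})^2 + \sum_{i,j \geq r} y_i y_j H_{ij}(\textbf{y}),$$
where the block $(H_{ij}(0))_{i,j\geq r}$ remains non-degenerate (this persists throughout the induction because completing the square does not alter the rank of the remaining quadratic form). By a linear rotation of $(y_r,\dots,y_k)$ one may assume $H_{rr}(0) \neq 0$, so $H_{rr}(\textbf{y})$ is nonzero on a neighborhood of the origin. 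Define
$$\psi_r(\textbf{y}) = |H_{rr}(\textbf{y})|^{1/2}\left(y_r + \sum_{j > r}\frac{H_{rj}(\textbf{y})}{H_{rr}(\textbf{y})}\, y_j\right),$$
which is smooth near $0$. A direct expansion shows that $\operatorname{sgn}(H_{rr}(0))\,\psi_r^2$ absorbs every term in the quadratic form involving $y_r$, leaving a smoothly varying quadratic form in $(y_{r+1},\dots,y_k)$ alone. The Jacobian of $(\psi_1,\dots,\psi_{r-1},\psi_r,y_{r+1},\dots,y_k)$ at the origin is block-triangular with nonzero diagonal, so the inverse function theorem guarantees this is a genuine local chart.

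After $k$ iterations I obtain a smooth chart $\phi = (\phi_1,\dots,\phi_k)$ centered at $\textbf{y}_0$ in which $f(\textbf{y}) - f(\textbf{y}_0) = \sum_{i=1}^k \varepsilon_i \phi_i(\textbf{y})^2$ with each $\varepsilon_i \in \{-1,+1\}$. By Sylvester's law of inertia the number of negative signs is a diffeomorphism invariant of the quadratic form $\operatorname{Hess}(f)(\textbf{y}_0)$, namely its index $\gamma$; after permuting the $\phi_i$ we recover the exact canonical form in the statement. The main obstacle is the bookkeeping in the inductive step, specifically verifying that the residual block $(H_{ij})_{i,j \geq r}$ stays non-degenerate after each completion of the square and that $\psi_r$ together with the remaining coordinates really defines a diffeomorphism; both follow from elementary linear algebra together with the inverse function theorem, but writing them out carefully is the one place the argument requires some care.
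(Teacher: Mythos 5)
The paper does not prove this statement at all: it is imported as a known classical result, cited to Nicolaescu (Corollary 1.17), and used only as a tool in the proof of Lemma \ref{lemma:gaussian_concentration}. Your argument is therefore not ``a different route from the paper'' so much as a proof where the paper offers none, and it is the standard Milnor-style proof of the Morse lemma: Hadamard's lemma (applied twice on a star-shaped chart neighbourhood, which is where the convexity hypothesis of that lemma is quietly used) to write $f - f(\textbf{y}_0)$ as a smoothly varying symmetric quadratic form, followed by inductive completion of the square with the inverse function theorem certifying each coordinate change, and Sylvester's law of inertia to identify the number of negative squares with the index $\gamma$. The details you flag as needing care are exactly the right ones and they do go through: a symmetric non-degenerate block can have vanishing diagonal, but since the restricted form is nonzero there is a direction $v$ with $v^{T}Hv \neq 0$, and an orthogonal change of the last $k-r+1$ coordinates sending a basis vector to $v/\norm{v}$ makes $H_{rr}(0)\neq 0$; and non-degeneracy of the residual block persists because the Hessian at the critical point transforms by congruence, so after splitting off the $\pm$ diagonal entries the remaining block at the origin must still be invertible. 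Your formula for $\psi_r$ does absorb all terms containing $y_r$ (the leftover form is $\sum_{i,j>r} y_i y_j\bigl(H_{ij} - H_{ri}H_{rj}/H_{rr}\bigr)$), and the Jacobian of the step is triangular with diagonal entry $|H_{rr}(0)|^{1/2}\neq 0$, so the proof is sound as written.
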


Let $f_\textup{\textbf{x}}(\textbf{y}): \mathcal{M} \xrightarrow[]{} \mathbb{R}$  denote the squared distance function from $\textbf{x}$ given by $f_\textup{\textbf{x}}(\textbf{y}) = \norm{\textbf{x} - \textbf{y}}_2^2$.
We will establish that if $\textbf{x}$ is in a tubular neighbourhood, then its projection $\pi(\textbf{x})$ is a non-degenerate critical point of $f_\textup{\textbf{x}}$ of index zero.

\begin{definition}
Focal point \\
A point $\textbf{x} =Y(\textbf{y}, \textbf{v})$ in the image of the endpoint map $Y$, is called a non-focal point of $\mathcal{M}$ with respect to $\textbf{y}$ if $dY(\textbf{y},\textbf{v})$ is an isomorphism. Otherwise it is called a focal point.
\end{definition}

\begin{theorem}
Critical points and focal points \cite[Theorem 4.2.6]{palais1988critical} \\
Let $\mathcal{M}$ be an embedded submanifold of $\mathbb{R}^d$, $\textup{\textbf{y}} \in M$, $\textup{\textbf{v}} \in \mathcal{N}_\textup{\textbf{y}}\mathcal{M}$, and $\textup{\textbf{x}} = Y (\textup{\textbf{y}},\textup{\textbf{v}}) = \textup{\textbf{y}} + \textup{\textbf{v}}$. Then
\begin{enumerate}
    \item $\textup{\textbf{y}}$ is a critical point of $f_\textup{\textbf{x}}$.
    \item $\textup{\textbf{y}}$ is a non-degenerate critical point of $f_\textup{\textbf{x}}$ if and only if $\textup{\textbf{x}}$ is a non-focal point.
    \item Index of $f_a$ at $\textbf{y}$ is equal to the number of focal points of $\mathcal{M}$ with respect to $\textup{\textbf{y}}$ on the line segment joining $\textup{\textbf{y}}$ to $\textup{\textbf{x}}$.
\end{enumerate}
\end{theorem}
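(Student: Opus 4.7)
The plan is to prove the three statements in order by direct computation of the gradient and Hessian of $f_\textbf{x}$ on $\mathcal{M}$, then computing the differential $dY_{(\textbf{y},\textbf{v})}$ of the endpoint map, and finally reading off all three claims from the same pair of formulas expressed in terms of the shape operator $S_\textbf{v}:T_\textbf{y}\mathcal{M}\to T_\textbf{y}\mathcal{M}$ of $\mathcal{M}\subset\mathbb{R}^d$ in the direction $\textbf{v}$.

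For Part 1, the ambient gradient of $f_\textbf{x}(\textbf{y})=\|\textbf{x}-\textbf{y}\|^2$ is $-2(\textbf{x}-\textbf{y})=-2\textbf{v}$. The Riemannian gradient of $f_\textbf{x}$ on $\mathcal{M}$ is the orthogonal projection of this ambient gradient onto $T_\textbf{y}\mathcal{M}$, and since $\textbf{v}\in\mathcal{N}_\textbf{y}\mathcal{M}$ is by hypothesis perpendicular to $T_\textbf{y}\mathcal{M}$, the tangential projection vanishes and $\textbf{y}$ is critical. For the Hessian, I would differentiate $f_\textbf{x}$ twice along a geodesic $\gamma$ in $\mathcal{M}$ with $\gamma(0)=\textbf{y}$, $\dot\gamma(0)=u$; using $\ddot\gamma(0)=II(u,u)\in\mathcal{N}_\textbf{y}\mathcal{M}$ (second fundamental form) and $\langle S_\textbf{v}(u),u\rangle=\langle II(u,u),\textbf{v}\rangle$ gives
\[
\mathrm{Hess}(f_\textbf{x})_\textbf{y}(u,u)=2\|u\|^2-2\langle \textbf{v},II(u,u)\rangle=2\langle (I-S_\textbf{v})u,u\rangle.
\]

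Next I would compute $dY_{(\textbf{y},\textbf{v})}:T_{(\textbf{y},\textbf{v})}\mathcal{NM}\to\mathbb{R}^d$. Using the normal connection, $T_{(\textbf{y},\textbf{v})}\mathcal{NM}$ splits canonically as $T_\textbf{y}\mathcal{M}\oplus\mathcal{N}_\textbf{y}\mathcal{M}$ (horizontal and vertical parts). A vertical direction $w\in\mathcal{N}_\textbf{y}\mathcal{M}$ is represented by the curve $(\textbf{y},\textbf{v}+sw)$, so $dY(0,w)=w$. For a horizontal direction $u\in T_\textbf{y}\mathcal{M}$, I lift to the curve $(\gamma(s),\textbf{v}(s))$ with $\textbf{v}(s)$ normal-parallel along $\gamma$ and $\textbf{v}(0)=\textbf{v}$. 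The Weingarten formula then gives $\tfrac{d}{ds}\textbf{v}(s)\big|_{s=0}=-S_\textbf{v}(u)\in T_\textbf{y}\mathcal{M}$, so $dY(u,0)=u-S_\textbf{v}(u)=(I-S_\textbf{v})u$. Hence in the ambient decomposition $\mathbb{R}^d=T_\textbf{y}\mathcal{M}\oplus\mathcal{N}_\textbf{y}\mathcal{M}$ the map $dY_{(\textbf{y},\textbf{v})}$ is block-diagonal with blocks $I-S_\textbf{v}$ and $I$. Therefore $dY_{(\textbf{y},\textbf{v})}$ is an isomorphism if and only if $I-S_\textbf{v}$ is invertible, which is exactly non-degeneracy of the Hessian; this proves Part 2. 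For Part 3, I would replace $\textbf{v}$ by $t\textbf{v}$ for $t\in(0,1]$: focal points along the segment from $\textbf{y}$ to $\textbf{x}=\textbf{y}+\textbf{v}$ are precisely the $t\in(0,1)$ for which $I-tS_\textbf{v}$ is singular, i.e. $1/t$ is an eigenvalue of $S_\textbf{v}$. The Morse index of the Hessian $2(I-S_\textbf{v})$ counts eigenvalues $\lambda>1$ of $S_\textbf{v}$, which by $t=1/\lambda$ matches exactly the number of focal points strictly between $\textbf{y}$ and $\textbf{x}$.

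The main obstacle will be correctly identifying $dY$ on horizontal directions and justifying that the horizontal/vertical splitting of $T_{(\textbf{y},\textbf{v})}\mathcal{NM}$ is compatible with the tangent/normal splitting of $\mathbb{R}^d$ so that the endomorphism becomes cleanly block-diagonal. This hinges on choosing a \emph{normal-parallel} extension of $\textbf{v}$ along $\gamma$: the defining condition $\nabla^\perp_{\dot\gamma}\textbf{v}=0$ kills the normal component of $\dfrac{d\textbf{v}}{ds}$ in $\mathbb{R}^d$, leaving only the tangential Weingarten term $-S_\textbf{v}(u)$. Once that identification is secure, Parts 1–3 fall out of the single computation above, and Part 3 reduces to an elementary eigenvalue count against the threshold $\lambda=1$.
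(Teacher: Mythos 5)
The paper does not prove this statement at all: it is quoted as a known result, with the proof deferred to the cited reference (Palais, Theorem 4.2.6), and is used only as an ingredient in Lemma \ref{lemma:gaussian_concentration}. So there is no in-paper argument to compare against, and your proposal should be judged on its own; it is the standard textbook proof (as in Milnor's \emph{Morse Theory} or Palais--Terng) and it is essentially correct. The gradient computation $\nabla^{\mathcal{M}} f_{\textbf{x}}(\textbf{y}) = \mathrm{proj}_{T_\textbf{y}\mathcal{M}}(-2\textbf{v}) = 0$ gives Part 1; the Hessian formula $\mathrm{Hess}(f_\textbf{x})_\textbf{y} = 2(I - S_\textbf{v})$ is right (and your use of a geodesic is harmless here, since even for a general curve the tangential acceleration pairs to zero against the normal vector $\textbf{v}$); the block-diagonal form of $dY_{(\textbf{y},\textbf{v})}$ with blocks $I - S_\textbf{v}$ on the horizontal factor and $I$ on the vertical factor, obtained via a normal-parallel extension and the Weingarten formula, correctly identifies non-degeneracy of the Hessian with $\textbf{x}$ being non-focal (Part 2); and the homogeneity $S_{t\textbf{v}} = t\,S_\textbf{v}$ reduces Part 3 to counting eigenvalues $\lambda$ of $S_\textbf{v}$ with $\lambda > 1$ against focal parameters $t = 1/\lambda \in (0,1)$.

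Two small points you should make explicit to match the classical statement. First, Part 3 is an equality only if focal points are counted \emph{with multiplicity}, the multiplicity being $\dim\ker dY_{(\textbf{y},t\textbf{v})} = \dim\ker(I - tS_\textbf{v})$; a bare count of focal points can undercount when an eigenvalue of $S_\textbf{v}$ is repeated. Second, your eigenvalue count covers the open segment: eigenvalues $\lambda = 1$ correspond to $\textbf{x}$ itself being focal, in which case the critical point is degenerate and its index still only counts $\lambda > 1$; so either phrase Part 3 for the open segment (equivalently, assume $\textbf{x}$ non-focal, which is the situation actually used in the paper, where $\textbf{x}$ lies in a tubular neighbourhood and hence no focal points occur on the segment at all) or note that the endpoint contributes to nullity rather than index. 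With those clarifications the argument is complete.
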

Because the restriction of the endpoint map $Y$ to the tubular neighbourhood is a diffeomorphism, the differential $dY$ is an isomorphism for every point in the tubular neighbourhood. Therefore there are no focal points of $\mathcal{M}$ in the tubular neighbourhood. Hence, it follows directly from the above theorem, that if $\textbf{x}$ is in the tubular neighbourhood, then the projection $\pi(\textbf{x})$ is a non-degenerate critical point of $f_\textup{\textbf{x}}$ of index zero. Now we are ready to prove the following lemma.

\begin{lemma}
\label{lemma:gaussian_concentration}
    There exists a connected open neighbourhood $E$ of $\pi(\textup{\textbf{x}})$ satisfying conditions of lemma \ref{lemma:small_projection} and such that, 
    \begin{gather}
        \frac{\int_{\mathcal{M} \setminus E}\mathcal{N}(\textup{\textbf{y}} | \textup{\textbf{x}}, \sigma^2_t\textup{\textbf{I}}) d\textup{\textbf{y}}}{ \int_E \mathcal{N}(\textup{\textbf{y}} | \textup{\textbf{x}}, \sigma^2_t\textup{\textbf{I}}) d\textup{\textbf{y}}} \to 0 \text{ as } t \to 0. 
    \end{gather}
\end{lemma}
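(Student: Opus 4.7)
The plan is to realize $E$ as a small open neighbourhood of $\pi(\textbf{x})$ on which $f_\textbf{x}(\textbf{y}) = \norm{\textbf{x}-\textbf{y}}_2^2$ admits Morse coordinates, and then perform a Laplace-type comparison between the integrals over $E$ and $\mathcal{M}\setminus E$. The first move is to rewrite the Gaussian as
\[ \mathcal{N}(\textbf{y}\,|\,\textbf{x},\sigma_t^2\textbf{I}) = (2\pi\sigma_t^2)^{-d/2}\exp\!\bigl(-f_\textbf{x}(\textbf{y})/(2\sigma_t^2)\bigr). \]
The normalising prefactor is independent of $\textbf{y}$ and so cancels when forming the ratio, reducing the claim to showing that $\int_{\mathcal{M}\setminus E} e^{-f_\textbf{x}/(2\sigma_t^2)}\,d\textbf{y} \big/ \int_E e^{-f_\textbf{x}/(2\sigma_t^2)}\,d\textbf{y} \to 0$ as $\sigma_t \to 0$.

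Next I would extract the dominant behaviour near $\pi(\textbf{x})$ using Morse theory. By the ``Critical points and focal points'' theorem combined with the tubular neighbourhood hypothesis on $\textbf{x}$, the point $\pi(\textbf{x})$ is a non-degenerate critical point of $f_\textbf{x}$ of index zero, so the Morse lemma provides a chart $\phi=(\phi_1,\dots,\phi_k)$ on a neighbourhood $U$ of $\pi(\textbf{x})$, with $\phi(\pi(\textbf{x}))=0$, in which
\[ f_\textbf{x}(\textbf{y}) = f_\textbf{x}(\pi(\textbf{x})) + \sum_{i=1}^k \phi_i(\textbf{y})^2. \]
Choose $E$ to be a connected open neighbourhood of $\pi(\textbf{x})$ contained both in $U$ and in a ball of radius $\varepsilon$ around $\pi(\textbf{x})$, so that Lemma \ref{lemma:small_projection} continues to apply. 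In this chart the volume form of $\mathcal{M}$ pulls back to $\sqrt{\det g(\textbf{u})}\,d\textbf{u}$, with $g$ the induced metric; the density $\sqrt{\det g}$ is continuous and strictly positive at the origin. After the change of variables $\textbf{u}=\phi(\textbf{y})$ and restricting to a small ball $B_r(0)\subseteq \phi(E)$, the standard Gaussian integral asymptotics yield a bound
\[ \int_E e^{-f_\textbf{x}(\textbf{y})/(2\sigma_t^2)}\,d\textbf{y} \;\geq\; C_1\,\sigma_t^k\, e^{-f_\textbf{x}(\pi(\textbf{x}))/(2\sigma_t^2)} \]
for some constant $C_1>0$ and all $\sigma_t$ sufficiently small.

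For the complement I would use compactness. Assuming (by shrinking the tubular neighbourhood radius if necessary, below the reach of $\mathcal{M}$) that $\pi(\textbf{x})$ is the unique global minimiser of $f_\textbf{x}$ on $\mathcal{M}$, compactness of $\mathcal{M}\setminus E$ together with continuity of $f_\textbf{x}$ produces a gap $\delta>0$ such that $f_\textbf{x}(\textbf{y}) \geq f_\textbf{x}(\pi(\textbf{x})) + \delta$ for every $\textbf{y}\in\mathcal{M}\setminus E$. Hence
\[ \int_{\mathcal{M}\setminus E} e^{-f_\textbf{x}(\textbf{y})/(2\sigma_t^2)}\,d\textbf{y} \;\leq\; \mathrm{vol}(\mathcal{M})\, e^{-f_\textbf{x}(\pi(\textbf{x}))/(2\sigma_t^2)}\, e^{-\delta/(2\sigma_t^2)}, \]
and dividing by the lower bound above gives
\[ \frac{\int_{\mathcal{M}\setminus E} e^{-f_\textbf{x}/(2\sigma_t^2)}\,d\textbf{y}}{\int_E e^{-f_\textbf{x}/(2\sigma_t^2)}\,d\textbf{y}} \;\leq\; \frac{\mathrm{vol}(\mathcal{M})}{C_1}\,\sigma_t^{-k}\, e^{-\delta/(2\sigma_t^2)}, \]
which tends to zero since the exponential decay dominates the polynomial factor $\sigma_t^{-k}$.

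The main obstacle I anticipate is the global-minimum claim in the third paragraph: a compact embedded submanifold can have several locally closest points to a fixed exterior $\textbf{x}$ (think of antipodal points on a sphere), so the uniform gap $\delta$ is not automatic from $\pi(\textbf{x})$ being a critical point. I expect this to be resolved cleanly by restricting $\textbf{x}$ to a tubular neighbourhood of radius strictly less than the reach of $\mathcal{M}$, which forces $\pi(\textbf{x})$ to be the unique global minimiser of $f_\textbf{x}$ on $\mathcal{M}$; everything else is either standard Laplace asymptotics or a direct consequence of the Morse and tubular neighbourhood theorems already invoked in the excerpt.
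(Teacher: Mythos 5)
Your construction of $E$ is essentially the paper's: both proofs invoke the focal-point theorem to see that $\pi(\textbf{x})$ is a non-degenerate index-zero critical point of $f_{\textbf{x}}$, take a Morse chart intersected with $B(\pi(\textbf{x}),\varepsilon)$ so that Lemma \ref{lemma:small_projection} still applies, and use compactness of $\mathcal{M}$ together with uniqueness of the projection (which, as you correctly note, is exactly what the tubular-neighbourhood/reach hypothesis supplies) to separate $f_{\textbf{x}}$ on $E$ from $f_{\textbf{x}}$ on $\mathcal{M}\setminus E$. Where you diverge is the final limiting step. The paper applies the mean value theorem to each integral, writing it as $\mathrm{Vol}(\cdot)\exp\{-f_{\textbf{x}}(\textbf{y}^*)/2\sigma_t^2\}$ for intermediate points $\textbf{y}^*\in E$, $\tilde{\textbf{y}}^*\in\mathcal{M}\setminus E$, and concludes from $f_{\textbf{x}}(\tilde{\textbf{y}}^*)>f_{\textbf{x}}(\textbf{y}^*)$; since those intermediate points depend on $t$, that argument implicitly needs the difference to stay bounded away from zero. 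You instead prove a quantitative lower bound $\int_E \geq C_1\sigma_t^k e^{-f_{\textbf{x}}(\pi(\textbf{x}))/2\sigma_t^2}$ via the Gaussian integral in the Morse chart (using positivity of $\sqrt{\det g}$ at the origin) and an upper bound $\mathrm{vol}(\mathcal{M})\,e^{-(f_{\textbf{x}}(\pi(\textbf{x}))+\delta)/2\sigma_t^2}$ on the complement with a $t$-independent gap $\delta$, so the ratio is dominated by $\sigma_t^{-k}e^{-\delta/2\sigma_t^2}\to 0$. This costs you the extra observation about the volume density in the chart, but buys an explicit rate and removes any dependence on $t$-varying intermediate points; your version is, if anything, the more watertight of the two. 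No gaps.
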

\begin{proof}
Fix $\varepsilon > 0$. Then conditions of lemma \ref{lemma:small_projection} are satisfied inside $B(\pi(\textbf{x}), \varepsilon)$. We have demonstrated that $\pi(\textbf{x})$ fulfills the criteria stipulated by the Morse lemma. Consequently, we can pick $\tilde{U}$ as the neighborhood and $\phi$ as the coordinate system that the Morse lemma provides. Now let $U = \tilde{U} \cap B(\pi(\textbf{x}), \varepsilon)$. Since $\mathcal{M}$ is compact and $U$ is open, there exists $m = \min_{\mathcal{M} \setminus U} f_\textup{\textbf{x}}(\textbf{y})$ and by uniqueness of projection $f_\textup{\textbf{x}}(\pi(\textbf{x})) < m$. 
Let $r= \sqrt{m - (f_\textup{\textbf{x}}(\pi(\textbf{x}))) / 2}$ and let $E = \phi^{-1}(B(0, r))$. For all $\textbf{y} \in E$ we have:
$$f_\textup{\textbf{x}}(\textbf{y}) = f_\textup{\textbf{x}}(\pi(\textbf{x})) + \norm{\phi(\textbf{y})}^2 <   f_\textup{\textbf{x}}(\pi(\textbf{x})) + r^2 < m.$$ 
Notice that for every $\textbf{y} \in U \setminus E$ we have $f_\textup{\textbf{x}}(\textbf{y}) \geq f_\textup{\textbf{x}}(\pi(\textbf{x})) + r^2$. Therefore we have established that
\begin{gather}
\label{eq:less}
   \forall_{\textbf{y} \in E} \forall_{\tilde{\textbf{y}} \in \mathcal{M} \setminus E } f_\textup{\textbf{x}}(\textbf{y}) < f_\textup{\textbf{x}}(\tilde{\textbf{y}}). 
\end{gather}
Computing directly, we have   
\begin{gather}
\label{eq:normals2}
    \frac{\int_{\mathcal{M} \setminus E}\mathcal{N}(\textbf{y} | \textbf{x}, \sigma^2_t\textbf{I}) d\textbf{y}}{ \int_E \mathcal{N}(\textbf{y} | \textbf{x}, \sigma^2_t\textbf{I}) d\textbf{y}}
    = \frac{\int_{\mathcal{M} \setminus E} \exp\{{-f_\textup{\textbf{x}}(\textbf{y})/2\sigma_t^2}\} d\textbf{y}}{ \int_E \exp \{{-f_\textup{\textbf{x}}(\textbf{y})/2\sigma_t^2} \}d\textbf{y}} 
\end{gather}
By the mean value theorem there exists $\textbf{y}^*\in E$ and $\tilde{\textbf{y}}^*\in \mathcal{M} \setminus E$ such that 
\begin{align*}
    \int_E \exp \{{-f_\textup{\textbf{x}}(\textbf{y})/2\sigma_t^2} \}d\textbf{y} &= \text{Vol}(E)\exp\{-f_\textup{\textbf{x}}(\textbf{y}^*)/2\sigma_t^2 \} \\
    \int_{\mathcal{M} \setminus E} \exp \{{-f_\textup{\textbf{x}}(\textbf{y})/2\sigma_t^2} \}d\textbf{y} &= \text{Vol}(\mathcal{M} \setminus E)\exp\{-f_\textup{\textbf{x}}(\tilde{\textbf{y}}^*)/2\sigma_t^2 \}
\end{align*}
We can use this to evaluate (\ref{eq:normals2}) to give,
\begin{gather*}
    \frac{\int_{\mathcal{M} \setminus E}\mathcal{N}(\textbf{y} | \textbf{x}, \sigma^2_t\textbf{I}) d\textbf{y}}{ \int_E \mathcal{N}(\textbf{y} | \textbf{x}, \sigma^2_t\textbf{I}) d\textbf{y}} 
    = \frac{\text{Vol}(\mathcal{M} \setminus E) \exp \{{-f_\textup{\textbf{x}}(\tilde{\textbf{y}}^*)/2\sigma_t^2} \}}{\text{Vol}(E)\exp\{-f_\textup{\textbf{x}}(\textbf{y}^*)/2\sigma_t^2\} }
    =  \frac{\text{Vol}(\mathcal{M} \setminus E) }{\text{Vol}(E)} \exp\left\{-\frac{f_\textup{\textbf{x}}(\tilde{\textbf{y}}^*) - f_\textup{\textbf{x}}(\textbf{y}^*)}{2 \sigma_t^2}   \right\}
\end{gather*}
Since by (\ref{eq:less}) $f_\textup{\textbf{x}}(\tilde{\textbf{y}}^*) - f_\textup{\textbf{x}}(\textbf{y}^*) > 0$  the above goes to zero as $\sigma_t$ goes to zero. Moreover, since $E \subseteq U \subseteq B(\pi(\textbf{x}), \varepsilon)$, the conditions of lemma \ref{lemma:small_projection} are also satisfied.
\end{proof}

\subsection*{Proof of Theorem \ref{thm:master_thm}}
Fix $\varepsilon > 0$. Assume that $\textbf{x}$ is in a tubular neighbourhood of $\mathcal{M}$, so that the projection $\pi(\textbf{x})$ exists.
\begin{gather}
\label{eq:begin_proof}
    \frac{\boldsymbol{\nu}^T \nabla_\textbf{x} \ln p_t(\textbf{x})}{\textbf{n}^T \nabla_\textbf{x} \ln p_t(\textbf{x})}   = \frac{\boldsymbol{\nu}^T \nabla_\textbf{x}  p_t(\textbf{x})}{\textbf{n}^T \nabla_\textbf{x}  p_t(\textbf{x})} = \frac{\int_\mathcal{M} \boldsymbol{\nu}^T (\textbf{y}-\textbf{x}) \mathcal{N}(\textbf{y} | \textbf{x}, \sigma^2_t\textbf{I}) p_0(\textbf{y}) d\textbf{y}}{\int_\mathcal{M} \textbf{n}^T (\textbf{y}-\textbf{x}) \mathcal{N}(\textbf{y} | \textbf{x}, \sigma^2_t\textbf{I}) p_0(\textbf{y}) d\textbf{y}}
\end{gather}
Split $\mathcal{M}$ into two parts: $E$ and $\mathcal{M} \setminus E$, where $E=B(\textbf{x},r)\cap \mathcal{M}$ is an open neighbourhood of $\pi(\textbf{x})$ in $\mathcal{M}$ satisfying the conditions of Lemma \ref{lemma:gaussian_concentration} and Lemma \ref{lemma:small_projection} for a chosen $\varepsilon>0$, then we have that \eqref{eq:begin_proof} is equal to
\begin{gather}
\label{eq:two_terms}
   \frac{\int_{E} \boldsymbol{\nu}^T (\textbf{y}-\textbf{x}) \mathcal{N}(\textbf{y} | \textbf{x}, \sigma^2_t\textbf{I}) p_0(\textbf{y}) d\textbf{y}}{\int_\mathcal{M} \textbf{n}^T (\textbf{y}-\textbf{x}) \mathcal{N}(\textbf{y} | \textbf{x}, \sigma^2_t\textbf{I}) p_0(\textbf{y}) d\textbf{y}} + \frac{\int_{\mathcal{M} \setminus E} \boldsymbol{\nu}^T (\textbf{y}-\textbf{x}) \mathcal{N}(\textbf{y} | \textbf{x}, \sigma^2_t\textbf{I}) p_0(\textbf{y}) d\textbf{y}}{\int_\mathcal{M} \textbf{n}^T (\textbf{y}-\textbf{x}) \mathcal{N}(\textbf{y} | \textbf{x}, \sigma^2_t\textbf{I}) p_0(\textbf{y}) d\textbf{y}}.
\end{gather}
We begin by bounding the first term:
\begin{gather*}
    \frac{\int_{E} \boldsymbol{\nu}^T (\textbf{y}-\textbf{x}) \mathcal{N}(\textbf{y} | \textbf{x}, \sigma^2_t\textbf{I}) p_0(\textbf{y}) d\textbf{y}}{\int_\mathcal{M} \textbf{n}^T (\textbf{y}-\textbf{x}) \mathcal{N}(\textbf{y} | \textbf{x}, \sigma^2_t\textbf{I}) p_0(\textbf{y}) d\textbf{y}} 
    = \frac{\int_{E} \boldsymbol{\nu}^T (\textbf{y}-\textbf{x}) \mathcal{N}(\textbf{y} | \textbf{x}, \sigma^2_t\textbf{I}) p_0(\textbf{y}) d\textbf{y}}{\int_E \textbf{n}^T (\textbf{y}-\textbf{x}) \mathcal{N}(\textbf{y} | \textbf{x}, \sigma^2_t\textbf{I}) p_0(\textbf{y}) d\textbf{y} + \int_{\mathcal{M} \setminus E} \textbf{n}^T (\textbf{y}-\textbf{x}) \mathcal{N}(\textbf{y} | \textbf{x}, \sigma^2_t\textbf{I}) p_0(\textbf{y}) d\textbf{y}} \\
    = \frac{ \frac{\int_{E} \boldsymbol{\nu}^T (\textbf{y}-\textbf{x}) \mathcal{N}(\textbf{y} | \textbf{x}, \sigma^2_t\textbf{I}) p_0(\textbf{y}) d\textbf{y}}{\int_{E} \textbf{n}^T (\textbf{y}-\textbf{x})\mathcal{N}(\textbf{y} | \textbf{x}, \sigma^2_t\textbf{I}) p_0(\textbf{y}) d\textbf{y}}  }{1 + \frac{ \int_{\mathcal{M} \setminus E} \textbf{n}^T (\textbf{y}-\textbf{x})\mathcal{N}(\textbf{y} | \textbf{x}, \sigma^2_t\textbf{I}) p_0(\textbf{y}) d\textbf{y}}{\int_{E} \textbf{n}^T (\textbf{y}-\textbf{x})\mathcal{N}(\textbf{y} | \textbf{x}, \sigma^2_t\textbf{I}) p_0(\textbf{y}) d\textbf{y}} }
    =: \frac{A_t}{1+B_t}
\end{gather*}
For $A_t$ we have
\begin{gather*}
    |A_t| = \bigg| \frac{\int_{E} \boldsymbol{\nu}^T (\textbf{y}-\textbf{x}) \mathcal{N}(\textbf{y} | \textbf{x}, \sigma^2_t\textbf{I}) p_0(\textbf{y}) d\textbf{y}}{\int_{E} \textbf{n}^T (\textbf{y}-\textbf{x})\mathcal{N}(\textbf{y} | \textbf{x}, \sigma^2_t\textbf{I}) p_0(\textbf{y}) d\textbf{y}} \bigg|. 
\end{gather*}
Using the fact that $\textbf{n}^T (\textbf{y}-\textbf{x})$ is positive and bounded away from zero and applying the triangle inequality we obtain
\begin{align*}
    |A_t| &\leq \frac{\int_{E} |\boldsymbol{\nu}^T (\textbf{y}-\textbf{x})| \mathcal{N}(\textbf{y} | \textbf{x}, \sigma^2_t\textbf{I}) p_0(\textbf{y}) d\textbf{y}}{\int_{E} \textbf{n}^T (\textbf{y}-\textbf{x})\mathcal{N}(\textbf{y} | \textbf{x}, \sigma^2_t\textbf{I}) p_0(\textbf{y}) d\textbf{y}} \\
        &\leq \frac{ \varepsilon p_\text{max} \int_{E} \mathcal{N}(\textbf{y} | \textbf{x}, \sigma^2_t\textbf{I}) d\textbf{y}}{ (\norm{\pi(\textbf{x}) - \textbf{x}} - \varepsilon ) p_\text{min}\int_{E}
    \mathcal{N}(\textbf{y} | \textbf{x}, \sigma^2_t\textbf{I}) d\textbf{y}} \\
    &= \frac{p_\text{max}}{p_\text{min}} \frac{ \varepsilon}{\norm{\pi(\textbf{x}) - \textbf{x}} - \varepsilon },
\end{align*}
where in the second inequality we used that $0<p_\text{min} < p_0(\textbf{y}) < p_\text{max}$ , $|\boldsymbol{\nu}^T(\textbf{y}-\textbf{x})| < \varepsilon$ and $\textbf{n}^T(\textbf{y}-\textbf{x}) > \norm{\pi(\textbf{x}) - \textbf{x}} - \varepsilon $ on $E$.
Since $\varepsilon$ was arbitrary this term can be made arbitrary small.
Now we move to $B_t$. Let $D = \max_{\textbf{y} \in \mathcal{M}} \norm{\textbf{x} - \textbf{y}}$, by the triangle and Cauchy Schwarz inequalities we have
\begin{align*}
    |B_t| &\leq \frac{ \int_{\mathcal{M} \setminus E} |\textbf{n}^T (\textbf{y}-\textbf{x})|\mathcal{N}(\textbf{y} | \textbf{x}, \sigma^2_t\textbf{I}) p_0(\textbf{y}) d\textbf{y}}{\int_{E} \textbf{n}^T (\textbf{y}-\textbf{x})\mathcal{N}(\textbf{y} | \textbf{x}, \sigma^2_t\textbf{I}) p_0(\textbf{y}) d\textbf{y}} \\
    &\leq \frac{ p_\text{max} D}{ p_\text{min} (\norm{\pi(\textbf{x}) - \textbf{x}} - \varepsilon)} \frac{\int_{\mathcal{M} \setminus E}  \mathcal{N}(\textbf{y} | \textbf{x}, \sigma^2_t\textbf{I}) d\textbf{y}}{\int_{E}\mathcal{N}(\textbf{y} | \textbf{x}, \sigma^2_t\textbf{I}) d\textbf{y}}.
\end{align*}
which goes to zero as $t$ goes to zero. Finally we move to the second term in (\ref{eq:two_terms}), we start with the same steps as with the first term
\begin{gather*}
    \frac{\int_{\mathcal{M} \setminus E} \boldsymbol{\nu}^T (\textbf{y}-\textbf{x}) \mathcal{N}(\textbf{y} | \textbf{x}, \sigma^2_t\textbf{I}) p_0(\textbf{y}) d\textbf{y}}{\int_\mathcal{M} \textbf{n}^T (\textbf{y}-\textbf{x}) \mathcal{N}(\textbf{y} | \textbf{x}, \sigma^2_t\textbf{I}) p_0(\textbf{y}) d\textbf{y}}
    = \frac{ \frac{\int_{\mathcal{M} \setminus E} \boldsymbol{\nu}^T (\textbf{y}-\textbf{x}) \mathcal{N}(\textbf{y} | \textbf{x}, \sigma^2_t\textbf{I}) p_0(\textbf{y}) d\textbf{y}}{\int_{E} \textbf{n}^T (\textbf{y}-\textbf{x})\mathcal{N}(\textbf{y} | \textbf{x}, \sigma^2_t\textbf{I}) p_0(\textbf{y}) d\textbf{y}}  }{1 + \frac{ \int_{\mathcal{M} \setminus E} \textbf{n}^T (\textbf{y}-\textbf{x})\mathcal{N}(\textbf{y} | \textbf{x}, \sigma^2_t\textbf{I}) p_0(\textbf{y}) d\textbf{y}}{\int_{E} \textbf{n}^T (\textbf{y}-\textbf{x})\mathcal{N}(\textbf{y} | \textbf{x}, \sigma^2_t\textbf{I}) p_0(\textbf{y}) d\textbf{y}} } 
    =: \frac{C_t}{1 + B_t}
\end{gather*}
so we only need to examine $C_t$. By the triangle and Cauchy Schwarz inequalities again, we have
\begin{align*}
    |C_t| &\leq \frac{\int_{\mathcal{M} \setminus E} |\boldsymbol{\nu}^T (\textbf{y}-\textbf{x})| \mathcal{N}(\textbf{y} | \textbf{x}, \sigma^2_t\textbf{I}) p_0(\textbf{y}) d\textbf{y}}{\int_{E} \textbf{n}^T (\textbf{y}-\textbf{x})\mathcal{N}(\textbf{y} | \textbf{x}, \sigma^2_t\textbf{I}) p_0(\textbf{y}) d\textbf{y}} \\
    &\leq \frac{ p_\text{max} D}{ p_\text{min} (\norm{\pi(\textbf{x}) - \textbf{x}} - \varepsilon)} \frac{\int_{\mathcal{M} \setminus E}  \mathcal{N}(\textbf{y} | \textbf{x}, \sigma^2_t\textbf{I}) d\textbf{y}}{\int_{E}\mathcal{N}(\textbf{y} | \textbf{x}, \sigma^2_t\textbf{I}) d\textbf{y}}
\end{align*}
which goes to zero as $t$ goes to zero. 
Putting all together
\begin{align*}
    \frac{|\boldsymbol{\nu}^T \nabla_\textbf{x} \ln p_t(\textbf{x})|}{|\textbf{n}^T \nabla_\textbf{x} \ln p_t(\textbf{x})|} 
    &= \frac{|A_t + C_t|}{|1 + B_t|} \\
      &\leq \frac{1}{|1 + B_t|} \bigg(  \frac{p_\text{max}}{p_\text{min}} \frac{ \varepsilon}{\norm{\pi(\textbf{x}) - \textbf{x}} - \varepsilon } + |C_t| \bigg) \\
      &\xrightarrow[t \to 0]{} \frac{p_\text{max}}{p_\text{min}} \frac{ \varepsilon}{\norm{\pi(\textbf{x}) - \textbf{x}} - \varepsilon }  
\end{align*}
Since $\varepsilon$ can be chosen arbitrarily small this finishes the proof. \qed

\subsection*{Proof of Theorem \ref{thm:score_orthogonal}}
Beginning with $\textbf{n}$ and extending to an orthonormal basis ($\textbf{n}$, $\boldsymbol{\nu}_1$, ... , $\boldsymbol{\nu}_{d-1}$) of $\mathbb{R}^d$, we have
\begin{gather*}
    \text{S}_{\cos} (\textbf{n}, \nabla_\textbf{x} \ln p_t(\textbf{x})) = \frac{\textbf{n}^T\nabla_\textbf{x} \ln p_t(\textbf{x})}{\norm{\nabla_\textbf{x} \ln p_t(\textbf{x})}} 
    = \frac{\textbf{n}^T(\nabla_\textbf{x} \ln p_t(\textbf{x}))}{\sqrt{ (\textbf{n}^T\nabla_\textbf{x} \ln p_t(\textbf{x}))^2 + \sum_{i=1}^{d-1}(\boldsymbol{\nu}_i^T\nabla_\textbf{x} \ln p_t(\textbf{x}))^2 }} \\
    = \frac{1}{\sqrt{ 1 + \sum_{i=1}^{d-1}\big( \frac{\boldsymbol{\nu}_i^T\nabla_\textbf{x} \ln p_t(\textbf{x})}{\textbf{n}^T\nabla_\textbf{x} \ln p_t(\textbf{x})} \big)^2 }} 
     \xrightarrow[t \to 0]{} 1,
\end{gather*}
where in taking the limit we applied the Theorem \ref{thm:master_thm}. \qed

\subsection*{Proof of Corollary \ref{cor:score_ratio}}
Let $(\boldsymbol{\tau}_1, ..., \boldsymbol{\tau}_k)$ be an orthonormal basis of the tangent space $T_{\pi(\textbf{x})}\mathcal{M}$ and extend $\textbf{n}$ to an orthonormal basis $(\textbf{n}, \boldsymbol{\eta}_1, ..., \boldsymbol{\eta}_{d-k-1})$ of the normal space $\mathcal{N}_{\pi(\textbf{x})}\mathcal{M}$.
Then the projection of the score on the tangent space is given by $\mathbf{T} \nabla_\textbf{x} \ln p_t(\textbf{x}) = \sum_{i=1}^k (\boldsymbol{\tau}_i^T\nabla_\textbf{x} \ln p_t(\textbf{x})) \boldsymbol{\tau}_i$, while the projection on the normal space is  $\mathbf{N} \nabla_\textbf{x} \ln p_t(\textbf{x}) = (\textbf{n}^T\nabla_\textbf{x} \ln p_t(\textbf{x})) \textbf{n} +  \sum_{i=1}^{d-k-1} (\boldsymbol{\eta}_i^T\nabla_\textbf{x} \ln p_t(\textbf{x})) \boldsymbol{\eta}_i$. Therefore
\begin{gather*}
    \frac{\norm{\mathbf{T} \nabla_\textbf{x} \ln p_t(\textbf{x}) }}{\norm{\mathbf{N} \nabla_\textbf{x} \ln p_t(\textbf{x}) }} = \sqrt{\frac{\sum_{i=1}^k (\boldsymbol{\tau}_i^T\nabla_\textbf{x} \ln p_t(\textbf{x}))^2 }{ (\textbf{n}^T\nabla_\textbf{x} \ln p_t(\textbf{x}))^2 +  \sum_{i=1}^{d-k-1} (\boldsymbol{\eta}_i^T\nabla_\textbf{x} \ln p_t(\textbf{x}))^2 }}
    \\ = \sqrt{\frac{\sum_{i=1}^k \big( \frac{\boldsymbol{\tau}_i^T\nabla_\textbf{x} \ln p_t(\textbf{x})}{\textbf{n}^T\nabla_\textbf{x} \ln p_t(\textbf{x})} \big)^2 }{ 1 +  \sum_{i=1}^{d-k-1} \big( \frac{\boldsymbol{\eta}_i^T\nabla_\textbf{x} \ln p_t(\textbf{x})}{\textbf{n}^T\nabla_\textbf{x} \ln p_t(\textbf{x})} \big)^2 }}
    \xrightarrow[t \to 0]{} 0
\end{gather*}
where in taking the limit we applied the Theorem \ref{thm:master_thm}. \qed

\newpage

\section{Details on the design of synthetic image manifolds}\label{Appendix: design of synthetic image manifolds}

\textbf{Squares image manifold:} We generated the \textit{k-squares dataset} whose intrinsic dimension is controllable and is set to $k$. This dataset comprises 32 × 32 pixel images of squares, generated by first establishing fixed square center locations and side lengths (either 3 or 5 units) across all datapoints. For each square in a given image, we uniformly sampled a brightness value from the unit interval for all pixels within the square's boundary, summing values at points of intersection. The dimension of this manifold is equal to the number of squares $k$ and the ambient space dimension is $32\times32=1024$. We provide samples in Figure \ref{fig:squares}.

\textbf{Gaussian blobs image manifold:} The Squares image manifold is contained in a low dimensional linear subspace which allowed PPCA to estimate the dimension very well. For this reason, we constructed a synthetic image manifold of known dimension, which cannot be contained in any low dimensional linear subspace.  We replace the squares by Gaussian blobs (i.e. brightness of pixels withing each blob is proportional to a Gaussian pdf). We randomly pick the centers of the Gaussian blobs which remain fixed for all datapoints. For each datapoint, we sample the standard deviation of each Gaussian blob uniformly from the interval $[1,5]$. The dimension of this manifold is equal to the number of Gaussian blobs. We provide samples in Figure \ref{fig:gaussian-blobs}.

\begin{figure}[h]
     \centering
     \begin{minipage}{.3\textwidth}
         \centering
         \includegraphics[width=.75\columnwidth]{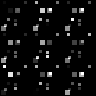}
     \end{minipage}
     \begin{minipage}{.3\textwidth}
         \centering
         \includegraphics[width=.75\columnwidth]{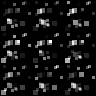}
     \end{minipage}
     \begin{minipage}{.3\textwidth}
         \centering
         \includegraphics[width=.75\columnwidth]{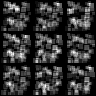}
     \end{minipage} 
     \caption{Nine samples from Squares image manifolds of dimensions 10, 20 and 100 (from left to right).}
     \label{fig:squares}
     \vspace{10pt}
    \begin{minipage}{.3\textwidth}
         \centering
         \includegraphics[width=.75\columnwidth]{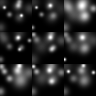}
     \end{minipage}
     \begin{minipage}{.3\textwidth}
         \centering
         \includegraphics[width=.75\columnwidth]{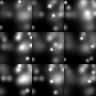}
     \end{minipage}
     \begin{minipage}{.3\textwidth}
         \centering
         \includegraphics[width=.75\columnwidth]{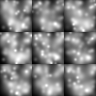}
     \end{minipage} 
     \caption{Nine samples from Gaussian blob image manifolds of dimensions 10, 20 and 100 (from left to right).}
     \label{fig:gaussian-blobs}     
\end{figure}

\newpage

\section{Additional Experimental Results}
\label{appendix:additional_experimental_results}

\subsection{Euclidean Data}
\label{appendix:additional_euclidean}

\begin{figure}[H]
\begin{minipage}[t]{.45\textwidth}
    \centering
    \includegraphics[width=.99\textwidth]{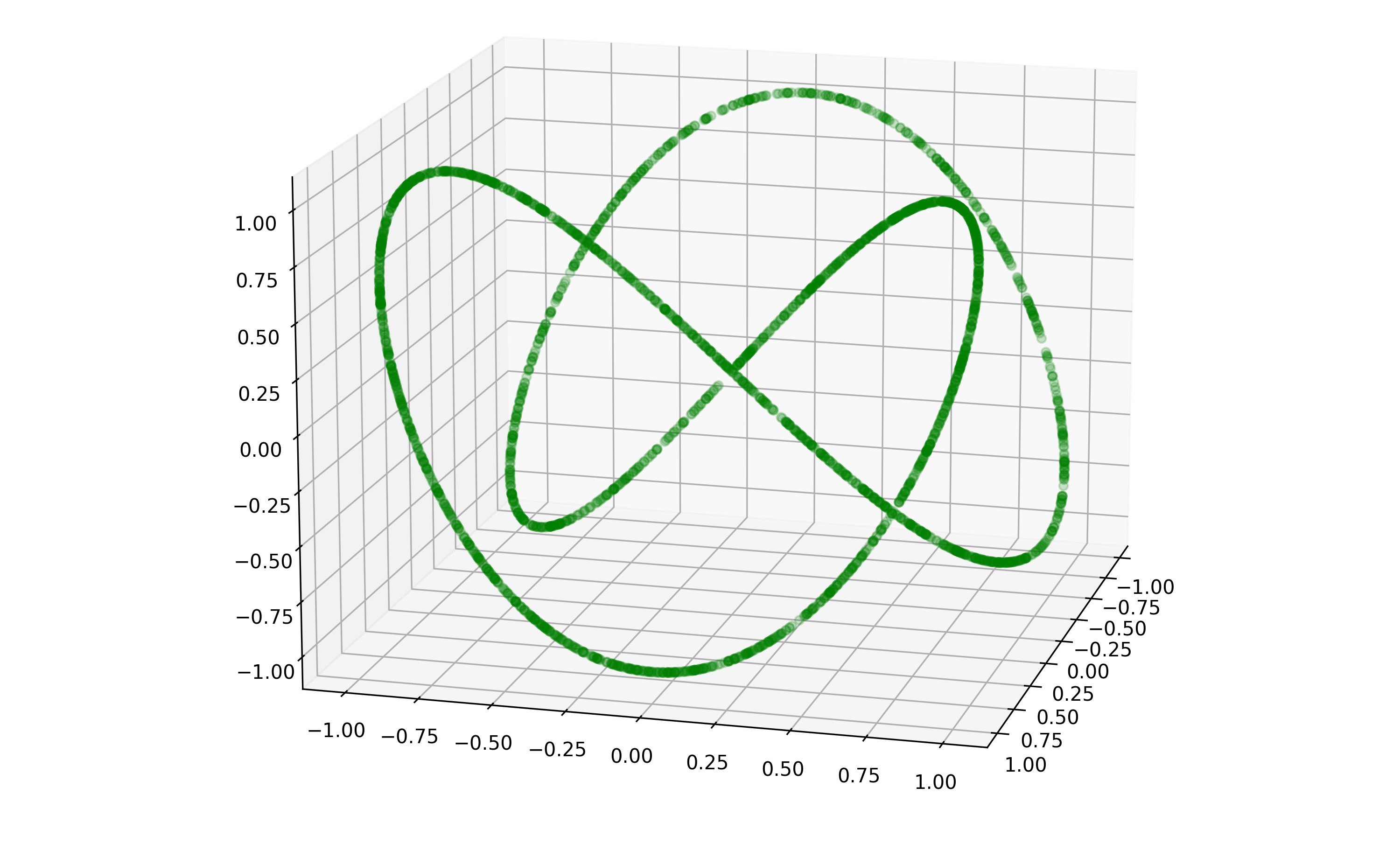}
    \caption{Projection of the spaghetti line on the first three dimensions.}
    \label{fig:line_samples}
\end{minipage}
\hspace{5mm}
\begin{minipage}[t]{.45\textwidth}
    \centering
    \includegraphics[width=.99\textwidth]{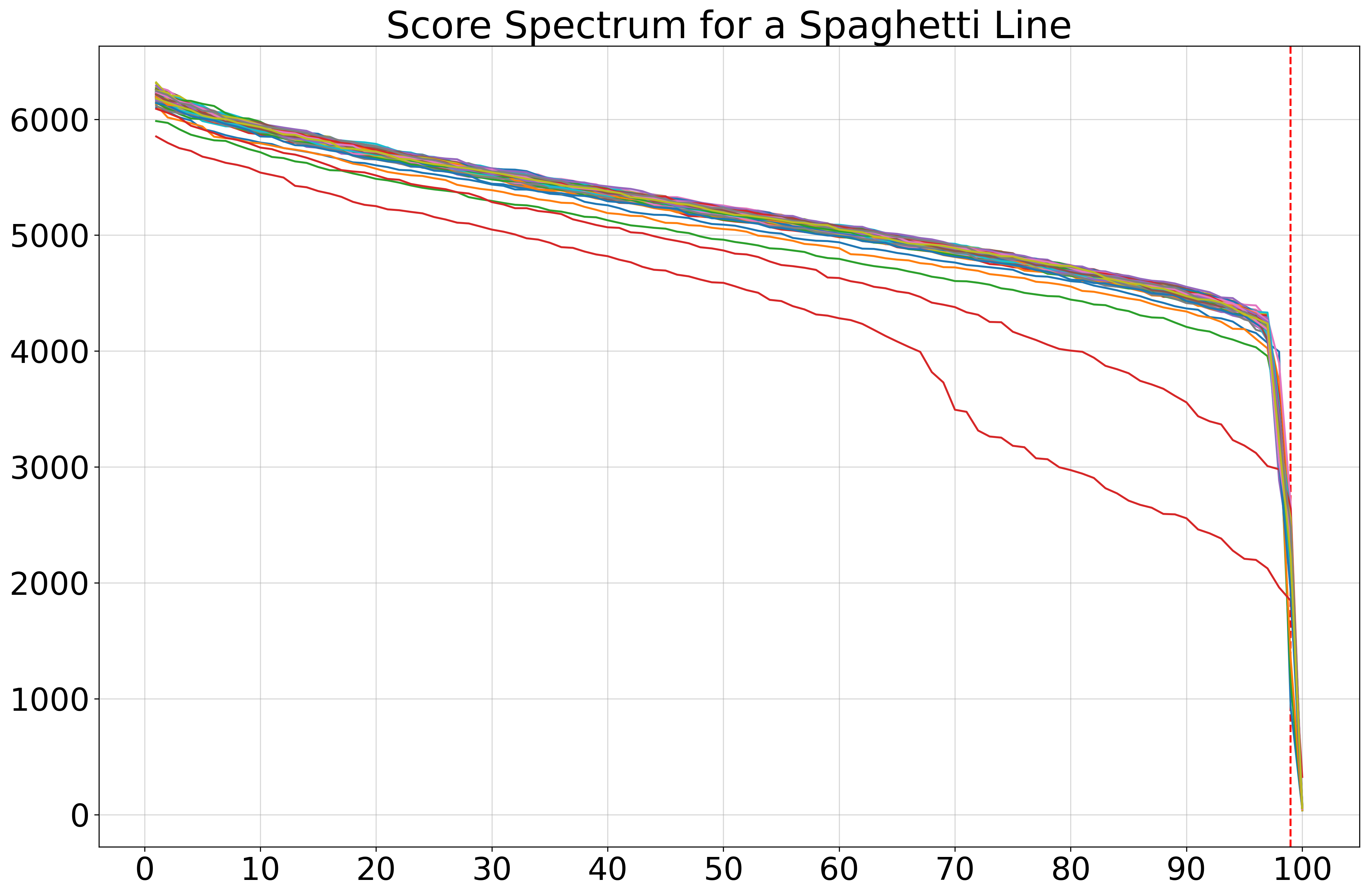}
    \caption{Score spectrum of the spaghetti line. The last singular value clearly vanishes indicating that the intrinsic dimensionality of the manifold is equal to one.}
    \label{fig:line}
\end{minipage}
\end{figure}

\begin{figure}[H]
\begin{minipage}[t]{.45\textwidth}
    \centering
    \includegraphics[width=\linewidth]{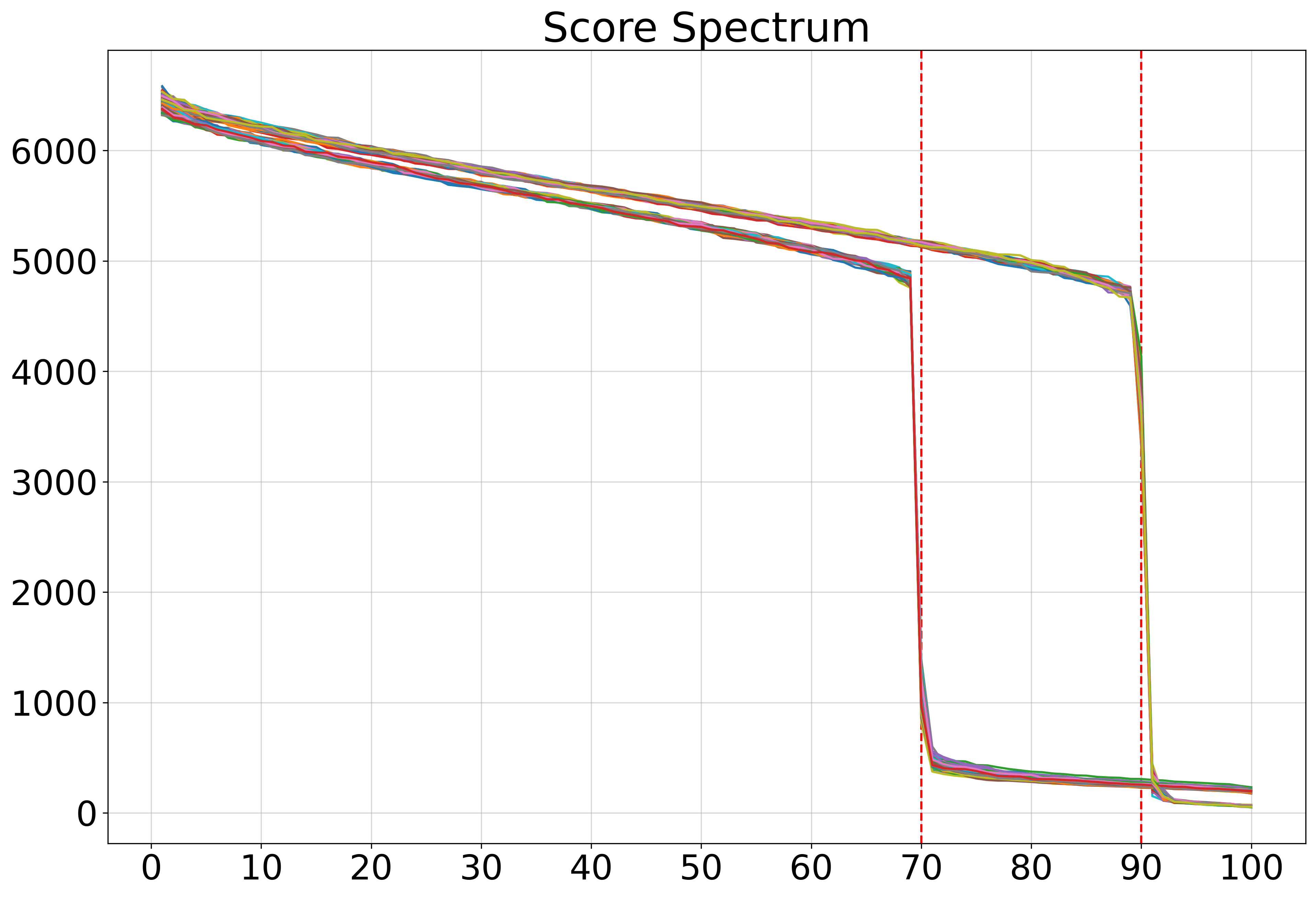}
    \caption{Score spectrum for the union of $k$-spheres ($k_1=10, k_2=30$). The separated drops in the spectra clearly show that the data comes form the union of two manifolds of different dimensions.}
    \label{fig:unions_spectrum}
\end{minipage}
\hspace{5mm}
\begin{minipage}[t]{.45\textwidth}
   \centering
    \includegraphics[width=0.99\textwidth]{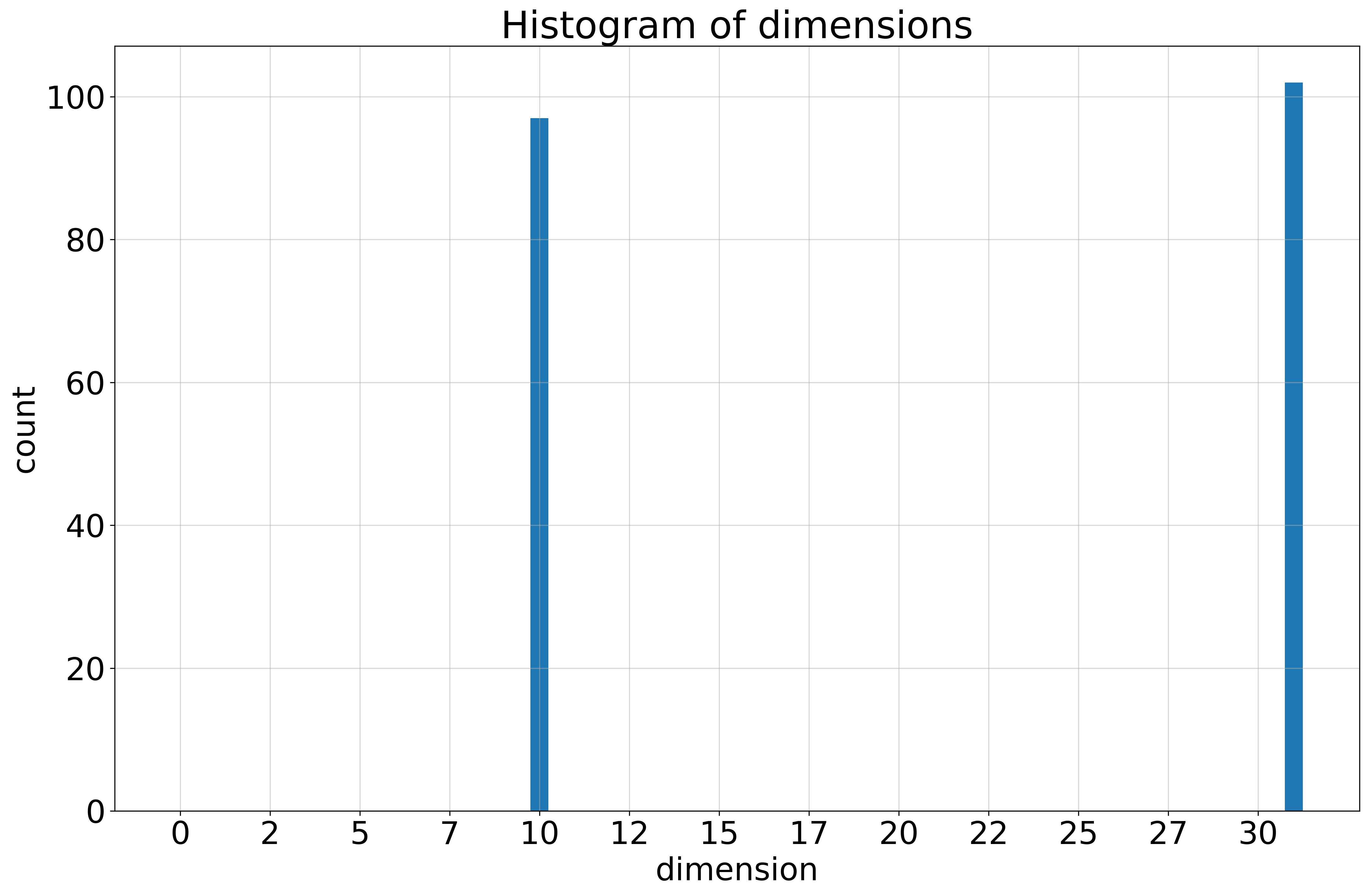}
    \caption{The histogram of estimated dimensions for the union of $k$-spheres ($k_1=10, k_2=30$). The counts are taken over estimates $\hat{k}(\textbf{x}_0^{(i)})$ at different points $\textbf{x}_0^{(i)}$.}
    \label{fig:union_dims}
\end{minipage}
\end{figure}

\subsection{Synthetic Image Data}
\label{appendix:additional_image}

\begin{figure}[H]
\centering
\begin{minipage}[t]{.45\textwidth}
    \centering
    \includegraphics[width=.95\textwidth]{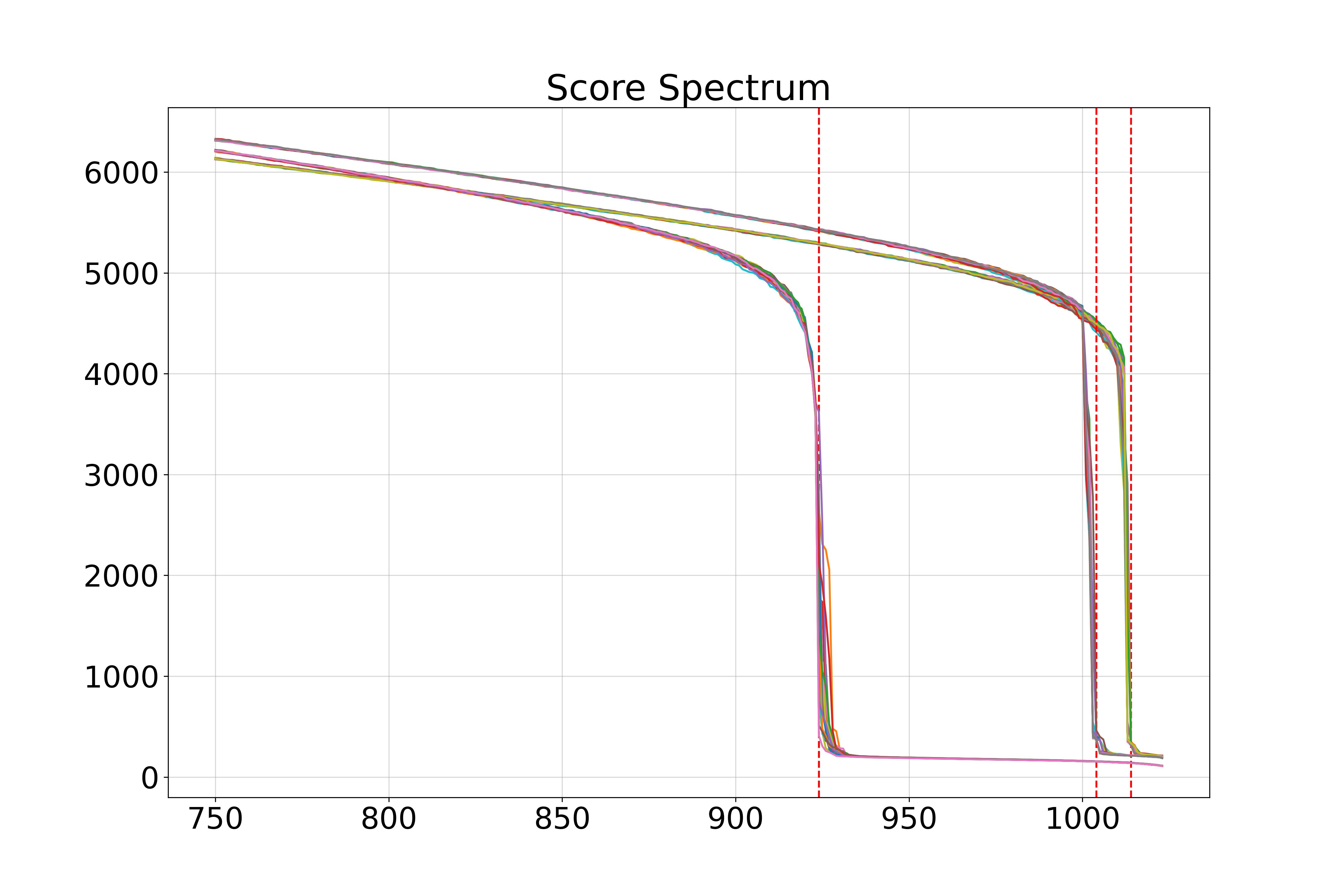}\\
    \includegraphics[width=.95\textwidth]{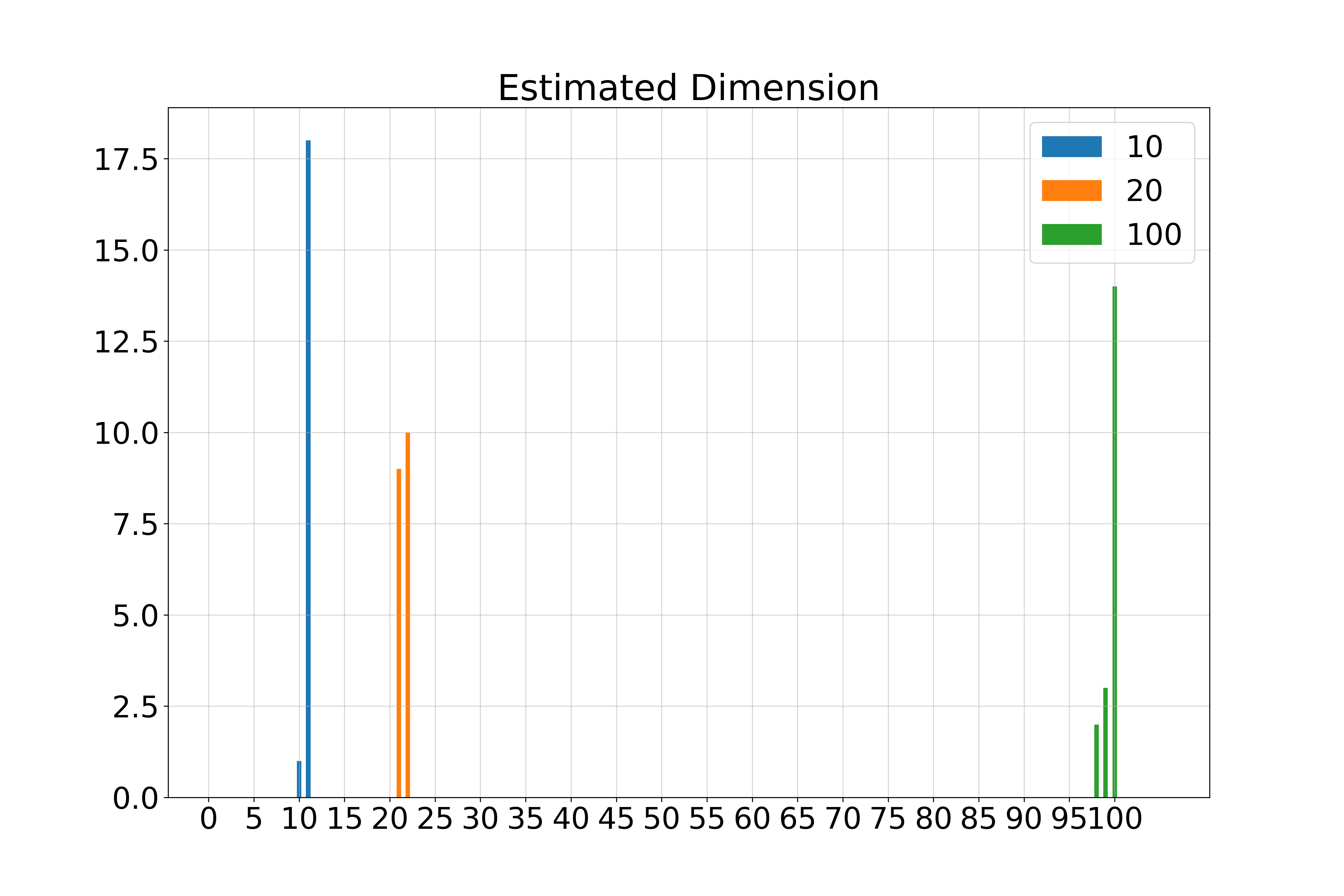}
    \caption{Score spectra and histogram of estimated dimension based on the score spectrum of the Squares image manifold of dimensions 10, 20 and 100.}
    \label{fig:squares_spectrum}
\end{minipage}
\hspace{10mm}
\begin{minipage}[t]{.45\textwidth}
    \centering
    \includegraphics[width=.95\textwidth]{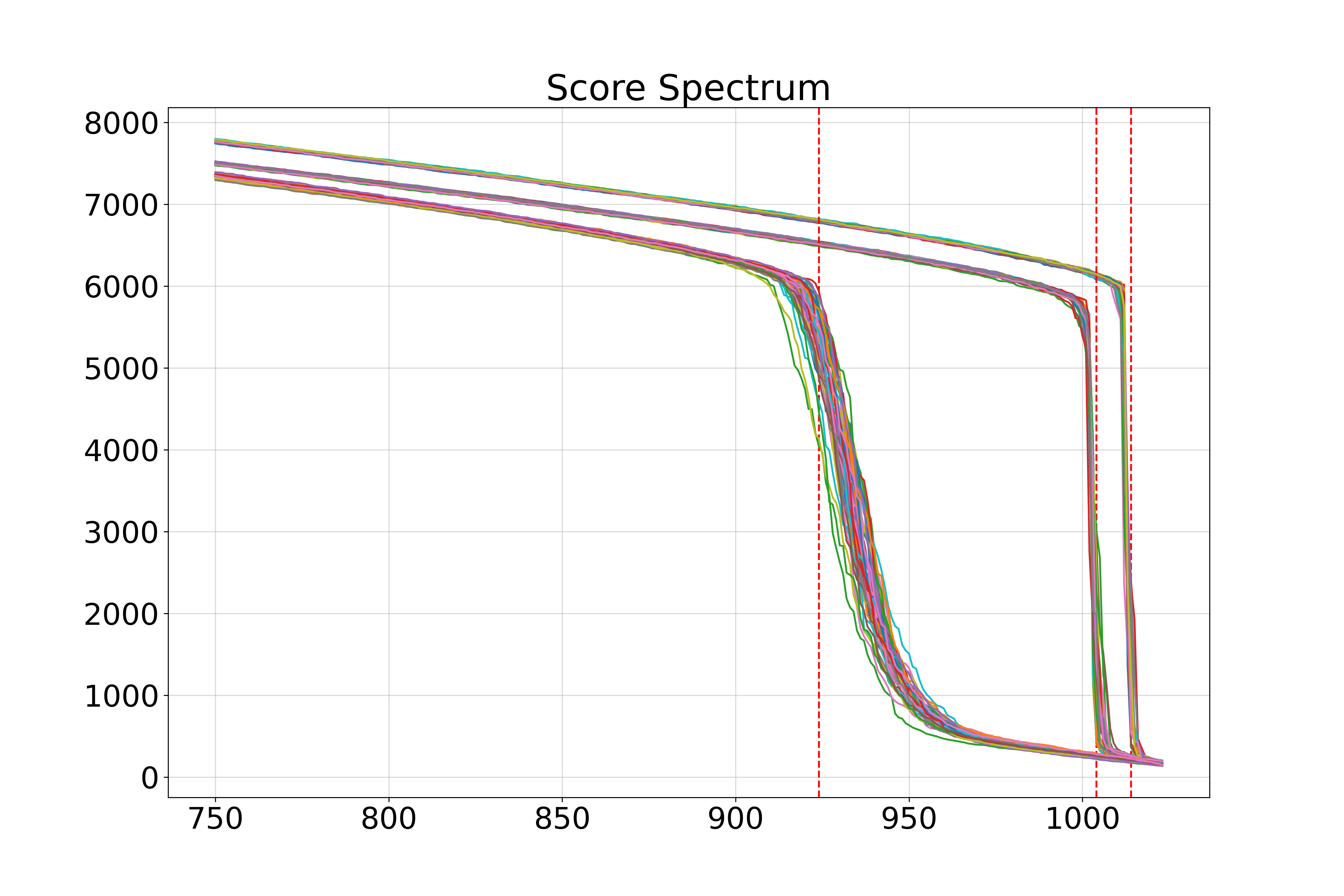}\\
    \includegraphics[width=.95\textwidth]{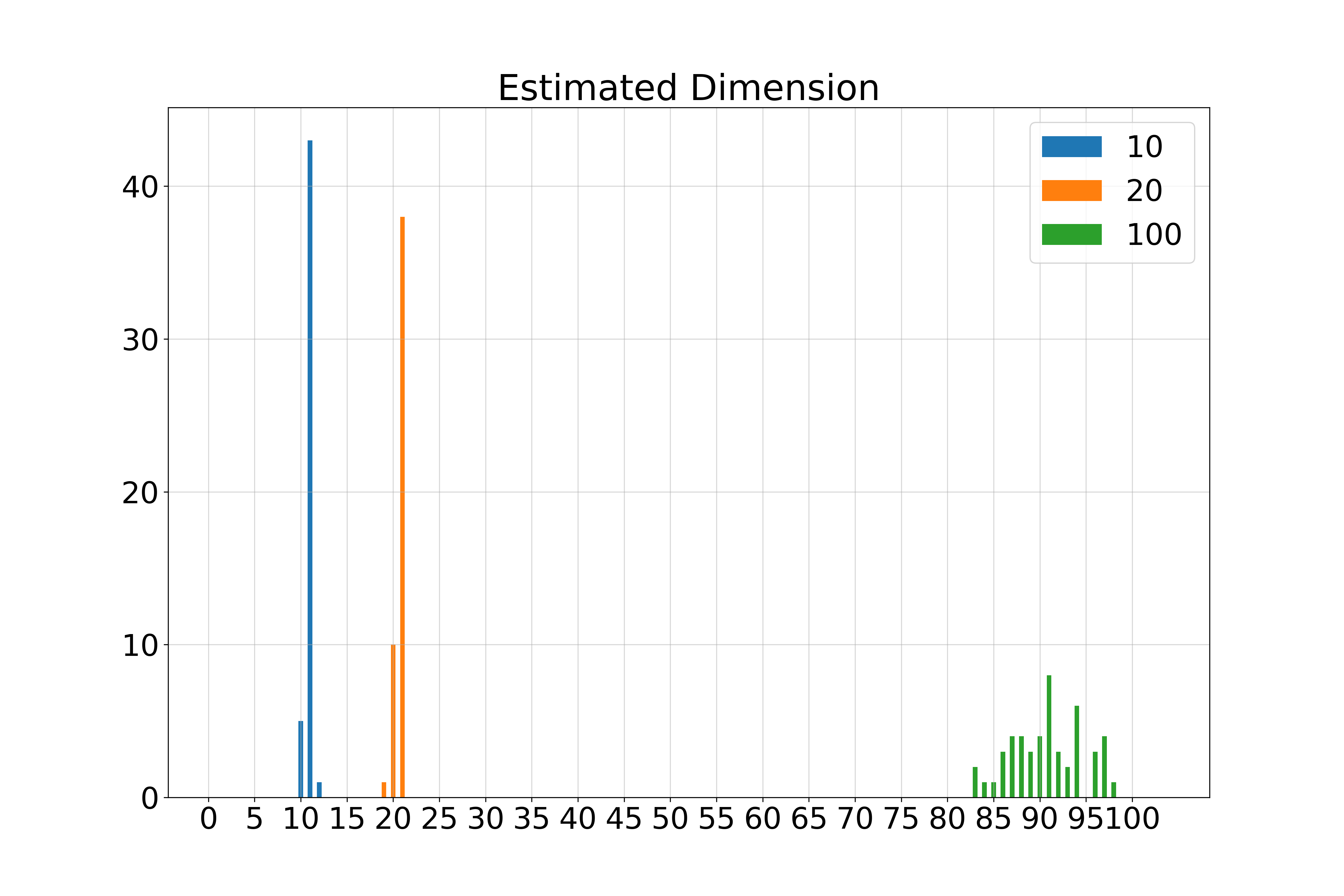}
    \caption{Score spectra and histogram of estimated dimension based on the score spectrum of the Gaussian blobs image manifold of dimensions 10, 20 and 100.}
    \label{fig:gaussians_spectrum}
\end{minipage}
\end{figure}

\subsection{MNIST}
\label{sec:Additional Experimental Results for MNIST}
\begin{figure}[h!]
    \centering
    \includegraphics[width=0.99\textwidth]{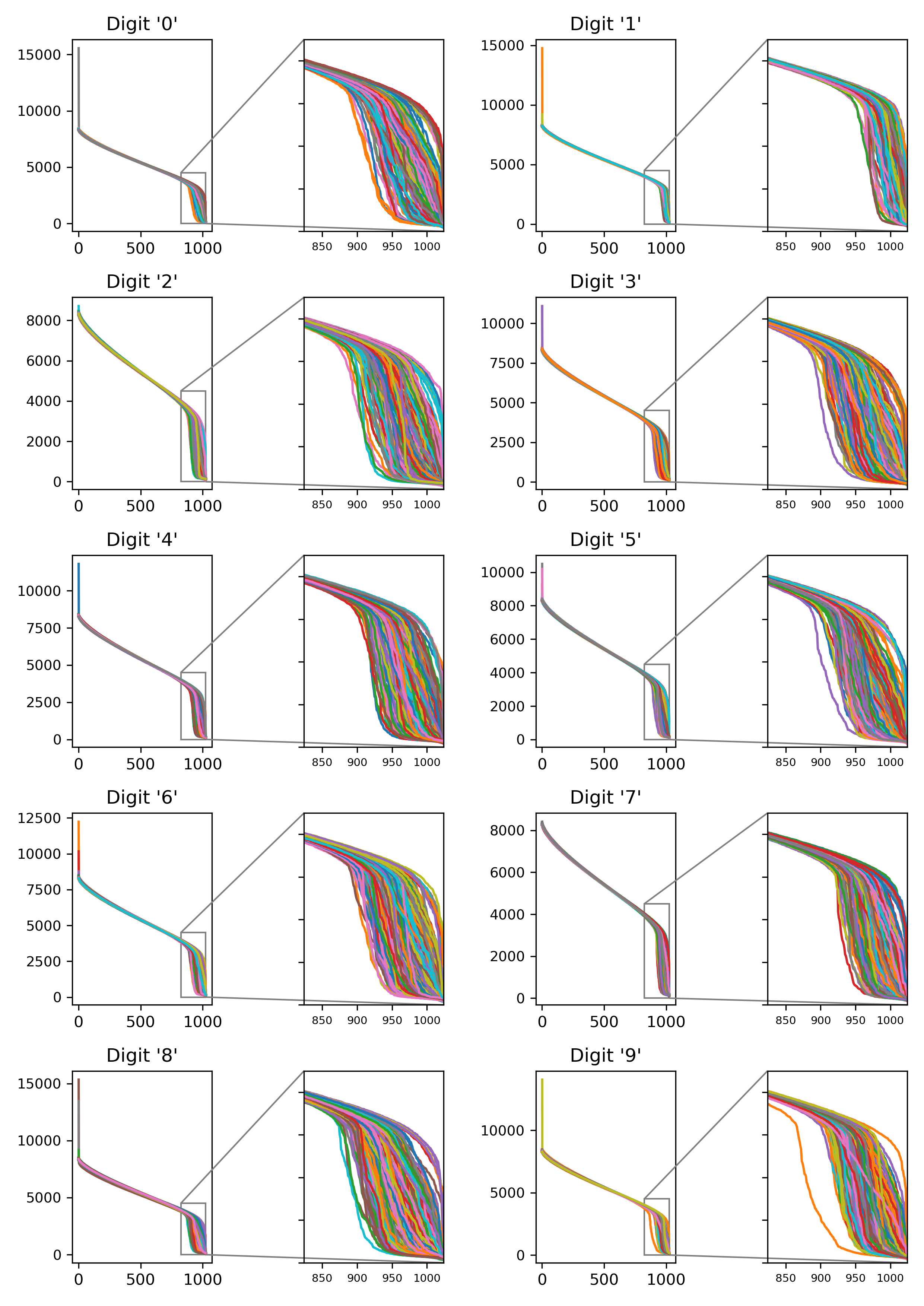}
    \caption{MNIST score spectra for all digits}
    \label{fig:score_spectra_mnist_extended}
\end{figure}

In Figure \ref{fig:score_spectra_mnist_extended} we present $500$ score spectra evaluated at $500$ different instances for each digit. We observe that a considerable number of spectra indicate lower manifold dimension than the dimension documented in Table \ref{tab:estimated_mnist_dimensions}. This deviation can be attributed to amplified
geometrical and statistical error at the respective evaluation points. 

We choose the maximum estimated dimension as our conjecture of the intrinsic dimension under the premise that a collapse of the spectrum at a smaller dimension than the dimension of the normal space is extremely unlikely from a probabilistic standpoint.  However, it is feasible to encounter a spectrum collapse suggesting a higher normal space dimension, hence a lower intrinsic dimension, due to the intensified geometric and approximation error at the point of evaluation. It is noteworthy that our estimation strategy locates the drop at the position of the maximal gradient, a practice that may marginally inflate the intrinsic dimension estimate, as exemplified in the Squares and Gaussian blobs manifolds. Therefore, if our reported dimension for each MNIST digit is not precise, it is either a minor overestimation or a lower limit of the true dimension.

\newpage
\section{Robustness analysis}
\label{appendix:robust}
\subsection{Robustness to score approximation error}
As we discussed in the previous sections, our method is guaranteed to work given a perfect score approximation for sufficiently small $t$. However, in practice there will be an approximation error resulting from finite training data, limited model capacity and imperfect optimization. Therefore, we conduct an empirical analysis of the influence of the error in score approximation on the produced estimate of the dimension. We train a model $s_\theta$ on a uniform distribution on $25$-sphere and then we corrupt the output of the model with a Gaussian perturbation $e \sim \mathcal{N}(0, \sigma^2_e\textbf{I})$. Then, we produce score spectra by applying our method to the resulting corrupted scores. We repeat this experiment for different intensities of noise $\sigma_e$. We pick $\sigma_e$ so that the noise norm to score norm ratio $r = \mathbb{E}[\norm{e}] / \mathbb{E}_{x_{t_0} \sim p_{t_0}(\textbf{x}_{t_0} | \textbf{x}_0)}[\norm{s(\textbf{x}_{t_0} , t_0)}]$ has a prescribed value. We find that as we increase the intensity of noise singular values corresponding to the tangential component start to increase causing the gap in the score spectrum  to diminish. This is expected since the noise added to the score vectors has a tangential component.  However, for values of $r < 0.5$ our method is still producing a visible drop in the spectrum at the correct point. The results are presented in Figure \ref{fig:robustness_score}.

\begin{figure}[h!]
    \centering
    \includegraphics[width=\textwidth]{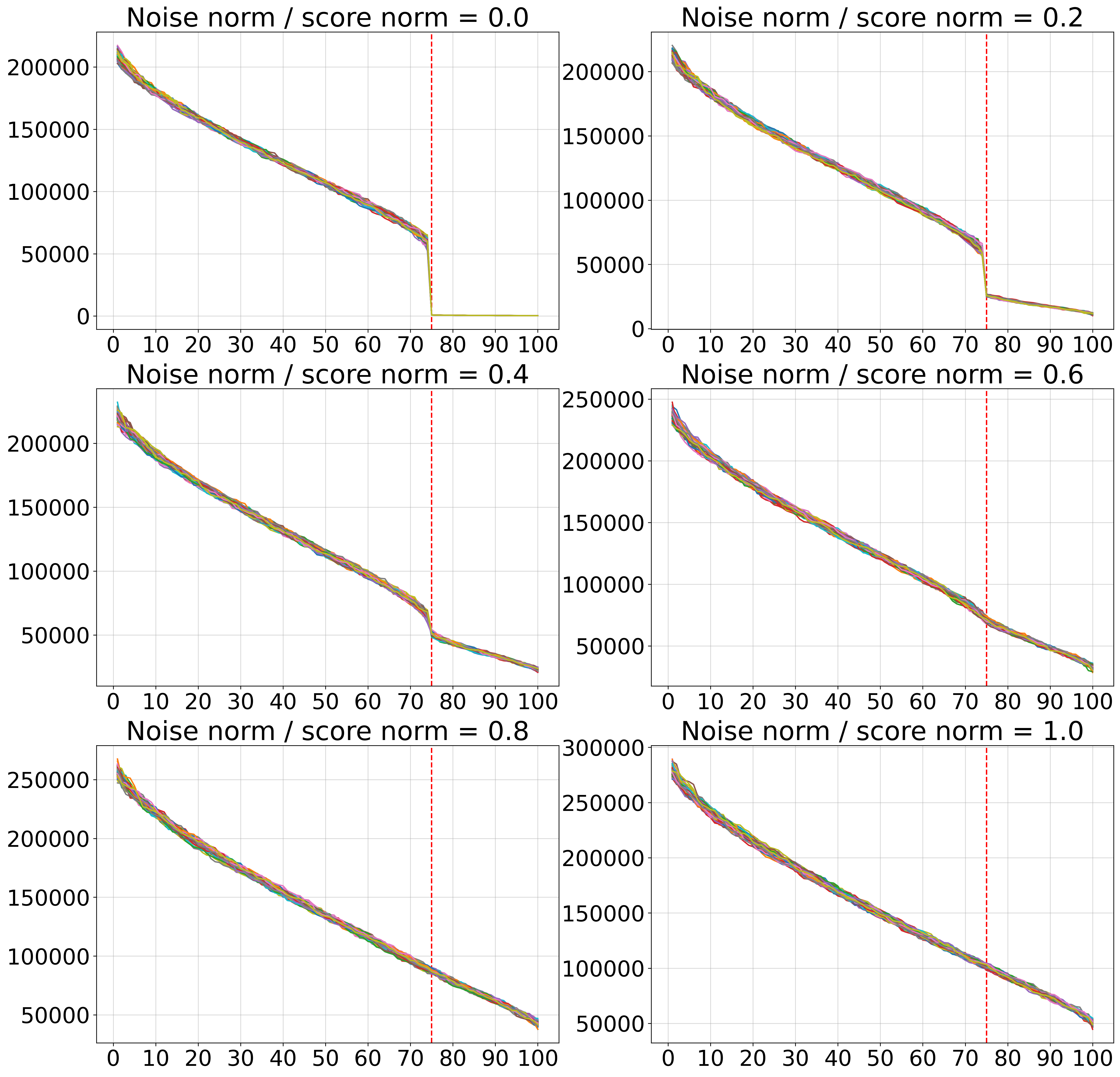}
    \caption{Score spectra for noise corrupted score model on 25-sphere.}
    \label{fig:robustness_score}
\end{figure}

\newpage
\subsection{Robustness to non-uniform distribution on the manifold}

\begin{wraptable}{L}{0.6\textwidth}
    \begin{center}
    \small
    \begin{tabular}{ccccc}
    \toprule
    & Uniform & $\alpha=1$ & $\alpha=0.75$ & $\alpha=0.5$  \\
    \midrule
    \multirow{1}{*}{Ground Truth} 
    &10 & 10 & 10  & 10    \\
    \midrule
    \multirow{1}{*}{Ours}
    &11 & 10 & 10  & 7 \\
    \midrule
    
    \multirow{1}{*}{MLE (m=5)}
    &9.61 & 5.37 & 4.83  & 4.12   \\
    \midrule

    \multirow{1}{*}{MLE (m=20)}
    &9.46 & 4.99 & 4.49  & 3.82 \\
    \midrule

    \multirow{1}{*}{Local PCA}
    & 11 & 5 & 4  & 3  \\
    \midrule

    \multirow{1}{*}{PPCA}
    & 11 & 11 & 11  & 11  \\
    
    \bottomrule
    \end{tabular}
    \end{center}
    \caption{Dimensionality detection for non-uniform distribution. For our method the maximum over pointwise estimates $\hat{k}(\textbf{x}_0)$ is considered.}
    \label{tbl:non_uniform}
\end{wraptable}

We examine the robustness to our method to non-unifomity in data distribution on the manifold surface. Under perfect score approximation and sufficiently small $t_0$ our method is guaranteed to work, but we conduct an empirical study to investigate the behaviour in the presence of score approximation error and $t_0 > 0$ used in practice in diffusion models. We consider a $k$-sphere and sample the surface of the sphere in a non-uniform fashion. We obtain points on the $k$-sphere by sampling vectors $\pmb \theta$ of $k-1$ angles (in radians) from a Gaussian distribution $\mathcal{N}(0, \alpha \mathbf{I})$, where $\alpha \in \mathbb{R}$ is a constant that determines the degree of non-uniformity. Then, we embed the resulting points via a random isometry in a 100 dimensional ambient space. We sample $n=1000$ points $\textbf{x}_0^{(j)}$ from the manifold and at each point we estimate the dimensionality $\hat{k}(\textbf{x}_0^{(j)})$ via the score spectrum. The pointwise estimates are presented in Figure \ref{fig:non_uniform} and final estimates are shown in Table \ref{tbl:non_uniform}. For values of $\alpha \in \{1, 0.75\}$, we can still obtain an accurate estimate of the dimension if we take $\hat{k} = \max_{j=1,...,1000} \hat{k}(\textbf{x}_0^{(j)})$ the maximum over point-wise estimates. For an extremely concentrated distribution with $\alpha = 0.5$ the method underestimates the dimension, which indicates that the tangential component of the score was not approximately constant in the neighborhoods used to sample the scores. This problem could be further mitigated by approximating the score closer to the manifold and using smaller sampling neighborhoods (i.e. for smaller $t_0$) or sampling more points $\textbf{x}_0^{(j)}$. Notice that taking the maximum over $\hat{k}(\textbf{x}_0^{(j)})$ is theoretically justified (as long as we assume we are dealing with a connected manifold) since our method can underestimate due to geometric or approximation error (cf. sections \ref{sec:theory} and \ref{sec:limitations}, but it is unlikely to significantly overestimate.

\begin{figure*}
     \begin{minipage}[t]{0.5\textwidth}
         \includegraphics[width=\linewidth]{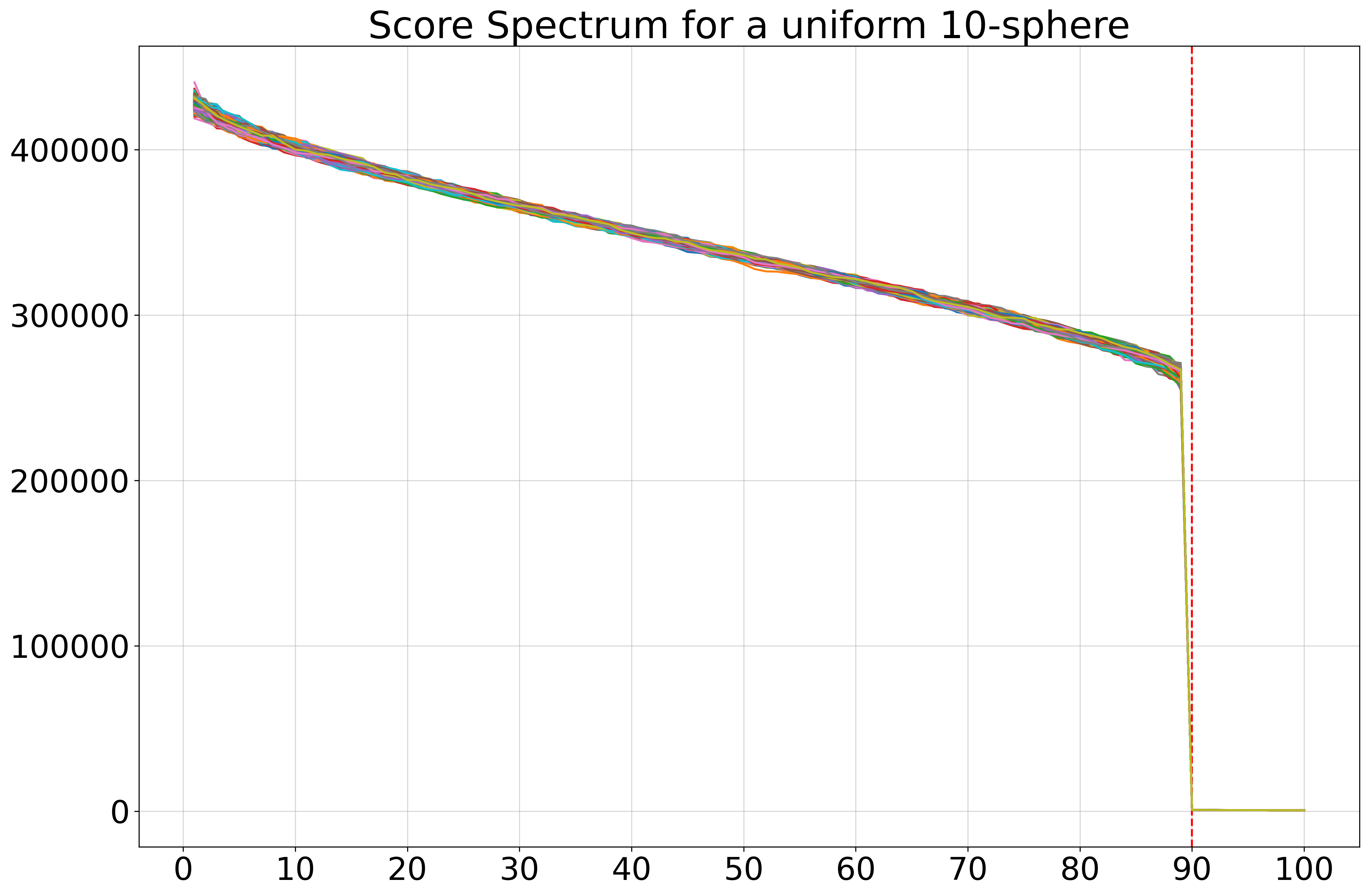}
         \includegraphics[width=\linewidth]{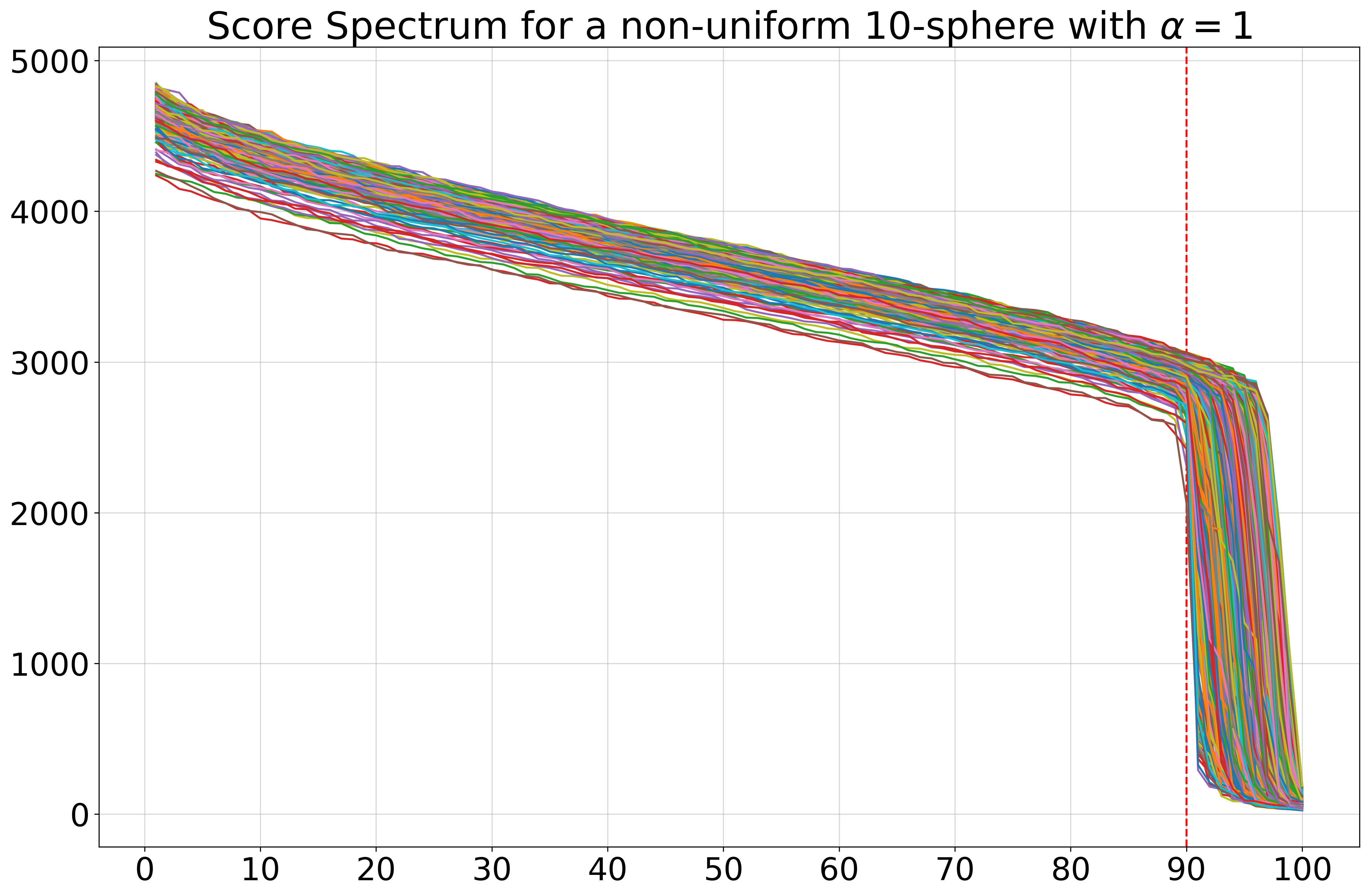}
         \includegraphics[width=\linewidth]{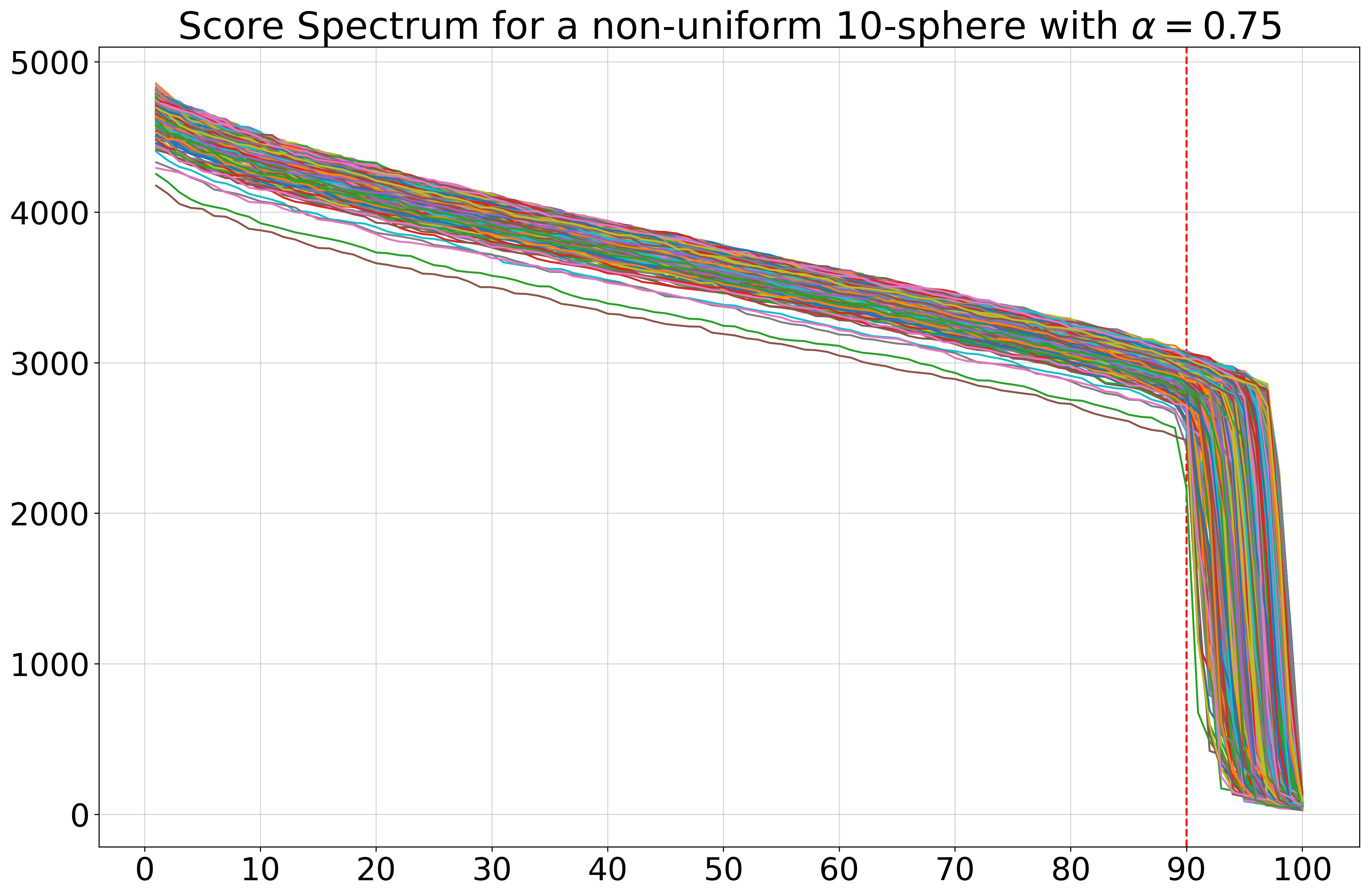}
         \includegraphics[width=\linewidth]{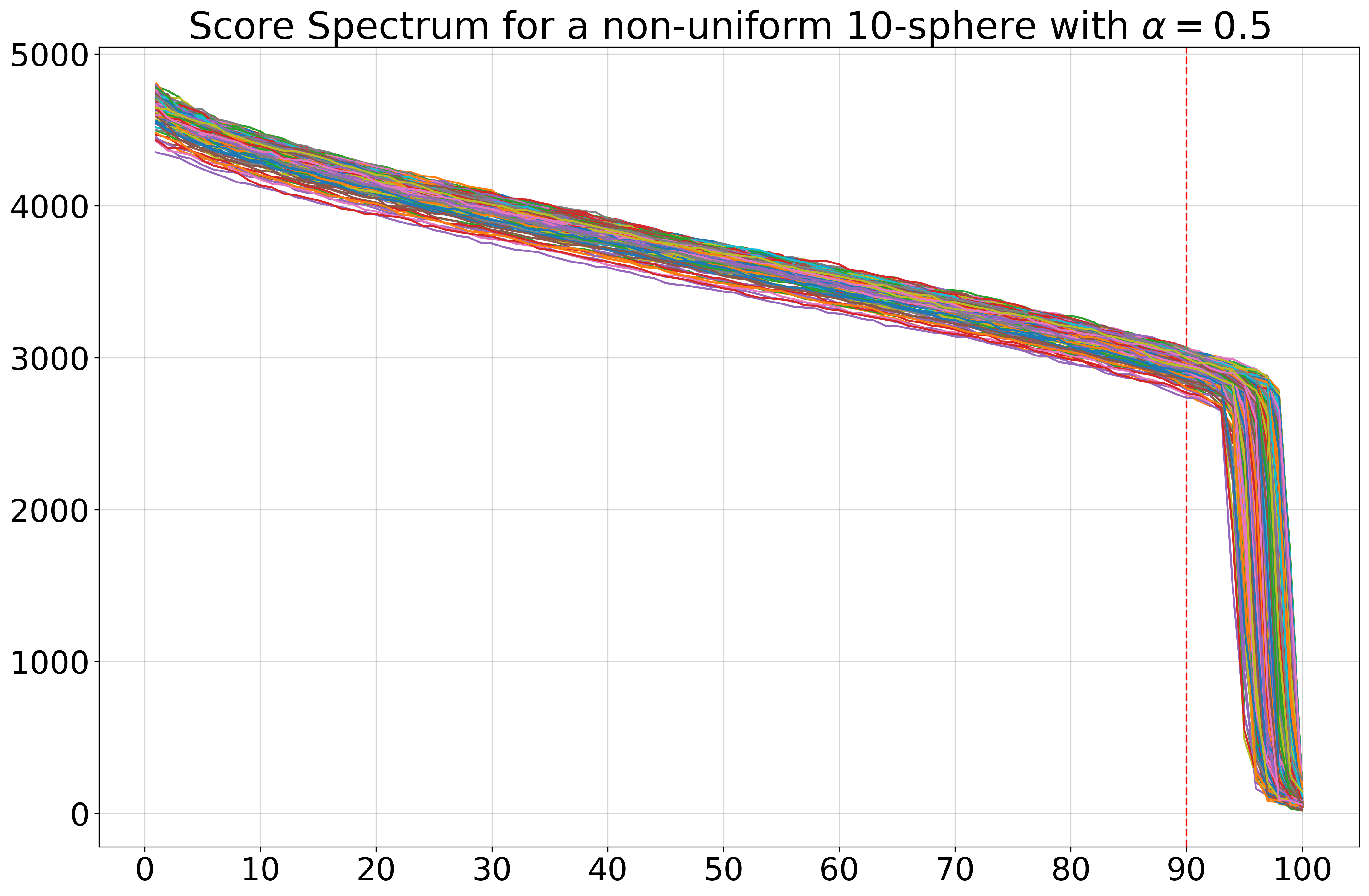}
     \end{minipage}
     \begin{minipage}[t]{0.5\textwidth}
         \includegraphics[width=\linewidth]{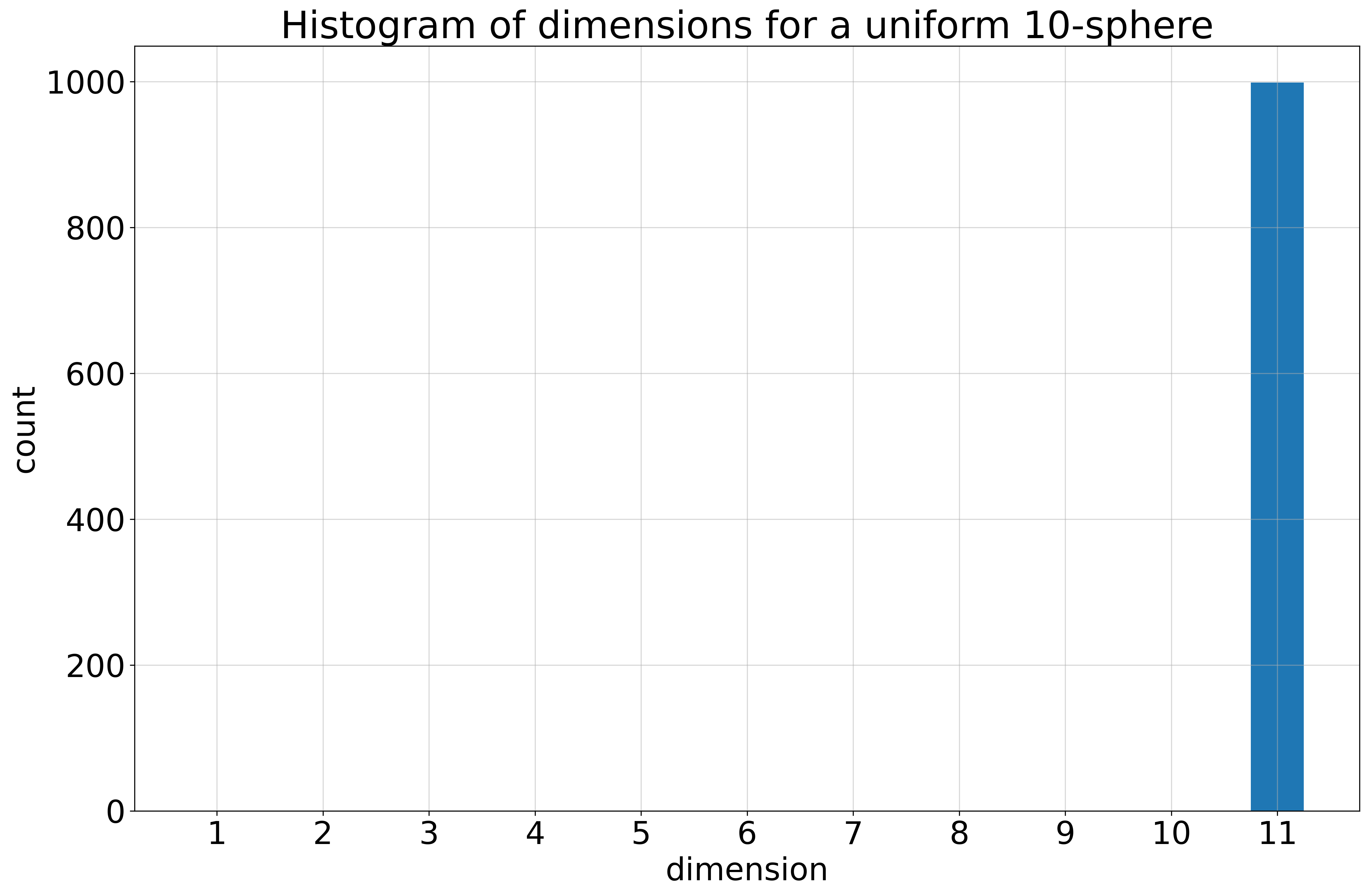}
         \includegraphics[width=\linewidth]{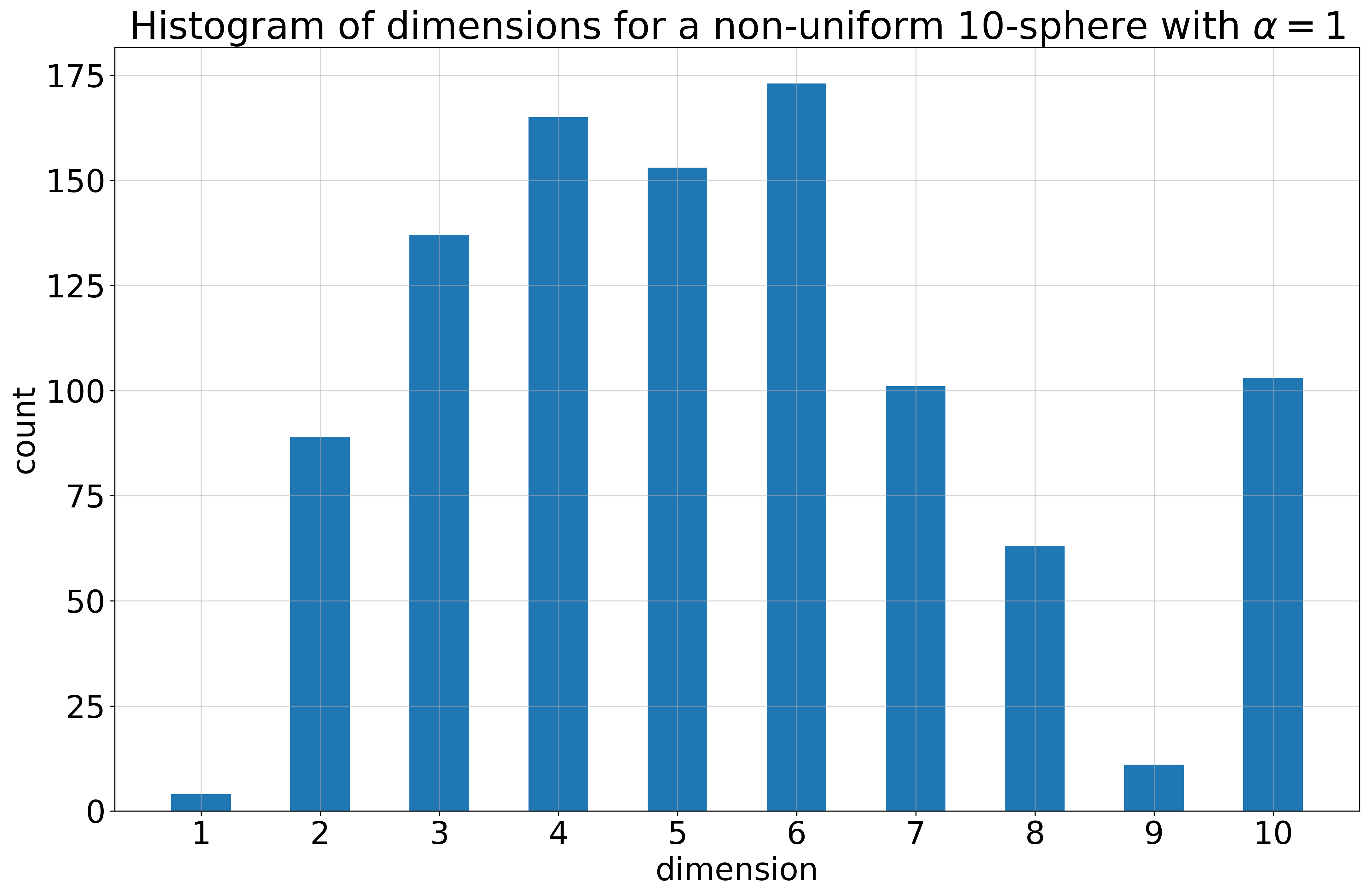}
         \includegraphics[width=\linewidth]{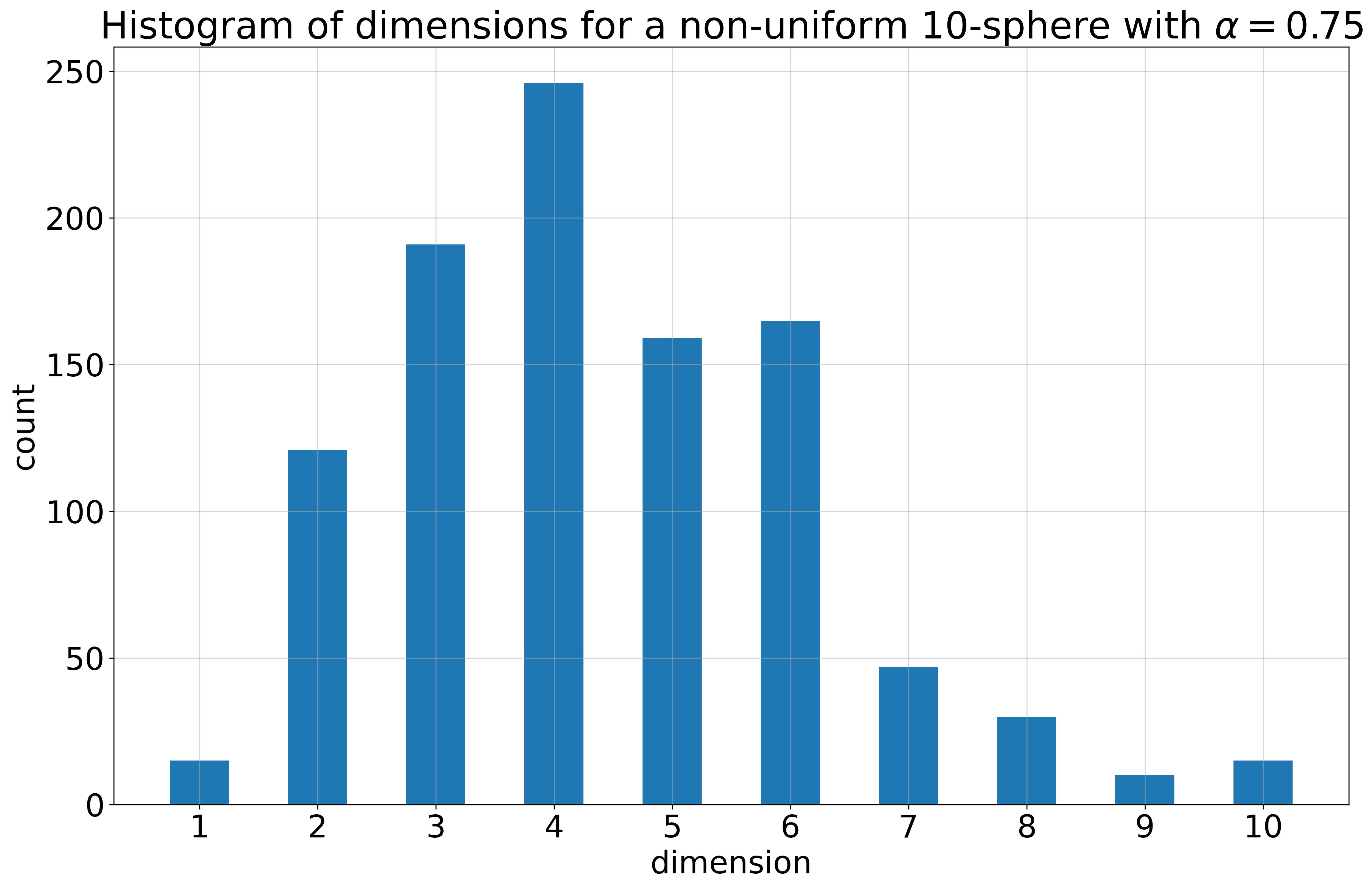}
         \includegraphics[width=\linewidth]{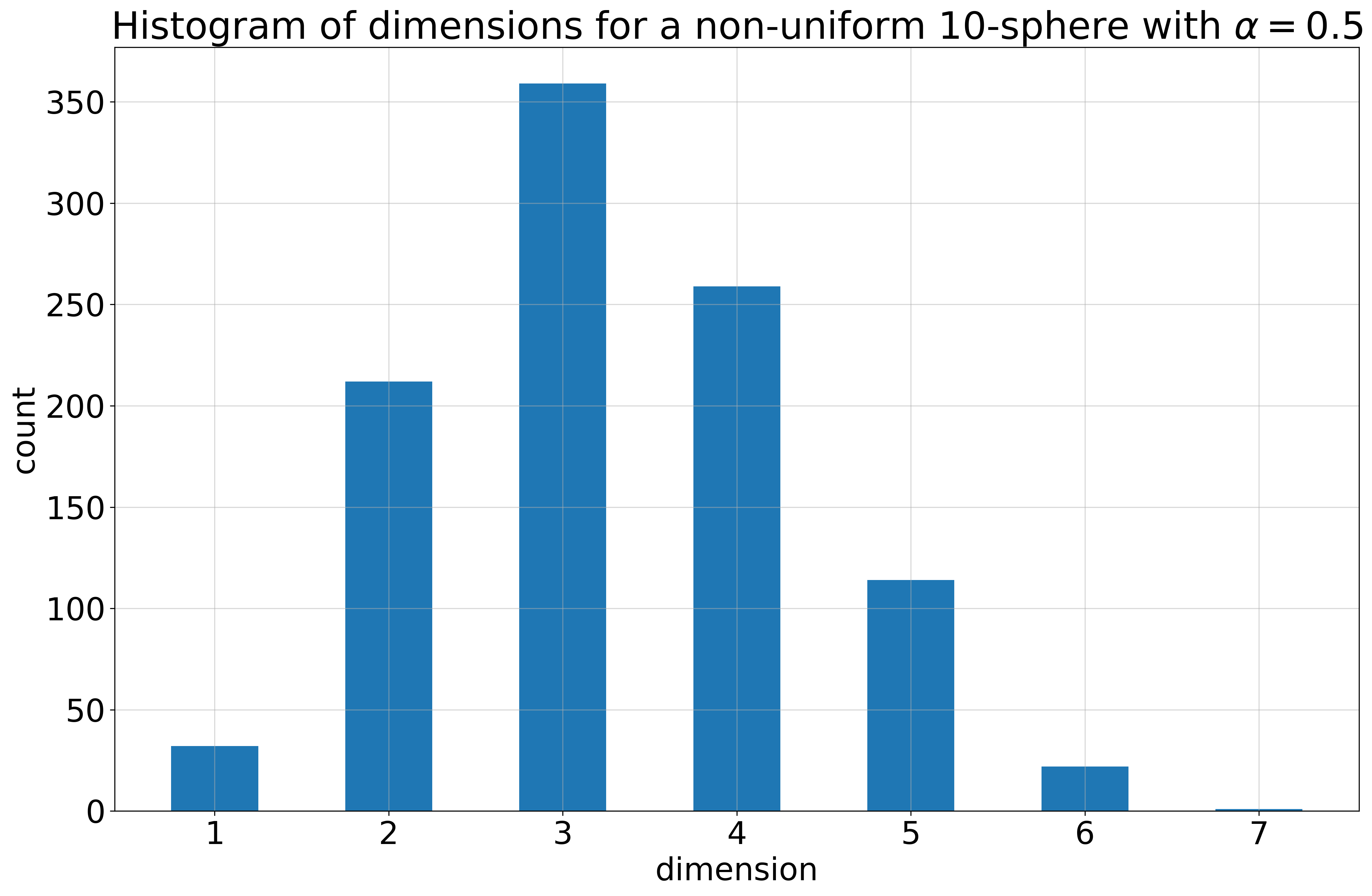}
     \end{minipage}
     \caption{Dimensionality estimates for uniform and non-uniform distributions on a 10-sphere. On the right we present histograms showing how many points $\textbf{x}_0^{(j)}$ result in a given $\hat{k}(\textbf{x}_0^{(j)})$. Taking $\hat{k} = \max_j \hat{k}(\textbf{x}_0^{(j)})$ allows for robust estimation for moderate values of the concentration parameter $\alpha$.}
     \label{fig:non_uniform}
\end{figure*} 

\subsection{Relaxing the strict manifold assumption}
The proof of the Theorem \ref{thm:score_orthogonal} assumes that $p_0$ is strictly contained on the data manifold $\mathcal{M}$, however in practice it is possible that the data distribution is concentrated around $\mathcal{M}$ rather then being contained within. Therefore, we conduct an empirical analysis, which examines how our method works in the case of the data contained around the manifold. We start with $p_0$ a uniform distribution on a unit $25$-sphere embedded in a 100 dimensional space and convolve it with a Gaussian distribution $\mathcal{N}(0, \sigma \mathbf{I})$ to obtain a distribution $p_0^\sigma$ that concentrates around $\mathcal{M}$. As $\sigma$ increases the distribution is blurred out more and more into the ambient space. We train score model on each of the resulting distributions and use our method to estimate the dimension. We find that our method produces correct estimation for for small values of $\sigma$ i.e. when $p_0^\sigma$ is still concentrated tightly around $\mathcal{M}$. This is expected since of high values of $\sigma$ the distribution isn't really concentrated around any manifold and therefore the notion of intrinsic dimension doesn't make any sense. The results are presented in Figure \ref{fig:robustness_samples}.

\begin{figure}
    \centering
    \includegraphics[width=\textwidth]{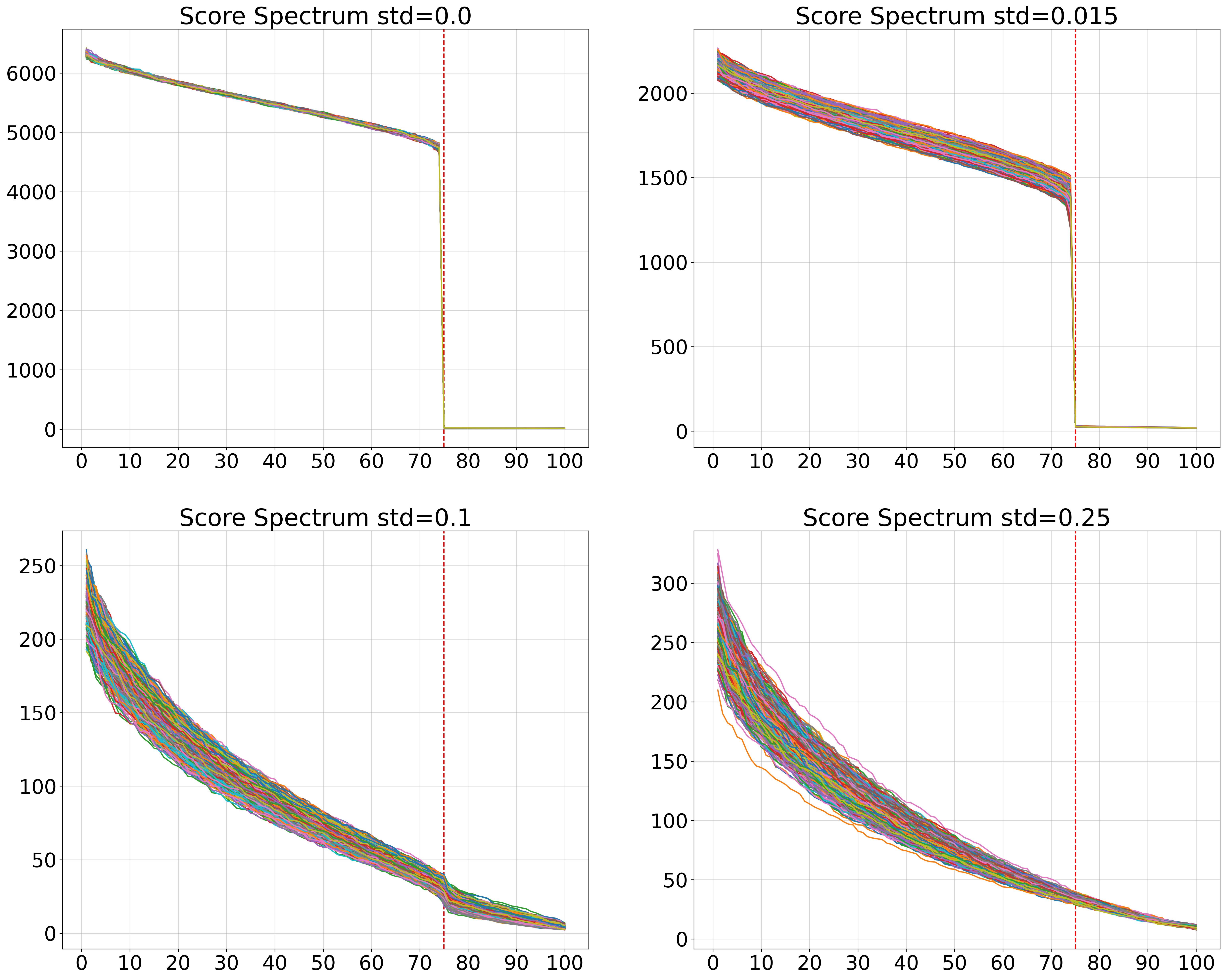}
    \caption{Score spectra for score models on 25-sphere trained on noisy manifold data.}
    \label{fig:robustness_samples}
\end{figure}


\end{document}